\newtheorem{lemma}{Lemma}[section]
\newtheorem{proposition}{Proposition}[section]
\newtheorem{thm}{Theorem}[section]
\def\text#1{\mbox{\rm #1}}
\DeclarePairedDelimiter{\ceil}{\lceil}{\rceil}
\newcommand{\Norm}[1]{\left\|{#1} \right\|}
\let\oldsqrt\sqrt
\def\sqrt{\mathpalette\DHLhksqrt}
\def\DHLhksqrt#1#2{%
\setbox0=\hbox{$#1\oldsqrt{#2\,}$}\dimen0=\ht0
\advance\dimen0-0.2\ht0
\setbox2=\hbox{\vrule height\ht0 depth -\dimen0}%
{\box0\lower0.4pt\box2}}
\begin{document}

\setkeys{Gin}{width=1.0\textwidth}



\title{Minimax Optimal Convergence Rates for Estimating Ground Truth from Crowdsourced Labels}


\author{
	Chao Gao%
	\thanks{
	Department of Statistics, Yale University, New Haven, CT 06520. The work was done when this author was an intern at Microsoft Research. Email: {\tt chao.gao@yale.edu}. }
	\and
	Dengyong Zhou
	\thanks{Microsoft Research, Redmond, WA 98052. Email: {\tt dengyong.zhou@microsoft.com}.}
}

\maketitle

\begin{center}
\textbf{Abstract}
\end{center}
Crowdsourcing has become a primary means for label collection in  many real-world machine learning applications. A classical method for inferring the true labels from the noisy labels provided by crowdsourcing workers is Dawid-Skene estimator. In this paper, we prove convergence rates of a projected EM algorithm for the Dawid-Skene estimator. The revealed exponent in the rate of convergence is shown to be optimal via a lower bound argument. Our work resolves the long standing issue of whether Dawid-Skene estimator has sound theoretical guarantees besides its good performance observed in practice. In addition, a comparative study with majority voting illustrates both advantages and pitfalls of the Dawid-Skene estimator.

\vspace*{.3in}

\noindent\textsc{Keywords}: {Crowdsourcing, Dawid-Skene Estimator, Minimax Optimality}

%


\section{Introduction}

In many real-world machine learning applications, from protein structure prediction to web-scale image categorization, crowdsourcing has become a primary way to obtain large amounts of labeled data \citep{vondab04,howe2006rise,doan2011crowdsourcing,Boh14}. There are many commercial web services for crowdsourcing. Among them,  Amazon Mechanical Turk\footnote{https://www.mturk.com/mturk/welcome} is perhaps the most popular one.  Crowdsourcing requesters load their labeling tasks into a crowdsourcing site together with their labeling guidelines and payment instructions. In the meantime, millions of crowdsourcing workers worldwide associated with the crowdsourcing platform pick the tasks that they are interested to work on. Usually, a requester may be able to obtain hundreds of thousands of labels in few hours with only one cent per label.

Despite the substantial advantages of crowdsourcing in terms of cost and time savings, the collected labels may be very noisy since crowdsourcing workers are often unskillful and even some of them can be spammers. To overcome the label quality issue, a requester lets each item be redundantly labeled by several different workers. Such a quality control solution immediately gives rise to a fundamental challenge in crowdsourcing: estimating the true labels from noisy but redundant worker labels.   Assume that there are  $m$ items with unknown true labels in $\{0, 1\}.$ Denote by $y^*_j$ the true label of the $j$-th item. These items are independently labeled by $n$ workers. Denote all the worker labels  by an $n\times m$ matrix $X$ in which the $(i,j)$-th element represents the label that the $i$-th worker assigns to the $j$-th item. Then, our task is to accurately estimate all the unknown $y^*_j$  using the observed $X$.

A classical method for estimating the truth labels from the noisy crowdsourced labels is devised  by Dawid and Skene \cite{DawSke79}. In its simplified form, each worker  is assumed to have a number $p_i^*\in [0, 1]$ to  characterize her intrinsic labeling ability: for any given item, with probability $p_i^*, $ the label from the $i$-th worker is correct,  and with probability of $1 - p_i^*$,  the label from the $i$-th worker is wrong.  Workers' abilities can be estimated by maximizing the marginal likelihood, and an estimate of the true labels follows by  plugging the estimate of workers' abilities  into  Bayes's rule. Moreover, by regarding the true labels as latent variables, this two-stage estimation can be iteratively solved through the Expectation-Maximization (EM) algorithm \citep{DemLaiRub77}. Both E-step and M-step have simple closed-form solutions.

While Dawid-Skene estimator and its numerous variants have been widely used in practice \citep{PaFaBu95, SnoCon08, RayYuZha10, LiuPenIhl12,zhoplaby12,  CheLinZho13}, it is surprising that there is no  theoretical analysis on its statistical property. In this paper, we address this issue mainly from two aspects: (1)  analyzing the convergence rates of the EM algorithm; (2) establishing the minimax lower bounds of the problem.

Our theoretical analysis shows that the error rate of the Dawid-Skene estimator is exponentially small and we identify a new exponent which consists of two critical quantities characterizing the collective wisdom of the crowd. We show that the exponent cannot be improved in a minimax sense by providing a lower bound argument for each of the two quantities.



As a byproduct of the theoretical study of label estimator, we show that  Dawid-Skene estimator  also provides an accurate estimator of the workers' abilities. Non-asymptotic bounds are derived for estimation error in both average and maximum losses. We also derive the exact asymptotic distribution of the estimator for any finite subset of workers. Finally, a high-dimensional central limit theorem is derived for the joint distribution of all workers.

The theoretical analysis of Dawid-Skene estimator is followed by a comparative study with the simple majority voting estimator. We construct a concrete example showing that when the majority of the crowd are spammers, the majority voting is inconsistent while Dawid-Skene estimator still converges exponentially fast. On the other hand, the majority voting is robust to model misspecification, unlike Dawid-Skene estimator, which is sensitive to the model and may suffer a certain loss when the model is misspecified.

In the literature of crowdsourcing, there has been little work on understanding the problem of crowdsourcing from a theoretical point of view. Karger et al. \citep{karger14} provide the first theoretical analysis. Under a slightly different probabilistic setting than our paper, they establish the minimax lower bound of the Dawid-Skene model in the regime where the collective wisdom of the crowd is below a threshold. Moreover, they propose a belief propagation algorithm and derive its convergence rate.

We organize the paper as follows. We introduce Dawid-Skene estimator in Section \ref{sec:background}. Our main results are presented in Section \ref{sec:main}, followed by a comparative study of majority voting in Section \ref{sec:compare}. All the proofs are given in the appendix.

\section{Background and Problem Setup} \label{sec:background}

In this section, we provide a precise problem formulation of the paper. We introduce the classical Dawid-Skene model and motivate the estimator. A simple projected EM algorithm is discussed.

\subsection{Probabilistic Model and MLE}

Consider $m$ items. Each of them is associated with a label $y_j^*\in\{0,1\}$. The labels $\{y_j^*\}_{j\in[m]}$, which is called the ground truth, are unknown. In order to infer the ground truth, $n$ workers are hired to assign  $0$ or $1$ to each item. Denote the answer from the $i$-th worker for the $j$-th item as $X_{ij}$. Since the workers may have a variety  of backgrounds, their answers may or may not be accurate. A fundamental question is how to aggregate the workers' answers $\{X_{ij}\}_{i\in[n],j\in[m]}$ and provide a reliable estimator $\{\hat{y}_j\}_{j\in[m]}$ for the ground truth.

To model the workers' abilities, \cite{DawSke79} proposed the so-called confusion matrix. The confusion matrix for the $i$-th worker is denoted as
$$\begin{pmatrix}
\pi_{00}^{(i)} & \pi_{01}^{(i)} \\
\pi_{10}^{(i)} & \pi_{11}^{(i)}
\end{pmatrix}.$$
The number $\pi_{kl}^{(i)}$ stands for the probability for the $i$-th worker to give answer $l$ given the ground truth is $k$. In this paper, we consider a special class of the confusion matrix
$$\begin{pmatrix}
p_i^* & 1-p_i^* \\
1-p_i^* & p_i^*
\end{pmatrix}.$$
Namely, the ability of the $i$-th worker is characterized by the probability of success $p_i^*\in [0,1]$. This is called the one-coin model in the literature of crowdsourcing, because every worker is modeled by a biased coin.

The difficulty of estimating the workers' abilities $p^*=(p^*_1,...,p^*_n)$ is mainly caused by the fact that the ground truth $y^*=(y_1^*,...,y_m^*)$ is unknown. Otherwise, $p_i^*$ can be easily estimated by the frequency of success of the $i$-th worker. \cite{DawSke79} proposed to estimate $p^*$ by maximizing the marginal likelihood function. Given the ground truth, the conditional likelihood is
\begin{equation}
P(X|y,p)=\prod_{j\in[m]}\prod_{i\in[n]}P(X_{ij}|y_j,p_i)=\prod_{j\in[m]}\prod_{i\in[n]} p_i^{\mathbb{I}\{X_{ij}=y_j\}}(1-p_i)^{\mathbb{I}\{X_{ij}=1-y_j\}}. \label{eq:conditionallik}
\end{equation}
Integrating out the ground truth with a uniform prior, the marginal likelihood is
$$P(X|p)=\prod_{j\in[m]}\left(\frac{1}{2}\prod_{i\in[n]}p_i^{X_{ij}}(1-p_i)^{1-X_{ij}}+\frac{1}{2}\prod_{i\in[n]}(1-p_i)^{X_{ij}}p_i^{1-X_{ij}}\right).$$
Then, the maximum likelihood estimator (MLE) is defined as
\begin{equation}
\hat{p}=\arg\max_p \log P(X|p). \label{eq:MLE}
\end{equation}
Note that (\ref{eq:MLE}) is a non-convex optimization problem. We will also discuss an efficient algorithm and provide its statistical error bound.

\subsection{Projected EM Algorithm} \label{sec:DW}

After the MLE $\hat{p}=(\hat{p}_1,...,\hat{p}_n)$ is obtained from (\ref{eq:MLE}), it is natural to plug it into the Bayes formula and get an estimator for the ground truth $y^*$. That is,
\begin{equation}
\hat{y}_j\propto \prod_{i\in[n]}\hat{p}_i^{X_{ij}}(1-\hat{p}_i)^{1-X_{ij}},\quad 1-\hat{y}_j\propto\prod_{i\in[n]}(1-\hat{p}_i)^{X_{ij}}\hat{p}_i^{1-X_{ij}}. \label{eq:Bayes}
\end{equation}
Note that we implicitly use the uniform prior in the Bayes formula and the resulting estimator $\hat{y}$ is a soft label, taking value in $[0,1]^m$. Combining (\ref{eq:MLE}) and (\ref{eq:Bayes}), the pair of estimator $(\hat{p},\hat{y})$ is the global optimizer of the following objective function.
\begin{eqnarray*}
F(p,y) &=& \sum_i\sum_jy_j\Big(X_{ij}\log p_i + (1-X_{ij})\log(1-p_i)\Big) \\
&& + \sum_i\sum_j(1-y_j)\Big(X_{ij}\log(1-p_i)+(1-X_{ij})\log p_i\Big) \\
&& + \sum_j\Big(y_j\log \frac{1}{y_j}+(1-y_j)\log\frac{1}{1-y_j}\Big).
\end{eqnarray*}
It is proved in \cite{NeaHin98} that optimizing over $\log P(X|p)$ is equivalent as optimizing over $F(p,y)$. Namely, define
\begin{equation}
(\hat{p},\hat{y})=\arg\max_{(p,y)}F(p,y). \label{eq:energy}
\end{equation}
Then the definition of $(\hat{p},\hat{y})$ in (\ref{eq:energy}) is equivalent to that in (\ref{eq:MLE}) and (\ref{eq:Bayes}). The form of $F(p,y)$ is more tractable than the likelihood function $\log P(X|p)$. Moreover, it unifies the estimation of $p^*$ and $y^*$ into a single optimization problem.

Note that the objective function is bi-convex in the sense that $F(p,y)$ is both convex with respect to $p$ and with respect to $y$. A natural algorithm for maximizing $F(p,y)$ is to iteratively update $p$ and $y$. Given an initializer $y^{(0)}$, the $t$-th step of the iterative algorithm is
\begin{equation}
p^{(t)}=\arg\max_{p}F(p,y^{(t-1)}),\quad y^{(t)}=\arg\max_y F(p^{(t)},y). \label{eq:iterative}
\end{equation}
Direct calculation gives explicit formulas for (\ref{eq:iterative}),
\begin{eqnarray}
\label{eq:M} p_i^{(t)} &=& \frac{1}{m}\sum_{j\in[m]}\Big((1-X_{ij})\big(1-y_j^{(t-1)}\big)+X_{ij}y_j^{(t-1)}\Big), \\
\nonumber y_j^{(t)} &\propto& \prod_{i\in[n]}\Big(p_i^{(t)}\Big)^{X_{ij}}\Big(1-p_i^{(t)}\Big)^{1-X_{ij}}, \\
\nonumber 1-y_j^{(t)} &\propto&\prod_{i\in[n]}\Big(p_i^{(t)}\Big)^{1-X_{ij}}\Big(1-p_i^{(t)}\Big)^{X_{ij}}.
\end{eqnarray}
This is recognized as the EM algorithm \citep{DemLaiRub77}. In order to achieve good statistical property, we propose to replace the M-step (\ref{eq:M}) by the following projection rule,
\begin{equation}
p_i^{(t)} = \Pi_{\mathcal{C}}\left\{\frac{1}{m}\sum_{j\in[m]}\Big((1-X_{ij})\big(1-y_j^{(t-1)}\big)+X_{ij}y_j^{(t-1)}\Big)\right\}, \label{eq:modifiedM}
\end{equation}
where $\Pi_{\mathcal{C}}$ is the projection operator on a convex set $\mathcal{C}$. We choose $\mathcal{C}$ to be the interval $[\lambda,1-\lambda]$ for some small $\lambda>0$. We refer to the modified EM algorithm as the projected EM.

The motivation for the projection step in (\ref{eq:modifiedM}) is to keep the estimator $p_i^{(t)}$ away from $0$ and $1$. Since once this happens, $p_i^{(t)}$ will be trapped in its current value, which can be a poor local optimizer. Other strategies besides projection include putting a Beta prior on the workers' abilities. We refer to \cite{LiuPenIhl12} for detailed discussion.

Note that the objective function $F(p,y)$ is not jointly convex in $(p,y)$. Thus, the optimization procedure will converge to a local optimum. However, as long as the algorithm is properly initialized,
the iterations after the first step  are in the neighborhood of the ground truth with guaranteed accuracy in probability. To achieve this purpose, a novel initialization step is proposed in this paper for the projected EM to achieve the desired convergence rate. 

The theoretical result for projected EM has an interesting implication on the classical EM. Namely, when the workers' abilities are bounded, they have the same solutions with high probability. We will discuss this implication in Section \ref{sec:EM}.

\section{Main Results} \label{sec:main}

In this section, we present the main results of this paper. Statistical error bounds are established for the projected EM algorithm. We show these estimators converge to the ground truth at exponential rates, and provide  lower bounds to match the exponent up to an absolute constant. To clarify the theoretical setting, all results are stated in probability and expectation conditioning on the truth $(p^*,y^*)$, though  (\ref{eq:MLE}) is defined by the marginal likelihood.

\subsection{Critical Quantities}

In the model specification, each worker's ability is parameterized by $p^*_i$, the probability that he or she gives the correct answer. We are going to define two critical quantities to summarize the wisdom of the crowd of workers. These quantities will appear in the convergence rates.

To motivate the two quantities, observe that there are three kinds of workers: those with $p^*_i>1/2$, $p^*_i=1/2$ and $p^*_i<1/2$. The workers with $p^*_i=1/2$ are spammers and their information is useless. For the workers with $p^*_i<1/2$, though they act in a adversarial style, their information can be used as long as we can detect their behavior and invert the answers. Define the effective ability of the $i$-th worker as
\begin{equation}
\mu_i=p_i^*\mathbb{I}\{p_i^*\geq 1/2\}+(1-p_i^*)\mathbb{I}\{p_i^*<1/2\}. \label{eq:effecmu}
\end{equation}
Note that $\mu_i$ is a number in $[1/2,1]$. Also define $\nu_i=(2\mu_i-1)^2$.

For the crowd of workers, we define the following two quantities to summarize their collective wisdom.
\begin{equation}
\bar{\nu}=\frac{1}{n}\sum_{i\in[n]}\nu_i,\quad \bar{\mu}=\frac{1}{n}\sum_{i\in[n]}\mu_i.\label{eq:wisdom}
\end{equation}
Since both quantities depend on $p_i^*$ through $\mu_i$, they are invariant to the transformation $p_i^*\rightarrow 1-p_i^*$.
Both $\bar{\nu}$ and $\bar{\mu}$ measure the collective wisdom of the crowd. However, they characterize  different perspectives. The quantity $\bar{\nu}$ roughly measures the proportion of experts among the crowd. An example is the following class
\begin{equation}
\mathcal{P}'=\left\{(p_1,...,p_n)\in\{1/2,1\}^n: \sum_{i\in [n]}\mathbb{I}\{p_i=1\}=\ceil{n\bar{\nu}}\right\}, \label{eq:lfq}
\end{equation}
where there are $\ceil{n\bar{\nu}}$ experts with ability $1$ among the $n$ workers and the remaining workers are random guessers. On the other hand, $\bar{\mu}$ measures the absolute ability of the crowd, represented by
\begin{equation}
p'=(\bar{\mu},...,\bar{\mu}).\label{eq:lfmu}
\end{equation}
As we will show in the derivation of the lower bound, (\ref{eq:lfq}) is the least favorable case for deriving the lower bound for the quantity $\bar{\nu}$ when the collective wisdom is relatively low, and (\ref{eq:lfmu}) is the least favorable case  for the quantity $\bar{\mu}$ when the collective wisdom is relatively high. In Section \ref{sec:lower}, we will provide a more detailed discussion for the meanings of $\bar{\nu}$ and $\bar{\mu}$.

\subsection{Error Bounds for Projected EM} \label{sec:subpEM}

In this section, we analyze the proposed projected EM algorithm. Since the objective function $F(p,y)$ is not convex, the algorithm is expected to converge to a local optimizer. We propose a novel initialization step to prevent the algorithm from being stuck in the neighborhood of a bad local optimizer. It ensures that the subsequent 	EM iterations will provide satisfactory statistical accuracy.

\paragraph{Initializer}

Define the numbers $m_0=|\{j\in[m]:y_j^*=0\}|$ and $m_1=|\{j\in[m]:y_j^*=1\}|$. The proportion $m_1/m$ is denoted as $\pi$. The expectation of the numerical average of the $i$-th workers' answers
$M_i=\frac{1}{m}\sum_{j\in[m]}\mathbb{E}X_{ij}$ satisfies the equation
\begin{equation}
M_i = \pi p_i^*+(1-\pi)(1-p_i^*),\quad\text{for each $i\in[n]$}.\label{eq:ini-M_i-eq}
\end{equation}
Replacing $M_i$ by its empirical version $\hat{M}_i=\frac{1}{m}\sum_{j\in[m]}X_{ij}$, we obtain a natural estimator
\begin{equation}
\hat{p}_i = \frac{\hat{M}_i-(1-\pi)}{2\pi-1},\quad\text{for each $i\in[n]$}. \label{eq:ini-pi}
\end{equation}
With some positive $\bar{\lambda}$ to be specified later, define the projected worker's ability estimator by
\begin{equation}
p_i^{(0)}=\bar{\lambda}\mathbb{I}\{\hat{p}_i<\bar{\lambda}\}+\hat{p}_i\mathbb{I}\{\hat{p}_i\in[\bar{\lambda},1-\bar{\lambda}]\}+(1-\bar{\lambda})\mathbb{I}\{\hat{p}_i>1-\bar{\lambda}\}. \label{eq:ini-p0}
\end{equation}
The initializer $y^{(0)}$ is determined by
\begin{equation}
y_j^{(0)} \propto \prod_{i\in[n]}\Big(p_i^{(0)}\Big)^{X_{ij}}\Big(1-p_i^{(0)}\Big)^{1-X_{ij}},\quad 1-y_j^{(0)}\propto \prod_{i\in[n]}\Big(p_i^{(0)}\Big)^{1-X_{ij}}\Big(1-p_i^{(0)}\Big)^{X_{ij}}. \label{eq:ini-def}
\end{equation}
Note that the above initializer (\ref{eq:ini-def}) requires knowledge of $\pi$, which is not applicable in practice. An estimator of $\pi$ is inspired by the following proposition.
\begin{proposition}\label{prop:equationforpi}
Define $Q_j=\frac{1}{n}\sum_{i\in[n]}X_{ij}$ for each $j\in[m]$. The quantity $\pi$ is a solution to the following equation,
\begin{equation}
\pi^2-\pi+\frac{\frac{1}{2m^2}\sum_{jk}\mathbb{E}(Q_j-Q_k)^2}{\frac{4}{m}\sum_{j\in[m]}\mathbb{E}(Q_j-1/2)^2}=0. \label{eq:clear}
\end{equation}
\end{proposition}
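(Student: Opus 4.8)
The statement is a second-moment identity, so the plan is to compute the numerator and the denominator of the fraction in (\ref{eq:clear}) in closed form and check that their ratio equals $\pi-\pi^2$. I would begin by recording what the quantities are. Conditionally on $(p^*,y^*)$, the entry $X_{ij}$ is $\mathrm{Bernoulli}(p_i^*)$ when $y_j^*=1$ and $\mathrm{Bernoulli}(1-p_i^*)$ when $y_j^*=0$, and all the $X_{ij}$ are mutually independent. Writing $\bar p=\frac1n\sum_{i\in[n]}p_i^*$ and $v=\frac1n\sum_{i\in[n]}p_i^*(1-p_i^*)$, this gives $\mathbb{E}Q_j=\bar p$ on the $m_1=\pi m$ items with $y_j^*=1$, $\mathbb{E}Q_j=1-\bar p$ on the $m_0=(1-\pi)m$ items with $y_j^*=0$, variance $\mathbb{E}(Q_j-\mathbb{E}Q_j)^2=v/n$ for every $j$, and $Q_j$ independent of $Q_k$ for $j\neq k$. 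It also helps to notice that $\frac1{2m^2}\sum_{j,k}(Q_j-Q_k)^2$ is exactly the empirical variance $\frac1m\sum_j Q_j^2-(\frac1m\sum_j Q_j)^2$ of the sequence $\{Q_j\}$, and $\frac4m\sum_j(Q_j-1/2)^2$ is four times its empirical second moment about $1/2$; the factor $4$ is precisely what converts $(\bar p-1/2)^2$ into $(2\bar p-1)^2$.

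For the denominator, $(\bar p-1/2)^2=(1-\bar p-1/2)^2$, so the centering bias is the same for both groups of items and $\mathbb{E}(Q_j-1/2)^2=v/n+(\bar p-1/2)^2$ for every $j$, whence $\frac4m\sum_j\mathbb{E}(Q_j-1/2)^2=4v/n+(2\bar p-1)^2$. For the numerator I would expand $\frac1m\sum_j\mathbb{E}Q_j^2-\mathbb{E}(\frac1m\sum_j Q_j)^2$ term by term: by the cross-item independence the covariance terms drop, the within-item variances contribute a multiple of $v/n$, and the remaining piece is the variance of the two-point distribution putting mass $\pi$ at $\bar p$ and mass $1-\pi$ at $1-\bar p$, i.e. $\pi(1-\pi)(2\bar p-1)^2$. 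Dividing and simplifying should leave the quotient equal to $\pi-\pi^2$, and substituting this into $\pi^2-\pi+(\cdot)$ then gives $0$, which is (\ref{eq:clear}).

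The computation carries no analytic content beyond the first two moments of $Q_j$; what needs care is the algebraic bookkeeping in the numerator — the $j=k$ diagonal terms (which vanish but must still be removed from the count), the cross-pair count $\#\{(j,k):y_j^*\neq y_k^*\}=2m_0m_1$, and the treatment of the $v/n$ noise term — since a slip in any of these shifts a constant and breaks the identity. I expect this matching of the noise contributions on the two sides of the quotient to be the delicate step: it is exactly where the specific constants $1/2$, $4$ and $2m^2$ in (\ref{eq:clear}) get pinned down. A slightly cleaner route, which I would run as a cross-check, is to carry out the identical computation with each $Q_j$ replaced by its conditional mean $\mathbb{E}Q_j$: then the numerator is literally the variance of the two-valued sequence $\{\mathbb{E}Q_j\}$ and the denominator is four times its second moment about $1/2$, so the ratio $\pi(1-\pi)$ is immediate, after which one re-introduces the $v/n$ terms and checks they appear in the correct proportion.
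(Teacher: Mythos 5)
Your setup is correct and the computation you describe is the natural one, but the last step --- ``dividing and simplifying should leave the quotient equal to $\pi-\pi^2$'' --- is precisely where the argument does not close, and it cannot be taken on faith. Carrying your expansion to the end, with $v=\frac{1}{n}\sum_{i}p_i^*(1-p_i^*)$ and $\bar{p}=\frac{1}{n}\sum_i p_i^*$: every $Q_j$ has variance $v/n$ and mean $\bar{p}$ or $1-\bar{p}$, so the numerator of the fraction in (\ref{eq:clear}) equals $\frac{m-1}{m}\cdot\frac{v}{n}+\pi(1-\pi)(2\bar{p}-1)^2$ (the $-\frac{v}{nm}$ comes from the diagonal $j=k$ terms you correctly flagged, via $\mathrm{Var}(\frac{1}{m}\sum_jQ_j)=\frac{v}{nm}$), while the denominator equals $\frac{4v}{n}+(2\bar{p}-1)^2$. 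The ratio equals $\pi(1-\pi)$ if and only if $v\bigl(\frac{m-1}{m}-4\pi(1-\pi)\bigr)=0$, i.e. only when $v=0$ or $(2\pi-1)^2=1/m$. So the ``matching of the noise contributions'' that you yourself identified as the delicate step in fact fails: the noise enters the numerator with weight $\frac{m-1}{m}$ but would need weight $4\pi(1-\pi)$ for the identity to hold. Your strategy, carried out exactly, therefore yields (\ref{eq:clear}) only up to a correction of order $v/n$ in the numerator, not as an exact identity; asserting the cancellation without checking it is the gap.

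For comparison, the paper argues differently: it forms the worker-pair cross-moments $M_{ik}=\frac{1}{m}\sum_j\mathbb{E}(X_{ij}X_{kj})$, observes that for each pair $(i,k)$ the quadratic $(2M_i+2M_k-1-4M_{ik})(\pi^2-\pi)+M_iM_k-M_{ik}=0$ holds because both the leading coefficient and the constant term are proportional to $(2p_i^*-1)(2p_k^*-1)$, and then averages over $i,k$ and rewrites the averaged moments in terms of the $Q_j$. That per-pair identity uses $\mathbb{E}(X_{ij}X_{kj}\mid y_j^*)=\mathbb{E}(X_{ij}\mid y_j^*)\,\mathbb{E}(X_{kj}\mid y_j^*)$ and is therefore valid only for $i\neq k$ (on the diagonal $\mathbb{E}X_{ij}^2=\mathbb{E}X_{ij}$), while the average includes $i=k$; the $O(v/n)$ discrepancy your computation exposes is exactly the contribution of those $n$ diagonal worker terms. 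The two routes thus meet at the same obstruction. A repaired target would be either the exact equation obtained by restricting the average to $i\neq k$ (which is no longer of the clean $Q_j$ form (\ref{eq:clear})), or the statement of (\ref{eq:clear}) with an explicit $O(v/n)$ remainder --- which is all that is actually needed downstream, since the initialization analysis only requires $\hat{\pi}$ to approximate $\pi$ to accuracy $\sqrt{\log m/m}$.
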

According to the above proposition, it is natural to
estimate $\pi$ by the empirical version of (\ref{eq:clear}). That is, we define $\hat{\pi}$ to be the solution to
\begin{equation}
\hat{\pi}^2-\hat{\pi}+\frac{\frac{1}{2m^2}\sum_{jk}(Q_j-Q_k)^2}{\frac{4}{m}\sum_{j\in[m]}(Q_j-1/2)^2}=0. \label{eq:mmequation}
\end{equation}
It is worthwhile to note that the equation (\ref{eq:mmequation}) has two solutions. The coefficients of (\ref{eq:mmequation}) implies that the sum of the two solutions is $1$. In other words, if $\hat{\pi}$ is one solution, then $1-\hat{\pi}$ is the other. Both $\hat{\pi}$ and $1-\hat{\pi}$ can be used to calculate (\ref{eq:ini-pi}), (\ref{eq:ini-p0}) and (\ref{eq:ini-def}). Direct calculation leads to the fact that $\hat{p}_i(\hat{\pi})+\hat{p}_i(1-\hat{\pi})=1$, $p_i^{(0)}(\hat{\pi})+p_i^{(0)}(1-\hat{\pi})=1$ and $y_j^{(0)}(\hat{\pi})+y_j^{(0)}(1-\hat{\pi})=1$.

\paragraph{Projected EM}

The remark after Proposition \ref{prop:equationforpi} suggests that both of the two solutions to the equation (\ref{eq:mmequation}) can be used to compute the initializer $y^{(0)}$. Plugging $y^{(0)}$ into the projected EM algorithm described in Section \ref{sec:DW}, we obtain an estimator $y^{(t)}$ for some $t\geq 1$ converging to either $y^*$ or $1-y^*$. Whether $y^{(t)}$ converges to $y^*$ or $1-y^*$ depends on the choice of $\hat{\pi}$ or $1-\hat{\pi}$.

To this end, we propose a method to distinguish $y^*$ and $1-y^*$. Given $y^{(t)}$ for some $t\geq 1$, define
\begin{equation}
\check{p}_i^{(t+1)}=\frac{1}{m}\sum_{j\in[m]}\Big((1-X_{ij})y_j^{(t)}+X_{ij}y_j^{(t)}\Big),\quad \text{for }i\in [n],\label{eq:thisisthekey}
\end{equation}
which is identical to the subsequent M-step without projection. Then, define
\begin{eqnarray}
\label{eq:pEMty}\tilde{y}_j &=& \begin{cases}
y_j^{(t)}, &\frac{1}{n}\sum_{i\in[n]}\check{p}_i^{(t+1)}> \frac{1}{2},\\
1-y_j^{(t)},&\frac{1}{n}\sum_{i\in[n]}\check{p}_i^{(t+1)}\leq \frac{1}{2},
\end{cases} \\
\label{eq:pEMtp}\tilde{p}_i &=& \begin{cases}
\check{p}_i^{(t+1)}, &\frac{1}{n}\sum_{i\in[n]}\check{p}_i^{(t+1)}> \frac{1}{2},\\
1-\check{p}_i^{(t+1)},&\frac{1}{n}\sum_{i\in[n]}\check{p}_i^{(t+1)}\leq \frac{1}{2},
\end{cases}
\end{eqnarray}
for all $i\in[n]$ and $j\in[m]$.

\paragraph{Theoretical Results}

Now we present a statistical analysis for the projected EM algorithm discussed in Section \ref{sec:DW} with the proposed initializer. A general theory covers any initializer $y^{(0)}$ whose labeling error is below a certain rate is referred to
 Theorem \ref{thm:pEMgen} in the appendix. Before stating the main result, we need to define a new quantity. Recall that the effective workers' abilities $\mu_1,...,\mu_n$ defined in (\ref{eq:effecmu}). We define the effective average ability with respect to the projected EM algorithm as
$$
\bar{\mu}_{\lambda}=\frac{1}{n}\left(\sum_{\{i:\mu_i\leq 1-\lambda\}}\mu_i + \sum_{\{i:\mu_i>1-\lambda\}}(1-\lambda)\right), $$
where $\lambda$ is the tuning parameter in the projection step (\ref{eq:modifiedM}). When $\lambda=0$, we have $\bar{\mu}_{\lambda}=\bar{\mu}$. For any $a,b\in [0,1]$, we define the Kullback-Leibler (KL) divergence by $D(a||b)=a\log\frac{a}{b}+(1-a)\log\frac{1-a}{1-b}$.

\begin{thm}[Projected EM: Labels] \label{thm:pEM}
Let $y^{(t)}$ be the sequence of the projected EM algorithm with the proposed initialization. Assume  for sufficiently large $n$ and $m$, $n^2\log m\leq m\leq e^n$, $|2\pi-1|\geq c$ for some constant $c>0$ and $\frac{\log m+\log n}{n}\leq \bar{\nu}\leq 1-n\sqrt{\frac{\log m}{m}}$. Whenever the tuning parameters are chosen in the range
\begin{equation}
\bar{\lambda}=\frac{1}{6},\quad 16\bar{\nu}^{-1}\sqrt{\frac{\log m}{m}}\leq\lambda\leq\frac{1}{8}-\frac{1}{2}\sqrt{\frac{\log m}{m}}, \label{eq:asstuning}
\end{equation}
then for any $y^*\in\{0,1\}^m$ and $t\geq 1$, we have
$$\min\left\{\frac{1}{m}\sum_{j\in[m]}|y_j^{(t)}-y_j^*|,\frac{1}{m}\sum_{j\in[m]}|y_j^{(t)}-(1-y_j^*)|\right\}\leq \exp\Bigg(-\frac{1}{2}n\max\Big(\bar{\nu}, D(\bar{\mu}_{\lambda}||1-\bar{\mu}_{\lambda})\Big)\Bigg),$$
uniformly over all $t\geq 1$
with probability at least $1-C'/m$ for some positive constant $C'$. Moreover, if we further assume 
\begin{equation}
\frac{1}{n}\sum_{i\in[n]}p_i^*>\frac{1}{2}+2\sqrt{\frac{\log m}{nm}}, \label{eq:asspave}
\end{equation}
for $\tilde{y}$ defined as (\ref{eq:pEMty}) with any $t\geq 1$,
$$\frac{1}{m}\sum_{j\in[m]}|\tilde{y}_j-y_j^*|\leq \exp\Bigg(-\frac{1}{2}n\max\Big(\bar{\nu}, D(\bar{\mu}_{\lambda}||1-\bar{\mu}_{\lambda})\Big)\Bigg),$$
uniformly over all $t\geq 1$
with probability at least $1-C'/m$.
\end{thm}

Instead of analyzing the convergence of the projected EM algorithm from an optimization view, we study its statistical property at each iteration. It is interesting that there is a simple algorithm that has a guaranteed statistical performance. It is well known that an EM-type algorithm converges to a local optimizer \citep{wu1983convergence,NeaHin98,xu1996convergence}. We overcome this issue by  modifying the M-step and starting from a good initializer.  We have a few remarks regarding the theorem.

\begin{enumerate}
\item \textit{Remark on the assumption.}
The assumption $|2\pi-1|>c$ is critical for the initialization step to work. When it is not satisfied, the equation (\ref{eq:ini-pi}) becomes degenerate and the initialization will fail.

\item \textit{Remark on the tuning parameter.}
The exponent $D(\bar{\mu}_{\lambda}||1-\bar{\mu}_{\lambda})$ increases to $D(\bar{\mu}||1-\bar{\mu})$ as $\lambda$ decreases to $0$. Thus, a smaller $\lambda$ leads to faster convergence rate.  When $\bar{\nu}$ is at a constant level, the optimal tuning parameter scales as $\lambda\asymp \sqrt{\frac{\log m}{m}}$, according to (\ref{eq:asstuning}). The tuning parameter $\bar{\lambda}$ in (\ref{eq:ini-p0}) is set to $1/6$ for simplicity. Its range can be extended to a small interval without affecting the theoretical results.

\item \textit{Remark on the exponent.}
An important contribution of this paper is to reveal the exponent $\max\left\{\bar{\nu},D(\bar{\mu}_{\lambda}||1-\bar{\mu}_{\lambda})\right\}$ in the rate of convergence. For any $p^*$, the quantity $D(\bar{\mu}_{\lambda}||1-\bar{\mu}_{\lambda})$ is bounded by $D(\lambda||1-\lambda)$. When $p_i^*\in [\lambda,1-\lambda]$ for all $i\in[n]$, we have $D(\bar{\mu}||1-\bar{\mu})=D(\bar{\mu}_{\lambda}||1-\bar{\mu}_{\lambda})$. In this case, the exponent is $\max\left\{\bar{\nu},D(\bar{\mu}||1-\bar{\mu})\right\}$.
To further understand the meaning of this exponent, we provide two examples. In the first example, we consider $p_i^*\in\{0.5,0.9\}$ for each $i\in[n]$. Letting $\bar{\nu}= n^{\delta-1}$, for some constant $\delta\in (0,1)$. Then it is easy to see that $\bar{\mu}=\frac{1}{2}1+\frac{5}{8}n^{\delta-1}$, which implies $D(\bar{\mu}||1-\bar{\mu})\asymp n^{2(\delta-1)}$. Hence, $\bar{\nu}\gg D(\bar{\mu}||1-\bar{\mu})$
for sufficiently large $n$. In the second example, let us consider $p_i^*=\bar{\mu}=1-\xi_n$ for each $i\in[n]$, where $\xi_n$ is any positive sequence decreasing to $0$ as $n$ increases. Then $D(\bar{\mu}||1-\bar{\mu})$ increases to $\infty$, but $\bar{\nu}$ is always bounded by $1$. We have $\bar{\nu}\ll D(\bar{\mu}||1-\bar{\mu})$. Generally speaking, when the workers' abilities are heterogeneous, $\bar{\nu}$ is the dominating exponent. When the workers' abilities are homogeneous, $D(\bar{\mu}||1-\bar{\mu})$ is the dominating exponent.
We are going to provide lower bounds in Section \ref{sec:lower} to show that this exponent is necessary.

\item \textit{Remark on the convergence.} The theorem provides exponential bound which is uniform over the sequence of estimators $\{y^{(t)}\}_{t\geq 1}$. In particular, we may choose $y^{(1)}$, the one-step estimator as the final estimator for the ground truth. In practice, a few more iterations may help to further reduce some error, but not be able to improve the rate on the exponent.

\item \textit{Remark on the estimators.} The estimator is a soft label estimator. To get a hard label estimator, an obvious way is to use the indicator $\mathbb{I}\{\hat{y}_j\geq 1/2\}$. The clustering and labeling error rate of the hard label estimator follows directly from Theorem \ref{thm:pEM} by the simple inequalities
$$\left|\mathbb{I}\{\hat{y}_j\geq 1/2\}-1\right|\leq 2|\hat{y}_j-1|,\quad\text{and}\quad \left|\mathbb{I}\{\hat{y}_j\geq 1/2\}-0\right|\leq 2|\hat{y}_j-0|.$$
The extra factor $2$ does not affect the exponent.

\end{enumerate}

Besides the ground truth estimator, the algorithm also produces the workers' abilities estimator $p^{(t)}$. However, since each $p_i^{(t)}$ is projected into the interval $[\lambda,1-\lambda]$, it may incur bias when the true workers' abilities are large. This issue can be easily fixed. While the projected $p^{(t)}$ is used in the EM iteration, we do not need projection in the final step to estimate $p^*$. To be specific, we may use (\ref{eq:pEMtp}) as the estimator for the workers' abilities. The next theorem gives non-asymptotic error bounds for the workers' abilities estimator.
Moreover, we also derive the exact asymptotic distribution. Before stating the result, we need some new notation. Define the Fisher's information matrix $J=(J_{ij})$ with $J_{ii}^{-1}=p_i^*(1-p_i^*)$ for all $i\in[n]$ and $J_{ij}=0$ for all $i\neq j$. For a subset $S\subset[n]$ and a vector $v\in\mathbb{R}^n$, let $v_S=(v_i)_{i\in S}$ be the sub-vector. We also let $J_{S}$ be the $S\times S$ sub-matrix of $J$ and $I_{|S|}$ be the $|S|\times |S|$ identity matrix. The symbol $\xrightarrow{\mathcal{D}}$ stands for convergence in distribution.

\begin{thm}[Projected EM: Workers' Abilities] \label{thm:pEMwa}
Consider $\tilde{p}$ defined in (\ref{eq:pEMtp}) for some $t\geq 1$ from the projected EM algorithm with the proposed initialization. Under the settings of Theorem \ref{thm:pEM}, 
\begin{eqnarray}
\label{eq:linfty} \left\|\tilde{p}-p^*\right\|_{\infty} &\leq& 2\sqrt{\frac{\log m}{m}}, \\
\label{eq:l2} \frac{1}{n}\left\|\tilde{p}-p^*\right\|^2 &\leq& \frac{1}{nm}\sum_{i\in[n]}p_i^*(1-p_i^*) + C\sqrt{\frac{\log m}{nm^2}},
\end{eqnarray}
with probability at least $1-C'/m$. Furthermore, if
\begin{equation}
\max_{i\in[n]}\frac{1}{p_i^*(1-p_i^*)}\leq C_1, \label{eq:assboundp}
\end{equation}
for some positive constant $C_1$.
then for any subset $S\subset[n]$ with bounded cardinality $|S|\leq C_2$ for some positive constant $C_2$, we have the central limit theorem,
\begin{equation}
\sqrt{m} J_S^{1/2}\Big(\tilde{p}_S-p_S^*\Big)\xrightarrow{\mathcal{D}} N\Big(0, I_{|S|}\Big),\quad \text{as }n,m\rightarrow\infty. \label{eq:CLT}
\end{equation}
Moreover, we also have the high-dimensional central limit theorem for the whole vector,
\begin{equation}
\sup_{t\in\mathbb{R}}\left|\mathbb{P}\Bigg(\max_{i\in[n]}\left|\frac{\sqrt{m}(\tilde{p}_i-p_i^*)}{\sqrt{p_i^*(1-p_i^*)}}\right|\leq t\Bigg)-\mathbb{P}\Big(\max_{i\in[n]}|Z_i|\leq t\Big)\right|\leq \frac{C\log n}{m^{1/8}}, \label{eq:hdCLT}
\end{equation}
where $\{Z_i\}_{i\in [n]}$ are i.i.d. $N(0,1)$ and $C$ is a positive constant.
\end{thm}

The four convergence results in Theorem \ref{thm:pEMwa} characterize the accuracy for estimating workers' abilities from different perspectives. The high-probability non-asymptotic results (\ref{eq:linfty}) and (\ref{eq:l2}) bound worst-case deviation and average-case deviation of $\tilde{p}$ from the truth $p^*$. The bound in (\ref{eq:l2}) has two parts, where the first part is roughly the mean of $\frac{1}{n}\left\|\tilde{p}-p^*\right\|^2$ in the sense that
$$\mathbb{E}\left(\frac{1}{n}\left\|\tilde{p}-p^*\right\|^2\right)\approx \frac{1}{nm}\sum_{i\in[n]}p_i^*(1-p_i^*),$$
and the second part is the deviation of $\frac{1}{n}\left\|\tilde{p}-p^*\right\|^2$  from its mean. For the asymptotic result, we need the extra assumption (\ref{eq:assboundp}) to prevent  $p_i^*$ from being extremely close to $0$ or $1$, in which case $\tilde{p}_i$ is more or less deterministic because of the tiny variance. The result (\ref{eq:CLT}) implies the classical asymptotic efficiency of Fisher by observing that the asymptotic covariance taken as the inverse information matrix $J_S^{-1}$. While (\ref{eq:CLT}) focuses on asymptotics with finite dimension, the result of (\ref{eq:hdCLT}) establishes the high-dimensional asymptotics with growing dimension $n$. The right side of (\ref{eq:hdCLT}) is easily going to $0$ as long as the dimension is not exponentially large compared to $m^{1/8}$. An interesting consequence of (\ref{eq:CLT}) and (\ref{eq:hdCLT}) is that $\tilde{p}_1,...,\tilde{p}_n$ behave independently in the limit as $(n,m)\rightarrow\infty$. The phenomenon is intuitive because when the numbers of workers and items are large, estimation of the ability of the $i$-th worker mainly depends on her own performance.

\subsection{Lower Bounds} \label{sec:lower}

In this section, we show that the two exponents $\bar{\nu}$ and $D(\bar{\mu}||1-\bar{\mu})$ cannot be improved under the Dawid-Skene model. The main conclusion is that when $\bar{\nu}<1/2$, $\exp(-Cn\bar{\nu})$ is the minimax lower bound for the labeling error rate, and when $\bar{\nu}\geq 1/2$, $\exp\Big(-CnD(\bar{\mu}||1-\bar{\mu})\Big)$ is the minimax lower bound, where $C>0$ is some absolute constant in both cases.

To rigorously state the result, we define two parameter spaces.
\begin{eqnarray}
\label{eq:paraspacenu} \mathcal{P}_{\bar{\nu}} &=& \left\{(p_1,...,p_n)\in [0,1]^n: \frac{1}{n}\sum_{i\in[n]}(2p_i-1)^2=\bar{\nu}\right\}, \\
\label{eq:paraspacemu} \mathcal{P}_{\bar{\mu}} &=& \left\{(p_1,...,p_n)\in[0,1]^n: \frac{1}{n}\sum_{i\in[n]}\mu(p_i)=\bar{\mu}\right\},
\end{eqnarray}
where $\mu(\cdot)$ is defined as $\mu(p_i)=p_i\mathbb{I}\{p_i\geq 1/2\}+(1-p_i)\mathbb{I}\{p_i<1/2\}$. Recall that $\mu_i=\mu(p_i^*)$ in (\ref{eq:effecmu}). The first parameter space $\mathcal{P}_{\bar{\nu}}$ collects all workers' abilities with the same $\bar{\nu}$-value and the second parameter space $\mathcal{P}_{\bar{\mu}}$ collects all workers' abilities with the same $\bar{\mu}$-value. We use the notation $P_{p,y}$ to stands for the probability distribution
$$P_{p,y}(X)=\prod_{j\in[m]}\prod_{i\in[n]} p_i^{\mathbb{I}\{X_{ij}=y_j\}}(1-p_i)^{\mathbb{I}\{X_{ij}=1-y_j\}},$$
which is actually the conditional likelihood (\ref{eq:conditionallik}) in the Dawid-Skene model. We denote the expectation with respect to $P_{p,y}$ as $E_{p,y}$.

In the first regime, the workers' abilities are low, characterized by $\bar{\nu}<1/2$. The difficulty of the problem is characterized by $\mathcal{P}_{\bar{\nu}}$.
\begin{thm} \label{thm:lower1}
Assume $\bar{\nu}<1/2$. Then for any estimator $\hat{y}\in[0,1]^m$,
$$\sup_{y\in\{0,1\}^m,p\in\mathcal{P}_{\bar{\nu}}}E_{p,y}\left(\frac{1}{m}\sum_{j\in[m]}|\hat{y}_j-y_j|\right)\geq \frac{1}{8(6e)^2}\exp\Big(-6n\bar{\nu}\Big),$$
for every $n\geq 4$ and $m\geq 1$.
\end{thm}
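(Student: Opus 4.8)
\textbf{Proof proposal for Theorem \ref{thm:lower1}.}

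The plan is to reduce the estimation problem to a hypothesis testing problem and apply a Fano-type (or, more precisely here, an Assouad-type / per-coordinate) argument. Since the claimed lower bound does not depend on the structure of $y$ across items beyond counting coordinate-wise errors, I would fix a single worst-case ability vector $p\in\mathcal{P}_{\bar{\nu}}$ and reduce the $m$-item problem to a product of $m$ identical one-item testing problems. For a single item with true label $y\in\{0,1\}$, observing $X_{\cdot j}\in\{0,1\}^n$ with $X_{ij}\sim\text{Bernoulli}$ having success probability $p_i$ of matching $y$, the Bayes risk of deciding $y$ under the uniform prior is governed by the total variation (equivalently the testing affinity) between $P_{p,0}$ and $P_{p,1}$ restricted to one column. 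The key quantity is the Hellinger affinity or the Bhattacharyya-type coefficient $\prod_{i\in[n]}\left(\sqrt{p_i(1-p_i)}+\sqrt{p_i(1-p_i)}\right)$-type product; more carefully, for two product Bernoulli measures differing by swapping success/failure on each coordinate, the relevant affinity is $\prod_{i\in[n]} 2\sqrt{p_i(1-p_i)}$, and one shows the single-item Bayes error is at least a constant multiple of this product.

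The main technical step is then to lower bound $\prod_{i\in[n]} 2\sqrt{p_i(1-p_i)}$ (or whichever affinity appears) by $c\exp(-Cn\bar{\nu})$ for a cleverly chosen $p\in\mathcal{P}_{\bar{\nu}}$. Writing $2\sqrt{p_i(1-p_i)}=\sqrt{1-(2p_i-1)^2}$, we have $\log\frac{1}{2\sqrt{p_i(1-p_i)}}=-\tfrac12\log(1-(2p_i-1)^2)$. For the least favorable configuration described around (\ref{eq:lfq}), where the abilities take only the values $1/2$ and $1$ — but $1$ makes the affinity vanish, so one instead takes abilities in $\{1/2, 1-\epsilon\}$ or spreads them, or equivalently uses $\mathcal{P}_{\bar{\nu}}$ freedom to pick $p_i$ close to $1/2$ — the concavity of $t\mapsto -\tfrac12\log(1-t)$ together with Jensen (in the right direction) and the constraint $\frac1n\sum_i(2p_i-1)^2=\bar{\nu}$ gives the bound. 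Concretely, since $-\log(1-t)\le t/(1-t)\le 2t$ for $t\le 1/2$, and $\bar{\nu}<1/2$ ensures we can realize the constraint with all $(2p_i-1)^2\le 1/2$ (e.g.\ $(2p_i-1)^2\equiv\bar{\nu}$, i.e.\ the homogeneous choice $p_i=\tfrac12(1+\sqrt{\bar{\nu}})$), we get $\sum_i\log\frac{1}{2\sqrt{p_i(1-p_i)}}\le \sum_i(2p_i-1)^2 = n\bar{\nu}$, hence $\prod_i 2\sqrt{p_i(1-p_i)}\ge e^{-n\bar{\nu}}$. This is better than the stated $6n\bar{\nu}$; the factor $6$ and the constant $\frac{1}{8(6e)^2}$ are the slack absorbed when converting affinity into an expected $\ell_1$ risk lower bound and when handling the $n\ge 4$, $m\ge1$ boundary.

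The final assembly is routine: by the product structure over items, $\sup_{y,p} E_{p,y}\frac1m\sum_j|\hat y_j-y_j|$ is at least the single-item Bayes risk (minimized over randomized decision rules, with $\hat y_j$ allowed in $[0,1]$, which only loses a factor of $2$ or so relative to $\{0,1\}$-valued rules), and that Bayes risk is at least a constant times the affinity $\prod_i 2\sqrt{p_i(1-p_i)}\ge e^{-n\bar\nu}$. Chaining the constants and being generous in the exponent to cover any looseness yields $\frac{1}{8(6e)^2}\exp(-6n\bar{\nu})$. The one place demanding genuine care — the step I expect to be the main obstacle — is making the reduction to independent per-item testing fully rigorous for an \emph{arbitrary} estimator $\hat y\in[0,1]^m$ (not a separable one): one argues that without loss of generality $\hat y_j$ depends only on column $j$ by a convexity/averaging argument over the other columns under the product prior, and then that a $[0,1]$-valued estimate can be reduced to a binary decision at cost of a universal constant. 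Everything else is bounding $\sqrt{1-(2p-1)^2}$, which is elementary.
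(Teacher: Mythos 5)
Your proposal is correct in substance but takes a genuinely different route from the paper. The paper's proof stays with the "experts and spammers" configuration $\mathcal{P}'$ of (\ref{eq:lfq}), where $\ceil{n\bar{\nu}}$ workers have $p_i=1$ and the rest have $p_i=1/2$; since any \emph{fixed} such $p$ makes the problem trivial (a known perfect expert determines $y_j$ exactly), the paper places a uniform prior over the location of the expert set and lower-bounds the resulting Bayes risk by a combinatorial comparison of $\binom{(n-\ceil{n\bar{\nu}})/2}{\ceil{n\bar{\nu}}}$ against $\binom{(n+\ceil{n\bar{\nu}})/2}{\ceil{n\bar{\nu}}}$ — that is where the factor $6e$ originates. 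You instead pick the single homogeneous point $p_i\equiv\frac{1}{2}(1+\sqrt{\bar{\nu}})\in\mathcal{P}_{\bar{\nu}}$, reduce to $m$ independent two-point tests with known $p$, and bound the affinity $\prod_i 2\sqrt{p_i(1-p_i)}=(1-\bar{\nu})^{n/2}\geq e^{-n\bar{\nu}}$ via $-\log(1-t)\leq 2t$ on $[0,1/2]$ — essentially the strategy the paper reserves for Theorem \ref{thm:lower2}. Both are legitimate lower bounds on the supremum over $\mathcal{P}_{\bar{\nu}}$ and both produce exponent $O(n\bar{\nu})$; your route is shorter and avoids the combinatorics, while the paper's construction is chosen to make the interpretive point that the exponent is forced even by the pure ``which workers are the experts'' ambiguity, with the workers themselves noiseless. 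Two minor cautions. First, the single-item Bayes error is bounded below by a constant times the \emph{square} of the affinity (Le Cam: $1-\mathrm{TV}\geq\frac{1}{2}\rho^2$), not the affinity itself as your wording suggests; this changes your exponent from $n\bar{\nu}$ to $2n\bar{\nu}$, still comfortably inside the claimed $6n\bar{\nu}$, but the linear-in-$\rho$ statement is false as written. Second, the per-item reduction for an arbitrary $\hat{y}\in[0,1]^m$ is exactly as routine as you suspect: under the product prior the posterior of $y_j$ depends only on column $j$, which is also how the paper's own Bayes estimator is derived, and passing from $[0,1]$-valued to binary loss costs only the factor you anticipate (the paper uses $|\hat{y}_j-y_j|\geq(\hat{y}_j-y_j)^2$ for the same purpose).
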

In the second regime, the workers' abilities are high, characterized by $\bar{\mu}\geq 3/4$. The difficulty of the problem is characterized by $\mathcal{P}_{\bar{\mu}}$.
\begin{thm} \label{thm:lower2}
Assume $\bar{\mu}\geq 3/4$. Then for any estimator $\hat{y}\in[0,1]^m$,
$$\sup_{y\in\{0,1\}^m,p\in\mathcal{P}_{\bar{\mu}}}E_{p,y}\left(\frac{1}{m}\sum_{j\in[m]}|\hat{y}_j-y_j|\right)\geq \frac{1}{8}\exp\Big(-8nD(\bar{\mu}||1-\bar{\mu})\Big),$$
for every $n\geq 6$ and $m\geq 1$.
\end{thm}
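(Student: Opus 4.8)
\textbf{Proof proposal for Theorem \ref{thm:lower2}.}

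The plan is to reduce the estimation problem to a testing problem between two carefully chosen hypotheses and then apply a Fano-type or Le Cam-type argument. Since we are in the high-ability regime $\bar\mu\ge 3/4$, the natural least favorable configuration is the homogeneous one $p'=(\bar\mu,\dots,\bar\mu)$ from (\ref{eq:lfmu}), which lies in $\mathcal P_{\bar\mu}$. Fixing this $p$, the observations for each item $j$ are $n$ independent Bernoulli$(\bar\mu)$-type flips of $y_j$, and the items are mutually independent. Thus the global minimax risk decouples across items: it suffices to lower bound, for a single item, the Bayes risk of estimating a single bit $y\in\{0,1\}$ from $n$ i.i.d.\ observations each equal to $y$ w.p.\ $\bar\mu$ and $1-y$ w.p.\ $1-\bar\mu$, under the uniform prior on $y$. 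The average-over-$m$-items loss then inherits the same per-item lower bound.

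The key computation is the Bayes risk of this single-bit testing problem. First I would write the posterior and observe that the optimal (MAP) test errs exactly when the likelihood ratio points the wrong way; a clean lower bound comes from the fact that the two product measures $\mathrm{Ber}(\bar\mu)^{\otimes n}$ and $\mathrm{Ber}(1-\bar\mu)^{\otimes n}$ have total-variation affinity controlled by the Chernoff/Hellinger information, which for Bernoulli equals $\exp(-n \, c(\bar\mu))$ with $c(\bar\mu)$ the Chernoff exponent. Here the relevant quantity is comparable to $D(\bar\mu\|1-\bar\mu)$ up to a constant: indeed a convenient and elementary bound is to consider the event that all $n$ workers simultaneously give the wrong answer on item $j$, which under the true label has probability exactly $(1-\bar\mu)^n$ — but conditioned on that event the two labels are statistically indistinguishable, forcing error probability $\ge \tfrac12$ on such items. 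More sharply, one counts the event that the observed frequency of 1's falls on the wrong side of $1/2$, whose probability is lower bounded by a single binomial tail of the form $\ge c\exp(-nD(\bar\mu\|1-\bar\mu))$ via the standard tight lower bound on binomial tails (e.g. the one following from $\binom{n}{k}\ge \tfrac{1}{n+1}2^{nH(k/n)}$, or a reverse-Chernoff/Bahadur--Rao estimate). This yields a per-item error at least $\tfrac18\exp(-8nD(\bar\mu\|1-\bar\mu))$ once the constants are tracked — the factor $8$ in the exponent being exactly the slack one pays by using the crude "all-wrong" or one-sided-tail bound rather than the sharp Chernoff exponent, and by absorbing lower-order polynomial-in-$n$ factors into the exponential using $n\ge 6$.

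The main obstacle is obtaining the \emph{lower} bound on the binomial tail with the clean constant $\tfrac18$ in front and exponent $8nD(\bar\mu\|1-\bar\mu)$, uniformly over all $\bar\mu\in[3/4,1)$ and all $n\ge 6$, including the boundary behavior as $\bar\mu\to 1$ where $D(\bar\mu\|1-\bar\mu)\to\infty$ (so the bound degrades gracefully) and near $\bar\mu=3/4$ where the polynomial prefactors must be controlled. I would handle this by a two-term lower bound: lower bound the probability of exactly $\lceil n/2\rceil$ wrong answers (a single binomial term), use $\binom{n}{\lceil n/2\rceil}\ge 2^n/(n+1)$ and $\bar\mu^{\lfloor n/2\rfloor}(1-\bar\mu)^{\lceil n/2\rceil}\ge (\bar\mu(1-\bar\mu))^{n/2}$, then note $(4\bar\mu(1-\bar\mu))^{n/2}=\exp(-\tfrac{n}{2}\log\frac{1}{4\bar\mu(1-\bar\mu)})$ and bound $\log\frac{1}{4\bar\mu(1-\bar\mu)}\le 16\,D(\bar\mu\|1-\bar\mu)$ over the range $\bar\mu\ge 3/4$ by an elementary calculus comparison of the two functions (both vanish at $\bar\mu=1/2$, and one checks the ratio on $[3/4,1)$). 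The residual factor $\tfrac{1}{n+1}$ is swallowed by shaving a little more off the exponent, using $n\ge 6$ to make $\tfrac{1}{n+1}\ge \tfrac18\exp(-\text{small}\cdot n)$ — or, more carefully, one keeps the constant clean by noting $D(\bar\mu\|1-\bar\mu)\ge D(3/4\|1/4)>0$ so that $\exp(-cn)$ prefactors merge into the $\exp(-8nD)$ term. Finally, averaging the per-item Bayes risk over the $m$ independent items and taking the supremum over $(y,p)$ (in particular over the single point $p=p'\in\mathcal P_{\bar\mu}$) gives the stated bound; the reduction from minimax risk to Bayes risk under the uniform prior on $y\in\{0,1\}^m$ is the standard "max $\ge$ average" step and is routine.
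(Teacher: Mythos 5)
Your proposal is correct in outline and follows the paper exactly up through the reduction: both use the singleton least favorable point $p'=(\bar{\mu},\dots,\bar{\mu})\in\mathcal{P}_{\bar{\mu}}$, pass from maximum risk to Bayes risk under the uniform prior on $y\in\{0,1\}^m$, and decouple the problem into $m$ independent single-bit tests. Where you diverge is the per-item lower bound. The paper bounds $|\hat{y}_j-y_j|$ below by $(\hat{y}_j-y_j)^2$, takes the posterior-mean Bayes estimator, writes the per-item risk as $E_{p'}\bigl(1+\exp(\sum_i(2T_{ij}-1)\log\frac{\bar{\mu}}{1-\bar{\mu}})\bigr)^{-2}$, and pushes the expectation into the exponent by conditioning on the event $A_j=\{\sum_i(2T_{ij}-1)\geq 0\}$, where $g(x)=(1+e^x)^{-2}$ is convex, then controls $P_{p'}(A_j)$ by Hoeffding and the conditional mean by $4n(2\bar{\mu}-1)$; this is what produces the constant $8$ in the exponent. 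You instead observe that for absolute loss the per-item Bayes rule is the hard majority decision, so the Bayes risk equals the minimal testing error, and you lower bound it by a single central binomial term (a reverse-Chernoff estimate), comparing $\log\frac{1}{4\bar{\mu}(1-\bar{\mu})}$ to $D(\bar{\mu}||1-\bar{\mu})$ on $[3/4,1)$. This route is more elementary and, if tracked carefully, actually yields a better exponent constant; even your cruder "all answers wrong" event, which contributes $\frac{1}{2}(1-\bar{\mu})^n$ to the Bayes risk, already suffices since $n\log\frac{1}{1-\bar{\mu}}\leq n(2D(\bar{\mu}||1-\bar{\mu})+\log\frac{4}{3})\leq 3nD(\bar{\mu}||1-\bar{\mu})$ for $\bar{\mu}\geq 3/4$. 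One slip to fix: the inequality $\bar{\mu}^{\lfloor n/2\rfloor}(1-\bar{\mu})^{\lceil n/2\rceil}\geq(\bar{\mu}(1-\bar{\mu}))^{n/2}$ is reversed for odd $n$ (the ratio is $\sqrt{(1-\bar{\mu})/\bar{\mu}}<1$); the deficit is harmless because $\sqrt{1-\bar{\mu}}\geq\exp(-D(\bar{\mu}||1-\bar{\mu}))$ on this range, but as written that step does not hold.
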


We remark that the conditions $\bar{\nu}<1/2$ and $\bar{\mu}\geq 3/4$ cover all cases. Consider $\bar{\nu}$ and $\bar{\mu}$ defined by the same vector $p=(p_1,...,p_n)$. Then, we have
$$\bar{\nu}=(2\bar{\mu}-1)^2+\frac{4}{n}\sum_{i\in[n]}(\mu(p_i)-\bar{\mu})^2.$$
Since $\mu(p_i)\in [1/2,1]$ for each $i\in[n]$, $\frac{4}{n}\sum_{i\in[n]}(\mu(p_i)-\bar{\mu})^2\leq \frac{1}{4}$. Whenever $\bar{\nu}<1/2$ is not satisfied, we have $\bar{\nu}\geq 1/2$, which implies
$$(2\bar{\mu}-1)^2\geq \bar{\nu}-\frac{1}{4}=\frac{1}{4}.$$
This leads to $\bar{\mu}\geq 3/4$.

In order to understand the meaning of $\bar{\nu}$ and $\bar{\mu}$, let us briefly discuss the main idea of the proof. The standard technique for proving minimax lower bound is to find a least favorable subset of the parameter space. The subset needs to be hard enough to carry the difficulty of the problem and it also needs to be easy enough to calculate the minimax risk. See \cite{yang1999information}, \cite{tsybakov09} and \cite{yu1997assouad} for more details of lower bound techniques.

In Theorem \ref{thm:lower1}, the subset we construct is (\ref{eq:lfq}). Notice $\mathcal{P}'$ defined in (\ref{eq:lfq}) may not be a subset of $\mathcal{P}_{\bar{\nu}}$ when $n\bar{\nu}$ is not a integer. However, we have rounded $n\bar{\nu}$ to $\ceil{n\bar{\nu}}$ so that $\mathcal{P}'$ is a slightly easier problem than $\mathcal{P}_{\bar{\nu}}$. Its minimax risk lower bounds the minimax risk of $\mathcal{P}_{\bar{\nu}}$. The parameter $\bar{\nu}$ in $\mathcal{P}'$ has a specific meaning that the proportion of experts is about $\bar{\nu}$ and the proportion of spammers is about $1-\bar{\nu}$. Intuitively speaking, since we don't know which one is expert or spammer, we have to make an error with exponent proportional to $-n\bar{\nu}$.

In Theorem \ref{thm:lower2}, the subset we construct is the singleton (\ref{eq:lfmu}). It is clear that in this case, $\bar{\mu}$ characterizes the (average) ability of the workers. For each item $j\in[m]$, the problem is reduced to the testing between $y_j=0$ and $y_j=1$. The well-known Chernorff information bound (Chapter 11 of \cite{cover12}) gives the best exponent of the error proportional to $-nD(\bar{\mu}||1-\bar{\mu})$ asymptotically. We use a different proof to obtain a non-asymptotic lower bound which is valid for every $n\geq 6$ and $m\geq 1$. The non-asymptotic lower bound is more general because it includes the case where $\bar{\mu}$ may depend on $n$ and $m$. As a price, our constant before the exponent is slightly looser than the one in the asymptotics.

A similar lower bound argument as Theorem \ref{thm:lower1} has also been established by Karger et al. \citep{karger14}. They considered the Dawid-Skene model with model parameters $(p_1,...,p_n)$ i.i.d. drawn from some distribution satisfying $\mathbb{E}(2p-1)^2=\bar{\nu}$ and established a lower bound with exponent proportional to $-n\bar{\nu}$. In contrast, we consider fixed model parameters that lead to a different proof strategy than theirs.

\section{Comparison with Majority Voting} \label{sec:compare}

In this section, we present a comparative study between the Dawid-Skene estimator and the majority voting estimator. The majority voting method is the simplest crowdsourcing algorithm, defined as
$$\hat{y}_j=\mathbb{I}\left\{\sum_{i\in[n]}X_{ij}\geq \frac{n}{2}\right\},\quad \text{for all }j\in[m].$$
It estimates the ground truth by aggregating the results from each worker with equal weights.

A significant feature of majority voting is that it is not derived from a specific model, while Dawid-Skene estimator is derived from a particular model assumption. We expect that majority voting is more robust to  model misspecification, but is inferior to Dawid-Skene estimator when the model is well-specified.  We are going to illustrate this point by two examples.

In the first example, we assume the one-coin model setting in this paper, but let the proportion of spammers be very high. In this setting, majority voting performs poorly because the information provided by the experts are washed out by the majority, who are spammers. On the other hand, Dawid-Skene estimator takes the advantage of the model assumption and is able to identify the experts.

In the second example, we assume a misspecified model, where there are two types of items. In this setting, while majority voting is quite robust to the model assumption, Dawid-Skene estimator is inferior because it puts most weights on workers who are experts only in one type of the item and thus fails to correctly labels the other type.

In addition to the comparison study presented in this section,  a model-based weighted majority voting is shown to enjoy certain optimality properties \citep{berend2014consistency} when the true labels are assumed to be known.

\subsection{Well-Specified Model: Dawid-Skene Is Better}

Suppose there are $n$ workers. Assume that only $\ceil{n^{\delta}}$ of them are experts and all the other workers are spammers. We let $\delta\in (0,1)$ so that the proportion of experts among the crowd is
$$\frac{\ceil{n^{\delta}}}{n}=O(n^{-(1-\delta)})=o(1).$$
Let the experts have ability $p_i^*=1$ and spammers have ability $p_i^*=1/2$. Under this setting, the performance of the majority voting is characterized by the following theorem.

\begin{thm}\label{thm:votebad}
For the majority voting $\hat{y}$, the expected error rate has the following dependence on $\delta$. When $\delta\in (1/2,1)$,
$$\frac{1}{m}\sum_{j\in[m]}\mathbb{E}|\hat{y}_j- y_j^*|=o(1).$$
When $\delta=1/2$,
$$\frac{1}{m}\sum_{j\in[m]}\mathbb{E}|\hat{y}_j- y_j^*|=\Phi(-1)+o(1),$$
where $\Phi$ is the cumulative distribution function of $N(0,1)$. When $\delta\in (0,1/2)$,
$$\frac{1}{m}\sum_{j\in[m]}\mathbb{E}|\hat{y}_j- y_j^*|=\frac{1}{2}+o(1).$$
The symbol $o(1)$ means whatever converges to $0$ as $n\rightarrow\infty$.
\end{thm}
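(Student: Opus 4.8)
\textbf{Proof plan for Theorem \ref{thm:votebad}.}

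The plan is to analyze, for a fixed item $j$, the random variable $S_j=\sum_{i\in[n]}X_{ij}$ that drives the majority vote. Write $S_j = E_j + G_j$, where $E_j$ is the contribution of the $\ceil{n^\delta}$ experts and $G_j$ is the contribution of the $n-\ceil{n^\delta}$ spammers. Since experts have $p_i^*=1$, conditioning on $y_j^*$ we have $E_j = \ceil{n^\delta}\, y_j^*$ deterministically. Each spammer flips a fair coin, so $G_j\sim \text{Binomial}(n-\ceil{n^\delta},1/2)$, independent of $y_j^*$. By symmetry it suffices to treat $y_j^*=1$ (the case $y_j^*=0$ is identical after reflecting $G_j\mapsto (n-\ceil{n^\delta})-G_j$), so the error event is $\{S_j < n/2\}$, i.e. $\{G_j < n/2 - \ceil{n^\delta}\}$ (ignoring the tie, which contributes $o(1)$ and can be split evenly). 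First I would recenter: with $N=n-\ceil{n^\delta}$, $G_j$ has mean $N/2$ and standard deviation $\tfrac12\sqrt{N}$, and the error event becomes
\[
\left\{\frac{G_j - N/2}{\tfrac12\sqrt{N}} < \frac{-\,2\ceil{n^\delta} + \ceil{n^\delta}}{\tfrac12\sqrt N}\right\}
= \left\{\frac{G_j-N/2}{\tfrac12\sqrt N} < -\frac{\ceil{n^\delta}}{\tfrac12\sqrt N}\right\}.
\]
Now the threshold $-\ceil{n^\delta}/(\tfrac12\sqrt N) \asymp -2 n^{\delta}/\sqrt n = -2 n^{\delta-1/2}$ governs all three regimes.

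The three cases then follow by feeding this threshold into standard binomial tail estimates. When $\delta\in(1/2,1)$, the threshold tends to $-\infty$, so the error probability is $o(1)$ by Chebyshev's inequality (or a Chernoff/Hoeffding bound, which even gives an exponential rate); averaging over $j\in[m]$ gives $\frac1m\sum_j \Expect|\hat y_j - y_j^*| = o(1)$. When $\delta=1/2$, the threshold converges to $-2\cdot 1 = -2$... wait — more carefully, $\ceil{n^{1/2}}/(\tfrac12\sqrt{n-\ceil{n^{1/2}}}) \to 2$, so by the Central Limit Theorem (de Moivre–Laplace for $G_j$) the error probability converges to $\Phi(-2)$; I should double-check the constant against the stated $\Phi(-1)$, which suggests the intended threshold normalization is $\ceil{n^{1/2}}/\sqrt{n}\to 1$ rather than $/\tfrac12\sqrt n$ — the resolution is that the relevant comparison is $G_j$ versus $n/2-\ceil{n^\delta}$ and the correct standardized gap is $\ceil{n^\delta}/\sqrt{n/4}\cdot\tfrac12$; in any case the CLT produces $\Phi$ of an explicit constant, and I would simply track the constant honestly to match the theorem statement. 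When $\delta\in(0,1/2)$, the threshold tends to $0$, so the standardized $G_j$ is below a vanishing quantity, and by the CLT the probability converges to $\Phi(0)=1/2$.

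The only genuine subtlety is interchanging limits: each per-item statement is about $n\to\infty$ with the item-level probability, and the theorem asserts the item-\emph{averaged} expected error converges. But because $\Expect|\hat y_j - y_j^*| = \Prob(\hat y_j \neq y_j^*)$ does not depend on $j$ once we condition on $y_j^*$ and use symmetry (the spammer noise is i.i.d. across items and the experts are correct deterministically), the average $\frac1m\sum_j \Expect|\hat y_j - y_j^*|$ equals this common value exactly, so no uniformity-in-$m$ argument is needed — the convergence is entirely in $n$. I expect the main bookkeeping obstacle to be pinning down the exact constant inside $\Phi$ in the boundary case $\delta=1/2$ and handling the parity/tie-breaking of the majority rule cleanly (whether ties are counted as errors, and the $\ceil{\cdot}$ rounding of $n^\delta$), both of which affect only lower-order terms absorbed into $o(1)$ but must be stated carefully; everything else is a routine application of Chebyshev in the first case and the de Moivre–Laplace CLT in the other two.
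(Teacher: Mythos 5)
Your approach is the same as the paper's: condition on the truth, split $\sum_i X_{ij}$ into the deterministic expert contribution and a $\mathrm{Binomial}(n-\ceil{n^\delta},1/2)$ spammer contribution, recenter, and apply a normal approximation (the paper uses the Berry--Esseen bound, Lemma \ref{lem:berry}, rather than the plain de Moivre--Laplace CLT, but that only matters for making the $o(1)$ quantitative). Your observation that the per-item error probability is the same for every $j$, so no uniformity in $m$ is needed, is exactly how the paper handles the averaging.

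The one genuine problem is the recentering arithmetic, and it bites precisely in the only case where the constant matters. With $N=n-\ceil{n^\delta}$, the error event $\{G_j<n/2-\ceil{n^\delta}\}$ standardizes with numerator
$$\frac{n}{2}-\ceil{n^\delta}-\frac{N}{2}=\frac{n}{2}-\ceil{n^\delta}-\frac{n-\ceil{n^\delta}}{2}=-\frac{\ceil{n^\delta}}{2},$$
so the standardized threshold is $-\tfrac{\ceil{n^\delta}/2}{\tfrac12\sqrt N}=-\ceil{n^\delta}/\sqrt{N}$, which tends to $-1$ when $\delta=1/2$. Your display instead has numerator $-2\ceil{n^\delta}+\ceil{n^\delta}=-\ceil{n^\delta}$, i.e.\ you subtracted the full expert count from $N/2$ twice, producing the threshold $-2\ceil{n^\delta}/\sqrt N$ and hence $\Phi(-2)$. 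You notice the mismatch with the stated $\Phi(-1)$ and your parenthetical "$\ceil{n^\delta}/\sqrt{n/4}\cdot\tfrac12$" does evaluate to the correct $\ceil{n^\delta}/\sqrt n\to 1$, but the proof as written never actually corrects the display; for $\delta\in(0,1/2)$ and $\delta\in(1/2,1)$ the factor of $2$ is harmless since only whether the threshold tends to $0$ or $-\infty$ matters. Redo the recentering once, cleanly, and the three cases fall out exactly as in the paper's proof.
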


The theorem finds an interesting phase transition phenomenon for majority voting. That is, majority voting is consistent if and only if $\delta>1/2$. When $\delta<1/2$, majority voting behaves like random guess.

On the other hand, the performance of Dawid-Skene estimator is guaranteed by Theorem \ref{thm:pEM}. In the current setting, the quantity $\bar{\nu}$ dominates $D(\bar{\mu}||1-\bar{\mu})$ in the exponent.  Therefore, for Dawid-Skene estimator, we have
$$\frac{1}{m}\sum_{j\in[m]}|\hat{y}_j-y_j^*|\leq\exp\Big(-Cn^{\delta}\Big),$$
with probability at least $1-C'/m$ for sufficiently large $n$ and $m$. The convergence rate is still exponentially fast.

\subsection{Misspecified Model: Majority Voting Is Better}

Now we present a setting where the model is misspecified. This example was communicated to us by Nihar Shah\footnote{This example was constructed by Nihar Shah from UC Berkeley when he was an intern at Microsoft Research.}. Suppose there are $n$ workers and $m$ items. Among the $m$ items, there are $m_1$ items of Type I and $m_2$ items of Type II and $m_1+m_2=m$. We write the set of Type-I items $S_1$ and the set of Type-II items $S_2$. The $n$ workers are also divided into two groups $G_1$ and $G_2$. The first group has $n_1$ workers and they are experts on Type-I items but do not have knowledge on Type-II items. The second group has $n_2$ workers and they are experts on Type-II items but do not have knowledge on Type-I items. We also have $n_1+n_2=n$.

Let $T_{ij}$ be the Bernoulli random variable indicating that the $i$-th worker correctly label the $j$-th item. We consider the model
$$\mathbb{P}(T_{ij}=1)=\frac{4}{5},\quad\text{for }i\in G_1, j\in S_1,\quad\mathbb{P}(T_{ij}=1)=\frac{1}{2},\quad\text{for }i\in G_1, j\in S_2,$$
$$\mathbb{P}(T_{ij}=1)=\frac{1}{2},\quad\text{for }i\in G_2, j\in S_1,\quad\mathbb{P}(T_{ij}=1)=\frac{4}{5},\quad\text{for }i\in G_2, j\in S_2.$$
Then, given the ground truth $y^*$, the data we observe is generated by
\begin{equation}
X_{ij}=T_{ij}y_j^*+(1-T_{ij})(1-y_j^*),\quad\text{for all }i\in[m],j\in[m].\label{eq:repres}
\end{equation}

The failure of Dawid-Skene is characterized by the following theorem.
\begin{thm} \label{thm:MLEbad}
Let $\hat{y}=y^{(t)}$ be the sequence of the projected EM algorithm. Let $n_1=\ceil{n/2}$ and $m_2=\ceil{m^{1/2}}$. As long as for sufficiently large $n,m$, $\frac{n\log m}{m}$ is sufficiently small, we have,
$$\mathbb{P}\left(\frac{1}{m}\sum_{j\in[m]}|\hat{y}_j-y_j^*|\geq\frac{1}{8}m^{-1/2}\right)\geq 0.3.$$
\end{thm}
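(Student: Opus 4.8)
The plan is to exploit the symmetry of the misspecified model together with the structure of the Dawid-Skene objective $F(p,y)$. Under the given two-type model with $n_1=\ceil{n/2}$ workers in $G_1$ and $n_2=n-n_1$ in $G_2$, and $m_2=\ceil{m^{1/2}}$ Type-II items, the key observation is that a Dawid-Skene worker is modeled by a single coin $p_i$, but the true behavior of a worker in $G_1$ is a \emph{mixture}: accuracy $4/5$ on $S_1$ and accuracy $1/2$ on $S_2$. First I would compute, for the true labels $y^*$, what the \emph{population} version of the M-step returns: feeding $y^{(0)}=y^*$ into (\ref{eq:M}) gives $p_i^{(t)}\to \frac{m_1}{m}\cdot\frac{4}{5}+\frac{m_2}{m}\cdot\frac{1}{2}$ for $i\in G_1$ (and symmetrically for $G_2$). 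Since $m_2/m=O(m^{-1/2})\to 0$, both groups are assigned essentially the \emph{same} ability $\approx \frac{4}{5}-O(m^{-1/2})$, so the model cannot tell the two worker groups apart, and on the $m_2$ Type-II items the $G_1$ workers supply pure noise while being treated as experts.

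The heart of the argument is to show that the global maximizer $\hat y$ of $F$ does strictly better, on $F$-value, by \emph{flipping} the labels of a positive fraction of the Type-II items. Concretely, I would restrict attention to the $m_2$ Type-II items and compare two candidate configurations: $y^*$ itself, versus the configuration $y'$ obtained from $y^*$ by flipping those Type-II items $j$ on which the $G_2$-experts (who see $X_{ij}$ with accuracy $4/5$) happen to disagree with $y^*_j$ — an event of probability $\Phi(-1)$-ish magnitude by a CLT, since on such an item the $G_2$ block is itself a biased-coin vote of $n_2\asymp n/2$ workers with bias $3/5$, but also the $G_1$ block (accuracy exactly $1/2$) contributes a mean-zero fluctuation of order $\sqrt{n}$, so the total vote on a Type-II item has signal $\Theta(n)\cdot(m_2/m)$-irrelevant... more precisely the signal from $G_2$ is $\Theta(n)$ but it is competing inside $F$ against the \emph{collective} fitted abilities which weight $G_1$ and $G_2$ symmetrically. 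I would make this precise by writing $F(p,y)-F(p,y^*)$ as a sum over flipped Type-II items of a term of the form $\sum_{i}(2X_{ij}-1)(2y_j-1)\log\frac{p_i}{1-p_i}$, plugging in the near-common fitted value of $\log\frac{p_i}{1-p_i}$, and observing that because the $G_1$ contributions to item $j$ are symmetric Bernoulli($1/2$) noise summing to a $\sqrt{n_1}$-scale Gaussian, the sign of this bracket is governed by a Gaussian with mean $\Theta(n_2)$ but standard deviation $\Theta(\sqrt n)$ — hence with probability bounded below by an absolute constant the bracket has the "wrong'' sign on a Type-II item, so flipping it strictly increases $F$. By independence across items and a binomial/Chernoff bound, with probability at least $0.3$ a fraction at least (say) $1/4$ of the $m_2$ Type-II items get flipped by $\hat y$, giving $\frac{1}{m}\sum_j|\hat y_j-y^*_j|\geq \frac{1}{4}\cdot\frac{m_2}{m}\geq \frac{1}{8}m^{-1/2}$.

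To run this rigorously I would proceed in the following steps: (i) a concentration step showing that for \emph{any} $y$ within labeling distance $o(1)$ of $y^*$ (in particular for the global optimizer, which by the unconstrained analogue of Theorem \ref{thm:cluster}-type reasoning — or directly — cannot be far from $y^*$ or $1-y^*$, and the constraint-free symmetry is handled by working with $\min$ over the two), the fitted abilities $\hat p_i$ are within $O(\sqrt{\log m/m}\,)+O(m^{-1/2})$ of the common value $\tfrac45\cdot\tfrac{m_1}{m}+\tfrac12\cdot\tfrac{m_2}{m}$, uniformly over $i$; this is where the hypothesis that $\frac{n\log m}{m}$ is small enters, to control $\|\hat p - \text{(population M-step)}\|_\infty$ via $\hat M_i$ concentration; (ii) a first-order optimality / exchange argument: if on Type-II item $j$ the quantity $\sum_i (2X_{ij}-1)(2\hat y_j-1)\log\frac{\hat p_i}{1-\hat p_i}<0$ then $\hat y_j$ is not the pointwise-optimal soft label given $\hat p$, contradicting that $(\hat p,\hat y)$ jointly maximize $F$ (recall $F$ is concave in $y$ with the entropy term, so the optimal $\hat y_j$ given $\hat p$ is the Bayes posterior, and a strictly negative bracket pushes $\hat y_j$ across $1/2$); (iii) a CLT/anti-concentration step: conditional on the $G_2$-block answers on item $j$, the $G_1$-block contributes $\sum_{i\in G_1}(2X_{ij}-1)$, a sum of $n_1$ i.i.d. Rademacher variables, so the sign of the bracket is Gaussian-like with a variance that \emph{dominates} when we account for the fact that $G_2$'s contribution, though of mean $\Theta(n_2)$, is multiplied against log-odds that are identical for all workers — wait, this is the subtle point I would need to get right — the resolution is that it is \emph{not} the raw vote but the $F$-comparison, and the $m_2$ Type-II items are labeled using predominantly the $m_1/m$-weighted consensus; since $m_2\ll m_1$, essentially the global optimizer has no incentive tied to Type-II-item accuracy and the Type-II labels are decided by a near-coin-flip, giving the $\Phi(-1)$-type constant; (iv) independence across the $m_2$ items plus a binomial tail bound to convert "each Type-II item is mislabeled with probability $\geq c$'' into "$\geq m_2/4$ are mislabeled with probability $\geq 0.3$''.

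The main obstacle is step (iii): making the anti-concentration / constant-probability-of-wrong-sign claim fully rigorous requires carefully separating, on a given Type-II item, the deterministic bias coming from the $G_2$ experts (order $n_2\log\frac{\hat p}{1-\hat p}$, i.e. $\Theta(n)$) from the noise coming from the $G_1$ workers (order $\sqrt{n_1}\log\frac{\hat p}{1-\hat p}$), and then arguing that these enter the \emph{global}-$F$ comparison not at face value but re-weighted in a way that kills the $G_2$ signal — the honest mechanism is that flipping a Type-II item also changes the fitted $\hat p_i$ for $i\in G_1$ by $\Theta(1/m)$ per item and, summed appropriately, it is this feedback through the shared parameters that makes the symmetric (wrong) configuration a genuine local — indeed global — maximum of $F$. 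Pinning down this feedback calculation, i.e. showing $\partial F/\partial(\text{Type-II labels})$ has a sign-indefinite Gaussian component of the right order after accounting for the induced change in $\hat p$, is the technical crux; everything else (concentration of $\hat M_i$, the $F$-is-concave-in-$y$ Bayes-rule characterization, the binomial tail) is routine.
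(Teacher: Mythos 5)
There is a genuine error at the core of your proposal. You claim that feeding $y^*$ into the M-step gives \emph{both} worker groups essentially the same fitted ability $\approx \tfrac45 - O(m^{-1/2})$, so that ``the model cannot tell the two worker groups apart.'' The computation for $G_2$ is wrong: a worker in $G_2$ has accuracy $\tfrac12$ on the $m_1 = m - \ceil{m^{1/2}}$ Type-I items and $\tfrac45$ only on the $m_2=\ceil{m^{1/2}}$ Type-II items, so the population M-step returns $\tfrac{m_1}{m}\cdot\tfrac12+\tfrac{m_2}{m}\cdot\tfrac45 = \tfrac12+O(m^{-1/2})$, not $\tfrac45$. This is precisely the mechanism that makes the theorem true, and it is what the paper's proof uses: via the stationarity identity (\ref{eq:handT}) and Hoeffding, $\hat p_i$ concentrates at $\approx\tfrac45$ for $i\in G_1$ and at $\approx\tfrac12$ for $i\in G_2$, hence $\bigl|\log\tfrac{\hat p_i}{1-\hat p_i}\bigr|\lesssim \sqrt{\log m/m}$ for $i\in G_2$. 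On a Type-II item the experts' $\Theta(n)$ vote is therefore multiplied by a near-zero log-odds weight, contributing only $O\bigl(n\sqrt{\log m/m}\bigr)$, while the $G_1$ workers' pure-noise Rademacher sum carries weight $\log 4$ and has scale $\sqrt{n}$. The hypothesis that $n\log m/m$ is small enters exactly here, to make $n\sqrt{\log m/m}\ll\sqrt{n}$, so each Type-II item is mislabeled with probability close to $\tfrac12$ by Berry--Esseen; a Hoeffding bound over the $m_2$ items then gives the stated constant.

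Because of the miscalculation, your step (iii) becomes an apparent paradox (a $\Theta(n)$ expert signal versus $\Theta(\sqrt n)$ noise), and the resolution you propose --- feedback through the shared parameters when Type-II labels are flipped --- does not work: flipping all $m_2$ Type-II labels perturbs each $\hat p_i$ by only $O(m_2/m)=O(m^{-1/2})$, hence perturbs each per-item bracket $\sum_i(2X_{ij}-1)\log\tfrac{\hat p_i}{1-\hat p_i}$ by $O(nm^{-1/2})$, which cannot cancel a genuine $\Theta(n)$ mean. The rest of your skeleton --- concentration of $\hat p$ around the population M-step value, the Bayes-posterior characterization of $\hat y_j$ given $\hat p$, per-item anti-concentration, and a binomial tail bound over the $m_2$ items --- matches the paper's proof; the missing ingredient is the correct identification of the fitted ability of the $G_2$ workers.
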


Theorem \ref{thm:MLEbad} says that the error rate of Dawid-Skene cannot converge faster than a polynomial of $m$. An exponential rate is impossible for Dawid-Skene in this case. In contrast, majority voting converges exponentially fast.

\begin{thm} \label{thm:votinggood}
Let $\hat{y}$ be majority voting. In the same setting of Theorem \ref{thm:MLEbad}, we have for any $y^*\in\{0,1\}^m$,
$$\frac{1}{m}\sum_{j\in[m]}|\hat{y}_j- y_j^*|\leq \exp\Bigg(-\frac{1}{25}n\Bigg),$$
with probability at least $1-e^{-n/200}$.
\end{thm}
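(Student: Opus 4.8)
The plan is to show that majority voting succeeds here because, despite the model being misspecified, on each item the expected fraction of correct worker labels is bounded strictly above $1/2$, and a concentration argument then controls the per-item failure probability. First I would fix an item $j \in [m]$ and compute $\mathbb{E}\big[\frac{1}{n}\sum_{i\in[n]} T_{ij}\big]$. With $n_1 = \ceil{n/2}$ and $m_2 = \ceil{m^{1/2}}$, for a Type-I item ($j \in S_1$) the $n_1$ experts each have success probability $4/5$ and the $n_2$ non-experts have $1/2$, so the mean fraction correct is $\frac{n_1}{n}\cdot\frac{4}{5} + \frac{n_2}{n}\cdot\frac{1}{2}$, which since $n_1/n \approx 1/2$ equals roughly $\frac{1}{2}\cdot\frac{4}{5} + \frac{1}{2}\cdot\frac{1}{2} = \frac{13}{20} > \frac{1}{2}$; the same holds by symmetry for Type-II items. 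The key point is the gap $\frac{1}{n}\sum_i \mathbb{E} T_{ij} - \frac{1}{2} \geq c_0$ for an absolute constant $c_0$ (something like $1/10$ after accounting for the ceiling roundings, which only shift $n_1/n$ by $O(1/n)$), uniformly over all items $j$.

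Next I would apply Hoeffding's inequality to the sum $\sum_{i\in[n]} T_{ij}$ of independent (given $y^*$) Bernoulli variables: majority voting makes an error on item $j$ exactly when $\sum_i X_{ij}$ falls on the wrong side of $n/2$ relative to $y_j^*$, which by the representation \eqref{eq:repres} is equivalent to $\sum_i T_{ij} < n/2$. Hoeffding gives
$$
\mathbb{P}\Big(\hat{y}_j \neq y_j^*\Big) = \mathbb{P}\Big(\sum_{i\in[n]} T_{ij} < \frac{n}{2}\Big) \leq \exp\big(-2 n c_0^2\big),
$$
so with a concrete constant (choosing $c_0$ so that $2c_0^2 \geq 1/25$, i.e. $c_0 \geq 1/\sqrt{50}$, which the $13/20$ computation comfortably supplies) each item fails with probability at most $\exp(-n/25)$.

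Finally, to convert these per-item bounds into a bound on $\frac{1}{m}\sum_j |\hat{y}_j - y_j^*|$ holding with high probability uniformly, I would note $\frac{1}{m}\sum_j |\hat{y}_j - y_j^*| = \frac{1}{m}\sum_j \mathbb{I}\{\hat{y}_j \neq y_j^*\}$, a bounded sum of independent indicators (independence across $j$ because items are labeled independently), with mean at most $\exp(-n/25)$. A second application of a multiplicative Chernoff / Bernstein bound shows that this average exceeds, say, $2\exp(-n/25) \leq \exp(-n/25)$ — after adjusting constants — with probability at most $e^{-n/200}$; this is where the $e^{-n/200}$ tail and the final $\exp(-n/25)$ rate in the statement come from. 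The main obstacle is purely bookkeeping: tracking the ceiling perturbations in $n_1, m_1, m_2$ so that the absolute constant $c_0$ is genuinely bounded away from zero for every item type, and then choosing the Chernoff parameters so the two exponents $1/25$ and $1/200$ come out as stated; there is no conceptual difficulty, since the misspecification is benign in the sense that it still leaves every worker group helpful (never adversarial) on every item.
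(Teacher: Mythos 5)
Your first two steps are exactly the paper's: compute the per-item mean fraction of correct labels, $\tfrac{n_1}{n}\cdot\tfrac45+\tfrac{n_2}{n}\cdot\tfrac12\approx\tfrac{13}{20}$, and apply Hoeffding to get a per-item error probability of $\exp(-2n(3/20)^2)=\exp(-9n/200)$. The gap is in your final aggregation step, and it is a real one, not bookkeeping. As written, the inequality ``$2\exp(-n/25)\leq\exp(-n/25)$'' is false, and no adjustment of constants rescues the underlying logic: once you have rounded the per-item bound down to $\exp(-n/25)$, the mean of $\frac1m\sum_j\mathbb{I}\{\hat y_j\neq y_j^*\}$ is only known to be at most $\exp(-n/25)$, which is the very quantity you are trying to beat, so no concentration argument around that mean can show the average falls \emph{below} $\exp(-n/25)$ with high probability. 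The slack you threw away — $9/200$ versus $8/200=1/25$ in the exponent — is precisely what makes the theorem work.

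The paper's fix is to keep $\mathbb{E}\eta_j\leq\exp(-9n/200)$ and apply plain Markov: $\mathbb{P}\bigl(\frac1m\sum_j\eta_j>e^{-n/25}\bigr)\leq e^{n/25}\cdot e^{-9n/200}=e^{-n/200}$. This is both simpler than a second Chernoff bound and, importantly, uniform in $m$. Your multiplicative Chernoff route has an additional problem on that front: its failure probability depends on $m\mu$, and Theorem \ref{thm:MLEbad}'s setting only constrains $\frac{n\log m}{m}$ to be small, so $m$ may range from polynomial to exponential in $n$; in the regime $m\lesssim e^{n/25}$ the deviation event degenerates to $\{\sum_j\eta_j\geq1\}$ and you are back to a union bound, while for very large $m$ the Chernoff exponent is governed by $m e^{-n/25}$ rather than $n$. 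One can patch each regime separately, but Markov with the retained $9/200$ exponent does the whole job in one line. Your independence-across-items observation is correct but unnecessary for the Markov argument.
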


The purpose of this example is to show that majority voting is less sensitive to the model assumption. In the literature of crowdsourcing, there are some extensions of the Dawid-Skene model. For example, \cite{zhoplaby12} and \cite{LiuPenIhl12}. However, the example constructed here is not included in any of the extension. Though  using a more general confusion matrix is helpful to model items from more than one types, the main difficulty, however, is that we do not know the type of workers, either. In order to get a good labeling of the items, we have to estimate the workers' types as well. One possible solution is to introduce another set of latent variables $\{z_i^*\}_{i\in[n]}\in\{0,1\}^n$ for the workers to indicate their expertise. Then in the E-step of the EM or projected EM algorithm, we need to update both $y^{(t)}$, the item label estimator, and $z^{(t)}$, the worker type estimator. Whether such generalization has any theoretical guarantee is an open problem and will be considered in the future research.

\section{Discussion} \label{sec:disc}

\subsection{Related Work and Future Directions}

The work of Dawid and Skene \cite{DawSke79} laid a solid foundation in the field of crowdsourcing.  Extensions of the framework under a Bayesian setting were investigated by \cite{RayYuZha10,LiuPenIhl12,CheLinZho13}.

The model of Dawid and Skene implicitly assumes that a worker performs equally well across all items in a common class.  In practice, however, it is often the case that one item is more difficult to label than another. To address this heterogeneous issue, Zhou et al. \cite{zhoplaby12} propose a minimax entropy principle for crowdsourcing. The observed labels are modeled jointly by the worker confusion matrices and item confusion vectors through an exponential family model. Moreover, it turns out that the probabilistic model can be equivalently derived from a natural assumption of objective measurements of worker ability and item difficulty. Such  objectivity arguments have been widely discussed in the literature of mental test theory \cite{Ras61,LorNov68}.

Though the framework of Dawid and Skene has been widely used and well extended in crowdsourcing on the algorithmic side, there has been no theoretical work addressing convergence and optimality issues under the Dawid-Skene setting. To the best of our knowledge, the only exception is the work of Karger et al. \cite{karger14}. They proposed a belief propagation algorithm using a Haldane prior\footnote{A Haldane prior assumes each worker's ability is either $1/2$ or $1$ with equal probabilities.} on workers' abilities  and derive its rate of convergence under the one-coin model. They essentially reveal the exponent $-n\bar{v}$ in the rate of convergence under the assumption that workers' abilities are bounded, while in our work both $-n\bar{v}$ and $-nD(\bar{\mu}||1-\bar{\mu})$ play important roles on the exponent in various regimes. In addition, they consider the question of task assignment which is not addressed in our paper.

In addition to showing that Dawid-Skene estimator achieves the minimax rate, we are also interested in studying whether its generalization also shares  optimality. For example, what if we consider both worker confusion matrices and item confusion vectors \cite{Ras61,zhoplaby12}. The technique used in this paper cannot be directly extended to that setting. We will consider this harder problem in our future work.

\subsection{The Classical EM Algorithm} \label{sec:EM}

We have shown in this paper that the projected EM  enjoys nearly optimal exponential convergence rate. It is curious whether the classical EM without the projection step also has such statistical property. When the workers' abilities are  bounded away from $0$ and $1$, this is indeed the case. The result is implied from the following observation of the two algorithms.

\begin{thm} \label{thm:EM}
Let $y^{(t)}$ be the sequence of the projected EM algorithm and let $\check{y}^{(t)}$ be the sequence of the EM algorithm. Both use the proposed initialization step. Under the assumptions of Theorem \ref{thm:pEM} and further assume that
\begin{equation}
p_i^*\in\Bigg[\lambda+2\sqrt{\frac{\log m}{m}},1-\lambda-2\sqrt{\frac{\log m}{m}}\Bigg],\quad\text{for all }i\in[n]. \label{eq:boundedability}
\end{equation}
Then we have
$$y^{(t)}=\check{y}^{(t)},\quad \text{for all }t\geq 1,$$
with probability at least $1-C'/m$.
\end{thm}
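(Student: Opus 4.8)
The starting observation is that the projected EM recursion and the classical EM recursion differ \emph{only} in the M-step: projected EM applies $\Pi_{[\lambda,1-\lambda]}$ to the quantity in (\ref{eq:M}), whereas EM does not; the E-step is identical and is a deterministic function of the current $p$-iterate. Consequently, if on a high-probability event every unprojected M-step value lies in $[\lambda,1-\lambda]^n$, then $\Pi_{[\lambda,1-\lambda]}$ acts as the identity and the two sequences coincide. I would prove this by induction on $t\geq1$. Both algorithms are launched from the same $y^{(0)}$, so $\check{y}^{(0)}=y^{(0)}$. Assuming $\check{y}^{(t-1)}=y^{(t-1)}$, let $\check{p}_i^{(t)}=\frac1m\sum_{j\in[m]}\big((1-X_{ij})(1-y_j^{(t-1)})+X_{ij}y_j^{(t-1)}\big)$ denote the common unprojected M-step value; it suffices to show $\check{p}_i^{(t)}\in[\lambda,1-\lambda]$ for every $i\in[n]$, since then $p_i^{(t)}=\check{p}_i^{(t)}$, both algorithms feed the same $p$-iterate into the same E-step, and $y^{(t)}=\check{y}^{(t)}$.

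To localize $\check{p}_i^{(t)}$, set $r_{t-1}=\min\{\frac1m\sum_j|y_j^{(t-1)}-y_j^*|,\ \frac1m\sum_j|y_j^{(t-1)}-(1-y_j^*)|\}$ and let $z^{(t-1)}\in\{y^*,1-y^*\}$ attain this minimum. The map $u\mapsto(1-X_{ij})(1-u)+X_{ij}u$ is affine with slope $2X_{ij}-1\in\{-1,1\}$ and takes the value $\mathbb{I}\{X_{ij}=z_j^{(t-1)}\}$ at $u=z_j^{(t-1)}\in\{0,1\}$, so $|\check{p}_i^{(t)}-\frac1m\sum_j\mathbb{I}\{X_{ij}=z_j^{(t-1)}\}|\leq\frac1m\sum_j|y_j^{(t-1)}-z_j^{(t-1)}|=r_{t-1}$. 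Conditionally on $(p^*,y^*)$, the variables $\mathbb{I}\{X_{ij}=z_j^{(t-1)}\}$, $j\in[m]$, are independent Bernoulli with common mean $p_i^*$ or $1-p_i^*$, so Hoeffding's inequality with a union bound over the $n$ workers and the two choices of $z^{(t-1)}$ shows that, with probability at least $1-C'/m$ (using $n\leq m$, which follows from $n^2\log m\leq m$), $|\frac1m\sum_j\mathbb{I}\{X_{ij}=z_j^{(t-1)}\}-(p_i^*\text{ or }1-p_i^*)|\leq\sqrt{\log m/m}$ for all $i$. Hence $\min\{|\check{p}_i^{(t)}-p_i^*|,|\check{p}_i^{(t)}-(1-p_i^*)|\}\leq\sqrt{\log m/m}+r_{t-1}$; since hypothesis (\ref{eq:boundedability}) is symmetric under $p_i^*\mapsto1-p_i^*$, both $p_i^*$ and $1-p_i^*$ lie in $[\lambda+2\sqrt{\log m/m},\,1-\lambda-2\sqrt{\log m/m}]$, and therefore $\check{p}_i^{(t)}\in[\lambda,1-\lambda]$ as soon as $r_{t-1}\leq\sqrt{\log m/m}$. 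For $t\geq2$ this is immediate from Theorem \ref{thm:pEM}, which gives $r_{t-1}\leq\exp(-\tfrac{1}{2}n\bar{\nu})\leq\exp(-\tfrac{1}{2}\log(nm))\leq m^{-1/2}\leq\sqrt{\log m/m}$, using $n\bar{\nu}\geq\log m+\log n$.

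The base case $t=1$ is the crux: here $r_0$ is the labeling error of the proposed initializer $y^{(0)}$, and one genuinely needs $r_0\leq\sqrt{\log m/m}$ rather than just the final exponential rate. This is supplied by the accuracy of the method-of-moments initializer: because $|2\pi-1|\geq c$ and $\bar{\nu}$ is bounded below, the estimates $p^{(0)}$ are within $O(\sqrt{\log m/m})$ of $p^*$ up to flipping, so $y^{(0)}$ is a near-optimally weighted vote and the initialization analysis underlying Theorem \ref{thm:pEM} (the supplementary bound used to check the hypothesis of Theorem \ref{thm:pEMgen}) yields $r_0\leq\sqrt{\log m/m}$ under $\bar{\nu}\geq(\log m+\log n)/n$ --- this is precisely where that lower bound on $\bar{\nu}$ and the $2\sqrt{\log m/m}$ margin in (\ref{eq:boundedability}) get consumed. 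The rest is bookkeeping: intersect the Hoeffding concentration event, the event of Theorem \ref{thm:pEM} (which is uniform in $t$), and the initializer event into a single event of probability at least $1-C'/m$; on that event the induction runs for all $t\geq1$ and gives $y^{(t)}=\check{y}^{(t)}$.
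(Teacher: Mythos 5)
Your proposal is correct and follows essentially the same route as the paper: the paper also reduces the claim to showing that the unprojected M-step value $\check{p}_i^{(t)}$ lies in $[\lambda,1-\lambda]$ for every $i$ and $t$, which it gets from the bound $|\check{p}_i^{(t)}-p_i^*|\leq|\frac{1}{m}\sum_j(T_{ij}-p_i^*)|+r^{(t-1)}\leq 2\sqrt{\log m/m}$ on the intersection of the Hoeffding concentration event, the initialization event $\{r^{(0)}\leq\sqrt{\log m/m}\}$, and the event where the iterates keep $r^{(t)}\leq\sqrt{\log m/m}$, after which the margin in (\ref{eq:boundedability}) makes the projection the identity. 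Your explicit induction and the handling of the label flip via the symmetry of (\ref{eq:boundedability}) under $p_i^*\mapsto 1-p_i^*$ are just slightly more spelled-out versions of what the paper does implicitly.
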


Theorem \ref{thm:EM} says when the workers' abilities are not at extreme, the projected EM and the EM have the same iterations. Since the assumption (\ref{eq:boundedability}) is usually satisfied for real data where workers' abilities are low (close to $1/2$), it explains why in practice  the EM behaves well.

\appendix

\section{Proofs of Theorem \ref{thm:pEM} and Theorem \ref{thm:EM}} \label{sec:proofpEM}

This section presents technical proofs related to the projected EM algorithm. In Section \ref{sec:genpEM}, we state and prove a result of error bounds for projected EM with general initializations satisfying a certain condition. Then, in Section \ref{sec:inipEM}, we show that the initialization step proposed in Section \ref{sec:subpEM} satisfies the condition. The proof of Proposition \ref{prop:equationforpi}, which is the key of the proposed initialization step, is also given in Section \ref{sec:inipEM}. Finally, we prove Theorem \ref{thm:pEM} and Theorem \ref{thm:EM} as corollaries in Section \ref{sec:finalpEM}.

Let us first introduce some technical lemmas which will be used in the proof.

\begin{lemma}[Hoeffding's Inequality] \label{lem:hoeffding}
For independent bounded random variables $\{X_i\}_{i\in[n]}$ satisfying $X_i\in [a_i,b_i]$ for all $i\in[n]$, we have
$$\mathbb{P}\left(\left|\frac{1}{n}\sum_{i\in[n]}(X_i-\mathbb{E}X_i)\right|>t\right)\leq 2\exp\Bigg(-\frac{2n^2t^2}{\sum_{i\in[n]}(b_i-a_i)^2}\Bigg),$$
for any $t\geq 0$.
\end{lemma}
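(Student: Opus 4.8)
The plan is to prove this standard concentration inequality by the classical Chernoff (exponential moment) method. First I would reduce to a one-sided statement: applying the bound to $\{-X_i\}_{i\in[n]}$ and combining via a union bound, it suffices to establish $\mathbb{P}\big(\frac1n\sum_{i\in[n]}(X_i-\mathbb{E}X_i)>t\big)\leq\exp\big(-2n^2t^2/\sum_{i\in[n]}(b_i-a_i)^2\big)$ for $t>0$ (the case $t=0$ being trivial, since the right-hand side of the lemma is then $2\geq1$), and the lower-tail bound is handled identically.

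For the one-sided bound, fix $s>0$ and apply Markov's inequality to the nonnegative random variable $\exp\big(s\sum_{i\in[n]}(X_i-\mathbb{E}X_i)\big)$. Using independence to factor the moment generating function, this yields $\mathbb{P}\big(\sum_{i\in[n]}(X_i-\mathbb{E}X_i)>nt\big)\leq e^{-snt}\prod_{i\in[n]}\mathbb{E}\,e^{s(X_i-\mathbb{E}X_i)}$.

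The heart of the argument — and the only step that is not a mechanical computation — is Hoeffding's lemma: if $Y$ has mean zero and $Y\in[a,b]$ almost surely, then $\mathbb{E}\,e^{sY}\leq e^{s^2(b-a)^2/8}$. I would prove it by analysing $\psi(s)=\log\mathbb{E}\,e^{sY}$: one has $\psi(0)=0$, $\psi'(0)=\mathbb{E}Y=0$, and $\psi''(s)$ equals the variance of $Y$ under the exponentially tilted law $d\mathbb{Q}_s\propto e^{sy}\,d\mathbb{P}_Y$, which is again supported on $[a,b]$ and hence has variance at most $(b-a)^2/4$ (the variance of any $[a,b]$-valued random variable is maximized by the endpoint two-point distribution). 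Taylor's theorem with remainder then gives $\psi(s)\leq s^2(b-a)^2/8$. An equivalent, tilting-free route is to use convexity of $x\mapsto e^{sx}$ to bound $e^{sy}\leq\frac{b-y}{b-a}e^{sa}+\frac{y-a}{b-a}e^{sb}$ for $y\in[a,b]$, take expectations using $\mathbb{E}Y=0$, and Taylor-expand the resulting explicit function of $s$.

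Applying Hoeffding's lemma to each factor $Y=X_i-\mathbb{E}X_i$, which lies in an interval of length $b_i-a_i$, gives $\mathbb{P}\big(\sum_{i\in[n]}(X_i-\mathbb{E}X_i)>nt\big)\leq\exp\big(-snt+\tfrac{s^2}{8}\sum_{i\in[n]}(b_i-a_i)^2\big)$. Minimizing the exponent over $s>0$, at $s=4nt/\sum_{i\in[n]}(b_i-a_i)^2$, produces the exponent $-2n^2t^2/\sum_{i\in[n]}(b_i-a_i)^2$; adding the symmetric lower-tail estimate contributes the factor $2$ and finishes the proof. I expect the proof of Hoeffding's lemma to be the only genuinely substantive point; everything else is a routine Chernoff-and-optimize computation.
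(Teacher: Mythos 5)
Your proof is correct: the reduction to a one-sided bound, the Chernoff exponential-moment step, Hoeffding's lemma $\mathbb{E}\,e^{sY}\leq e^{s^2(b-a)^2/8}$ via the tilted-measure variance bound, and the optimization at $s=4nt/\sum_{i\in[n]}(b_i-a_i)^2$ all check out and yield exactly the stated exponent. The paper itself states this lemma as a standard auxiliary tool and gives no proof, so there is nothing to compare against; your argument is the canonical one and would serve as a complete proof if one were required.
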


For a sub-exponential random variable $X$, define its sub-exponential norm as
$$\Norm{X}_{\psi_1}=\sup_{j\geq 1}j^{-1}\big(\mathbb{E}|X|^j\big)^{1/j}.$$
The following version of Bernstein's inequality is due to \cite{Vershynin10}.
\begin{lemma}[Bernstein's Inequality] \label{lem:bernstein}
Let $X_1,...,X_n$ be independent centered random variables with $\max_{1\leq i\leq n}\Norm{X_i}_{\psi_1}\leq K$. Then, there exists $C>0$ such that
$$\mathbb{P}\left(\left|\sum_{i=1}^na_iX_i\right|>t\right)\leq 2\exp\Bigg(-C\min\left\{\frac{t^2}{K^2\Norm{a}^2},\frac{t}{K\Norm{a}_{\infty}}\right\}\Bigg),$$
for any $(a_1,...,a_n)\in\mathbb{R}^{n}$ and $t>0$.
\end{lemma}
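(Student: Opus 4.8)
The plan is to run the standard Chernoff / moment-generating-function argument, which for sub-exponential variables splits into a sub-Gaussian regime governing moderate deviations and a genuinely sub-exponential tail for large deviations. First I would reduce to a one-sided bound: since $\Norm{-X_i}_{\psi_1}=\Norm{X_i}_{\psi_1}$ and both $\Norm{a}$ and $\Norm{a}_{\infty}$ are invariant under $a_i\mapsto -a_i$, it suffices to control $\mathbb{P}(\sum_i a_iX_i>t)$; applying the same estimate to $-\sum_i a_iX_i$ and taking a union bound recovers the two-sided statement and accounts for the leading factor $2$.

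The crux of the first half is bounding the moment generating function of a single centered term. From $\Norm{X_i}_{\psi_1}\leq K$ the definition yields the moment bound $\mathbb{E}|X_i|^j\leq (Kj)^j$ for every $j\geq 1$. Expanding $\mathbb{E} e^{sX_i}=1+\sum_{j\geq 2}s^j\mathbb{E}[X_i^j]/j!$, where the linear term drops out because $X_i$ is centered, bounding $|\mathbb{E}[X_i^j]|\leq (Kj)^j$, and using $j!\geq (j/e)^j$ gives $|s|^j(Kj)^j/j!\leq (eK|s|)^j$. Summing this geometric series on the range $eK|s|\leq 1/2$ produces
$$\mathbb{E} e^{sX_i}\leq 1+2e^2K^2s^2\leq \exp\big(2e^2K^2s^2\big),\qquad |s|\leq \tfrac{1}{2eK},$$
so each term is locally sub-Gaussian with variance proxy of order $K^2$.

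Next I would assemble the product. By independence, whenever $|\lambda|\,\Norm{a}_{\infty}\leq 1/(2eK)$ --- which forces $|\lambda a_i|\leq 1/(2eK)$ for every $i$ so that the single-term bound applies --- one gets
$$\mathbb{E}\exp\Big(\lambda\sum_i a_iX_i\Big)=\prod_i\mathbb{E} e^{\lambda a_iX_i}\leq \exp\Big(2e^2K^2\lambda^2\Norm{a}^2\Big).$$
Markov's inequality then yields $\mathbb{P}(\sum_i a_iX_i>t)\leq \exp(-\lambda t+2e^2K^2\lambda^2\Norm{a}^2)$ for every admissible $\lambda>0$.

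The main obstacle, and the step that manufactures the $\min$ in the statement, is the constrained optimization over $\lambda\in(0,\,1/(2eK\Norm{a}_{\infty})]$. The unconstrained minimizer of the exponent is $\lambda^*=t/(4e^2K^2\Norm{a}^2)$. When $\lambda^*$ is feasible (the moderate-deviation regime) I would substitute it to obtain an exponent of order $-t^2/(K^2\Norm{a}^2)$. When $\lambda^*$ overshoots the boundary (the large-deviation regime $t\gtrsim K\Norm{a}^2/\Norm{a}_{\infty}$) I would instead take $\lambda$ at the boundary value; there the quadratic term is at most half the linear term, so the exponent is of order $-t/(K\Norm{a}_{\infty})$. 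Choosing $C>0$ small enough to dominate both cases collapses the two estimates into $\exp\big(-C\min\{t^2/(K^2\Norm{a}^2),\,t/(K\Norm{a}_{\infty})\}\big)$, and combining with the one-sided reduction from the first paragraph finishes the argument.
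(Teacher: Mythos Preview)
The paper does not supply its own proof of this lemma; it simply cites Vershynin's lecture notes, where the argument is exactly the Chernoff method you outline: bound the single-term MGF via the moment growth $(Kj)^j$, multiply out by independence, and optimize $\lambda$ subject to the boundary constraint $|\lambda|\,\Norm{a}_\infty\le 1/(2eK)$ to obtain the two regimes. Your execution is correct, including the verification that in the large-deviation regime the quadratic term is dominated by half the linear term at the boundary value of $\lambda$.
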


\begin{proposition} \label{prop:log}
For any $x,y>0$, we have $|\log x-\log y|\leq \frac{|x-y|}{\min (x,y)}$.
\end{proposition}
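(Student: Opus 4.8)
The plan is to reduce the inequality to the elementary bound $\log u \le u - 1$, valid for all $u > 0$, which expresses that the graph of $\log$ lies below its tangent line at $u = 1$. First I would dispose of the trivial case $x = y$, where both sides vanish. Since both sides of the claimed inequality are symmetric under interchanging $x$ and $y$, I may then assume $x > y > 0$, so that $\min(x,y) = y$, $|x-y| = x - y$, and $|\log x - \log y| = \log(x/y)$. Setting $u = x/y > 1$, the assertion becomes $\log u \le u - 1$, which I would establish by noting that $g(u) = u - 1 - \log u$ satisfies $g(1) = 0$ and $g'(u) = 1 - 1/u \ge 0$ for $u \ge 1$, so $g$ is nondecreasing on $[1,\infty)$ and hence nonnegative there; rearranging gives $\log x - \log y = \log u \le u - 1 = (x-y)/y = |x-y|/\min(x,y)$.

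A cleaner alternative avoids any case analysis: apply the mean value theorem to $\log$ on the interval with endpoints $x$ and $y$, producing a point $\xi$ strictly between them with $\log x - \log y = (x-y)/\xi$. Since $\xi \ge \min(x,y)$, one immediately obtains $|\log x - \log y| = |x-y|/\xi \le |x-y|/\min(x,y)$. There is no genuine obstacle here; the only point requiring a moment's care is getting the division by $\min(x,y)$ (rather than $\max(x,y)$) on the correct side, and the mean value theorem formulation makes this automatic.
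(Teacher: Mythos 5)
Your first argument is correct and is essentially the paper's own proof: the paper reduces to the case $x\geq y$ and applies $\log\bigl(1+\frac{x-y}{y}\bigr)\leq \frac{x-y}{y}$, which is exactly your bound $\log u\leq u-1$ with $u=x/y$. The mean value theorem alternative you mention is also valid, but the primary route matches the paper.
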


We also define $T_{ij}$, which will be very useful in the proof. Observe that under the model assumption, the observation can be represented as
\begin{equation}
X_{ij} = y_j^*T_{ij} + (1-y_j^*)(1-T_{ij}),\quad \text{for all }i\in[n], j\in[m], \label{eq:representation}
\end{equation}
where $T_{ij}$ is a Bernoulli random variable with parameter $p_i^*$ and is independent across $i$ and $j$. Notice $T_{ij}$ has the meaning that the $i$-the worker correctly labels the $j$-th item.

\subsection{Result for General Initialization} \label{sec:genpEM}

The result of Theorem \ref{thm:pEM} is a special case of the following theorem, which uses a general initializer.
\begin{thm} \label{thm:pEMgen}
Let $y^{(t)}$ be the sequence of the projected EM with tuning parameter $\lambda$ satisfies (\ref{eq:asstuning}). Assume for sufficiently large $n$ and $m$, $n\leq m\leq e^n$, $\bar{\nu}\geq n^{-1}\log m$, and the initializer $y^{(0)}$ satisfies
\begin{equation}
\frac{1}{m}\sum_{j\in[m]}|y^{(0)}_j-y_j^*|\leq \sqrt{\frac{\log m}{m}}, \label{eq:initialbound}
\end{equation}
then for any $t\geq 1$ and any $y^*\in\{0,1\}^m$, we have
$$\frac{1}{m}\sum_{j\in[m]}|y_j^{(t)}-y_j^*|\leq \exp\Bigg(-\frac{1}{2}n\max\Big(\bar{\nu}, D(\bar{\mu}_{\lambda}||1-\bar{\mu}_{\lambda})\Big)\Bigg),$$
uniformly over all $t\geq 1$ with probability at least $1-C'/m$ for some positive constant $C'$.
\end{thm}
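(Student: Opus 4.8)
\begin{proofsketch}
The plan is to run an induction on the iteration index $t$, alternating a deterministic analysis of the M-step with a large-deviation analysis of the E-step, all carried out on one high-probability event that does not depend on $t$.

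\emph{M-step.} Using the representation $X_{ij}=y_j^*T_{ij}+(1-y_j^*)(1-T_{ij})$ with $T_{ij}\sim\mathrm{Bernoulli}(p_i^*)$ independent, the unprojected M-step average equals $\frac1m\sum_j T_{ij}+\frac1m\sum_j(1-2T_{ij})|y_j^{(t-1)}-y_j^*|$, so by nonexpansiveness of $\Pi_{\mathcal C}$,
$$\Big|p_i^{(t)}-\Pi_{\mathcal C}(p_i^*)\Big|\le \Big|\tfrac1m\textstyle\sum_j(T_{ij}-p_i^*)\Big|+\tfrac1m\textstyle\sum_j|y_j^{(t-1)}-y_j^*|.$$
On the Hoeffding event $\{\max_i|\frac1m\sum_j(T_{ij}-p_i^*)|\le\sqrt{2\log m/m}\}$ (probability $\ge 1-C'/m$ by Lemma~\ref{lem:hoeffding} and a union bound over $n\le m$ workers), and under the inductive hypothesis $\frac1m\sum_j|y_j^{(t-1)}-y_j^*|\le\sqrt{\log m/m}$, we get $\max_i|p_i^{(t)}-q_i|\le\varepsilon_m$ with $q_i:=\Pi_{\mathcal C}(p_i^*)$ and $\varepsilon_m=O(\sqrt{\log m/m})$, and hence, since $p_i^{(t)},q_i\in[\lambda,1-\lambda]$ (Proposition~\ref{prop:log}), $\max_i|w_i^{(t)}-w_i^*|\le 2\varepsilon_m/\lambda$, where $w_i^{(t)}=\log\frac{p_i^{(t)}}{1-p_i^{(t)}}$ and $w_i^*=\log\frac{q_i}{1-q_i}$. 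Note $w^*$ is deterministic. A symmetric computation handles the iterates close to $1-y^*$, but here the hypothesis (\ref{eq:initialbound}) puts $y^{(0)}$ on the correct side, so we stay near $y^*$.

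\emph{E-step.} Fix $j$ with $y_j^*=1$ (the case $y_j^*=0$ is symmetric with $-Z_j^*$ in place of $Z_j^*$). The E-step formulas give $1-y_j^{(t)}=\frac{e^{Z_j^{(t)}}}{1+e^{Z_j^{(t)}}}\le\min\{1,e^{Z_j^{(t)}}\}$ with $Z_j^{(t)}=\sum_i(1-2T_{ij})w_i^{(t)}$; writing $w^{(t)}=w^*+(w^{(t)}-w^*)$ and using $|w_i^{(t)}-w_i^*|\le 2\varepsilon_m/\lambda$ we get $Z_j^{(t)}\le Z_j^*+2n\varepsilon_m/\lambda$ with $Z_j^*=\sum_i(1-2T_{ij})w_i^*$, hence $1-y_j^{(t)}\le e^{2n\varepsilon_m/\lambda}\min\{1,e^{Z_j^*}\}$. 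The tuning range (\ref{eq:asstuning}) forces $\sqrt{\log m/m}/\lambda$ to be a small multiple of $\bar\nu$, so the factor $e^{2n\varepsilon_m/\lambda}$ costs only a fraction of $n\bar\nu$ in the exponent. It remains to bound $\frac1m\sum_{j:y_j^*=1}\min\{1,e^{Z_j^*}\}$, which depends on the data only through the $T_{ij}$'s and not on any iterate. Each summand lies in $[0,1]$, and for $s\in(0,1)$, $\mathbb E\min\{1,e^{Z_j^*}\}\le\frac1{1-s}\,\mathbb E e^{sZ_j^*}=\frac1{1-s}\prod_i\big(p_i^*e^{-sw_i^*}+(1-p_i^*)e^{sw_i^*}\big)$; choosing $s=\tfrac12$ and, in the high-ability regime, relating the product to $\exp(-n\cdot(\text{KL-type exponent}))$ through convexity of the Kullback--Leibler divergence exactly as in the proof of Lemma~\ref{lem:determ1}, one gets $\mathbb E\min\{1,e^{Z_j^*}\}\le C\exp\big(-c_0 n\max\{\bar\nu,D(\bar\mu_\lambda||1-\bar\mu_\lambda)\}\big)$. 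Since the variances are bounded by the means, Bernstein's inequality (Lemma~\ref{lem:bernstein}) across the $m$ independent items upgrades this to the same bound for the average $\frac1m\sum_j\min\{1,e^{Z_j^*}\}$ with probability $\ge 1-C'/m$, the $\log m$ fluctuation being absorbed by $\bar\nu\ge n^{-1}\log m$.

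\emph{Induction and conclusion.} On the intersection of the Hoeffding and Bernstein events (both $t$-independent since $w^*$ is deterministic), the two steps combine to give, whenever the inductive hypothesis holds at $t-1$, $\frac1m\sum_j|y_j^{(t)}-y_j^*|\le\exp(-\tfrac12 n\max\{\bar\nu,D(\bar\mu_\lambda||1-\bar\mu_\lambda)\})$; using $\bar\nu\ge n^{-1}\log m$ this is $\le m^{-1/2}\le\sqrt{\log m/m}$, which reproduces the hypothesis at step $t$, the base case $t=1$ being (\ref{eq:initialbound}). As the controlling event is common to all $t$, the bound holds uniformly over $t\ge 1$. I expect the E-step large-deviation analysis to be the main obstacle: one must extract the sharp exponent $\max\{\bar\nu,D(\bar\mu_\lambda||1-\bar\mu_\lambda)\}$ (rather than a weaker Chernoff-information surrogate) while keeping the perturbation $2n\varepsilon_m/\lambda$ and the concentration losses additive in the exponent, and one must verify that the dependence of $p^{(t)}$ on the whole data set is harmless --- which is exactly why the decomposition isolates the deterministic weights $w^*$ and why $\varepsilon_m/\lambda$ is kept small by the choice of $\lambda$.
\end{proofsketch}
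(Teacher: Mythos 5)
Your overall architecture --- induction on $t$ with a $t$-independent high-probability event, an M-step stability bound, and an E-step error bound --- matches the paper, and your M-step analysis is essentially the paper's Proposition \ref{prop:difflogp} (your nonexpansiveness argument for $\Pi_{\mathcal C}$ is a slightly cleaner route to the same inequality $|p_i^{(t)}-p^*_{\lambda,i}|\leq|\frac1m\sum_j(T_{ij}-p_i^*)|+r^{(t-1)}$). The gap is in the E-step. You bound $\mathbb{E}\min\{1,e^{Z_j^*}\}$ by the moment generating function $\prod_i(p_i^*e^{-sw_i^*}+(1-p_i^*)e^{sw_i^*})$ and then concentrate the average over $j$. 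This fails to deliver the stated exponent for two reasons. First, the MGF route caps the per-worker exponent at the Chernoff information: at $s=1$ the factor is exactly $1$ (so that choice is vacuous), and at your choice $s=1/2$ it is $2\sqrt{q_i(1-q_i)}$-type, i.e.\ exponent $-\log(2\sqrt{p(1-p)})$, which is strictly smaller than $D(p\|1-p)$ and even than $\tfrac12 D(p\|1-p)$ (at $p=0.9$ it is about $0.51$ versus $0.88$). Since the perturbation $2n\varepsilon_m/\lambda$ must still be subtracted, you cannot recover the constant $\tfrac12$ in $\exp(-\tfrac12 n\max(\bar{\nu},D(\bar{\mu}_\lambda\|1-\bar{\mu}_\lambda)))$; you only get some smaller $c_0$. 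Second, and more fatally, the Bernstein step cannot ``upgrade'' a mean bound $\mu_0=\exp(-c_0 n\max(\cdot))$ to the same bound for $\frac1m\sum_j\min\{1,e^{Z_j^*}\}$ with probability $1-C'/m$: for $[0,1]$-valued summands the deviation floor at that confidence level is of order $\log m/m$, which dominates $\exp(-\tfrac12 nD(\bar{\mu}_\lambda\|1-\bar{\mu}_\lambda))$ whenever $D$ is large (e.g.\ a constant with $n\gg\log m$). So in exactly the regime where the theorem is most interesting, the concentration step does not close.

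The paper avoids both problems by never taking expectations over the $T_{ij}$ in the E-step. It defines the event $E_2=\{\max_{j}|\sum_i(T_{ij}-p_i^*)\log\frac{p^*_{\lambda,i}}{1-p^*_{\lambda,i}}|\leq 2\log(1/\lambda)\sqrt{n\log m}\}$ (Hoeffding plus a union bound over $j\in[m]$, using that the weights are bounded by $2\log(1/\lambda)$), on which \emph{every} term $\exp(-\sum_i(2T_{ij}-1)\log\frac{p^*_{\lambda,i}}{1-p^*_{\lambda,i}})$ is deterministically at most $\exp(-\sum_i(2p_i^*-1)\log\frac{p^*_{\lambda,i}}{1-p^*_{\lambda,i}}+4\log(\lambda^{-1})\sqrt{n\log m})$; see (\ref{eq:useE1})--(\ref{eq:sudden-e}). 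The linear statistic $\sum_i(2p_i^*-1)\log\frac{p^*_{\lambda,i}}{1-p^*_{\lambda,i}}$ is then lower bounded by both $n\bar{\nu}$ and $nD(\bar{\mu}_\lambda\|1-\bar{\mu}_\lambda)$, so the full first-order exponent survives and the two correction terms (your $2n\varepsilon_m/\lambda$ plus the $\sqrt{n\log m}\log(1/\lambda)$ fluctuation) together cost at most $\tfrac12 n\bar{\nu}\leq\tfrac12 n\max(\bar{\nu},D(\bar{\mu}_\lambda\|1-\bar{\mu}_\lambda))$ under (\ref{eq:asstuning}). If you replace your expectation-plus-Bernstein step with this uniform-over-$j$ bound on $Z_j^*$ itself, the rest of your argument goes through.
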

\begin{proof}
Before stating the main body of the proof. Let us introduce some notation.
Define the  projected version of $p_i^*$ as
\begin{equation}
p_{\lambda,i}^*=\lambda\mathbb{I}\{p_{i}^*<\lambda\}+p_i^*\mathbb{I}\{p_{i}^*\in[\lambda,1-\lambda]\}+(1-\lambda)\mathbb{I}\{p_{i}^*>1-\lambda\}. \label{eq:pstarlambda}
\end{equation}
At the $t$-th iteration, the labeling error of $y^{(t)}$ is denoted as $r^{(t)}=\frac{1}{m}\sum_{j\in[m]}|y_j^{(t)}-y_j^*|$. For $t=0$, $r^{(0)}$ is the labeling error of $y^{(0)}$ and we have $r^{(0)}\leq \sqrt{\frac{\log m}{m}}$ by assumption. Define the events
$$E_1=\left\{\max_{i\in[n]}\left|\frac{1}{m}\sum_{j\in[m]}(T_{ij}-p_i^*)\right|\leq\sqrt{\frac{\log m}{m}}\right\},$$
$$E_2=\left\{\max_{j\in[m]}\left|\sum_{i\in[n]}(T_{ij}-p_i^*)\log\frac{p_{\lambda,i}^*}{1-p_{\lambda,i}^*}\right|\leq 2\log(1/\lambda)\sqrt{n\log m}\right\}.$$
By union bound and Lemma \ref{lem:hoeffding}, we have $\mathbb{P}(E_1^c)\leq C_1/m$ and $\mathbb{P}(E_2^c)\leq C_2/m$, where the second inequality uses the bound $\left|\log\frac{p_{\lambda,i}^*}{1-p_{\lambda,i}^*}\right|\leq 2\log(1/\lambda)$ by the definition of $p_{\lambda,i}^*$. Notice that $\mathbb{P}(E_1\cap E_2)\geq 1-(C_1+C_2)/m$. From now on, our analysis is under the event $E_1\cap E_2$ and is deterministic.

We need the following proposition. Its proof will be stated right after the proof of Theorem \ref{thm:pEMgen}.
\begin{proposition} \label{prop:difflogp}
Under the event $E_1$, as long as $2\lambda+r^{(t-1)}\leq \frac{1}{4}$ and $m\geq 9$, we have
$$\max_{i\in[n]}\left|\log\frac{p_i^{(t)}}{1-p_i^{(t)}}-\log\frac{p_{\lambda,i}^*}{1-p_{\lambda,i}^*}\right|\leq 2\lambda^{-1}\sqrt{\frac{\log m}{m}}+2\lambda^{-1}r^{(t-1)},$$
for all $t\geq 1$.
\end{proposition}

Our first goal is to show that once $r^{(0)}\leq \sqrt{\frac{\log m}{m}}$, then we have $r^{(t)}\leq \sqrt{\frac{\log m}{m}}$ for all $t\geq 1$. By mathematical induction, we assume that  $r^{(t-1)}\leq \sqrt{\frac{\log m}{m}}$ is true. Notice the assumption $2\lambda+r^{(t-1)}\leq 1/4$ in Proposition \ref{prop:difflogp} is satisfied by the range of $\lambda$ in (\ref{eq:asstuning}). Then by the definition of the E-step and the representation (\ref{eq:representation}), we have
\begin{eqnarray}
\label{eq:sudden-s} r^{(t)} &=& \frac{1}{m}\sum_{j}\frac{1}{1+\exp\Big(\sum_i(2T_{ij}-1)\log\frac{p_i^{(t)}}{1-p_i^{(t)}}\Big)} \\
\nonumber &\leq& \frac{1}{m}\sum_j\exp\Bigg(-\sum_i(2T_{ij}-1)\log\frac{p_i^{(t)}}{1-p_i^{(t)}}\Bigg)
\end{eqnarray}
\begin{eqnarray}
\label{eq:useE1} &\leq& \frac{1}{m}\sum_j\exp\Bigg(-\sum_i(2T_{ij}-1)\log\frac{p_{\lambda,i}^*}{1-p_{\lambda,i}^*}+4n\lambda^{-1}\sqrt{\frac{\log m}{m}}\Bigg) \\
\label{eq:useE2} &\leq& \frac{1}{m}\sum_j\exp\Bigg(-\sum_i(2p_i^*-1)\log\frac{p_{\lambda,i}^*}{1-p_{\lambda,i}^*}\Bigg)\\
\label{eq:sudden-e} && \times\exp\Bigg(4n\lambda^{-1}\sqrt{\frac{\log m}{m}}+4\log\Big(\lambda^{-1}\Big)\sqrt{n\log m}\Bigg).
\end{eqnarray}
The inequality (\ref{eq:useE1}) is because of Proposition \ref{prop:difflogp} and the assumption $r^{(t-1)}\leq\sqrt{\frac{\log m}{m}}$. The inequality (\ref{eq:useE2}) is because of $E_2$. Now it remains to bound the two terms in (\ref{eq:useE2}). For an set $A$, denote its cardinality by $|A|$. For the exponent in the first term of (\ref{eq:useE2}), we have
\begin{eqnarray}
\label{eq:s-sudden}&& \sum_i(2p_i^*-1)\log\frac{p_{\lambda,i}^*}{1-p_{\lambda,i}^*} \\
\nonumber&=& \Bigg(\sum_{i:p_i^*<\lambda}+\sum_{i:p_i^*\in[\lambda,1-\lambda]}+\sum_{i:p_i^*>1-\lambda}\Bigg)(2p_i^*-1)\log\frac{p_{\lambda,i}^*}{1-p_{\lambda,i}^*} \\
\nonumber&\geq& \Big(|\{i:p_i^*<\lambda\}|+|\{i:p_i^*>1-\lambda\}|\Big)(1-2\lambda)\log\frac{1-\lambda}{\lambda} \\
\nonumber&& + \sum_{i:p_i^*\in[\lambda,1-\lambda]}D(p_i^*||1-p_i^*) \\
\nonumber&\geq&  \Big(|\{i:p_i^*<\lambda\}|+|\{i:p_i^*>1-\lambda\}|\Big)+\sum_{i:p_i^*\in[\lambda,1-\lambda]}(2p_i^*-1)^2 \\
\label{eq:e-sudden}&\geq& \sum_i(2p^*-1)^2=n\bar{\nu}.
\end{eqnarray}
Another way of bounding it gives
\begin{eqnarray*}
&& \sum_i(2p_i^*-1)\log\frac{p_{\lambda,i}^*}{1-p_{\lambda,i}^*} \geq  \sum_i(2p_{\lambda,i}^*-1)\log\frac{p_{\lambda,i}^*}{1-p_{\lambda,i}^*} \\
&=& \sum_i D(p_{\lambda,i}^*||1-p_{\lambda,i}^*) \geq nD(\bar{\mu}_{\lambda}||1-\bar{\mu}_{\lambda}).
\end{eqnarray*}
Hence, we have
$$\exp\Bigg(-\sum_i(2p_i^*-1)\log\frac{p_{\lambda,i}^*}{1-p_{\lambda,i}^*}\Bigg)\leq \exp\Bigg(-n\max\Big(\bar{\nu},D(\bar{\mu}_{\lambda}||1-\bar{\mu}_{\lambda})\Big)\Bigg).$$
For the exponent in the second term of (\ref{eq:useE2}), we use the range of $\lambda$ in (\ref{eq:asstuning}) to get
\begin{eqnarray*}
&& 4n\Bigg(\lambda^{-1}\sqrt{\frac{\log m}{m}}+\log(\lambda^{-1})\sqrt{\frac{\log m}{n}}\Bigg)  \leq \frac{1}{2}n\bar{\nu}.
\end{eqnarray*}
Therefore,
\begin{equation}
r^{(t)}\leq\exp\Bigg(-\frac{1}{2}n\max\Big(\bar{\nu},D(\bar{\mu}_{\lambda}||1-\bar{\mu}_{\lambda})\Big)\Bigg).\label{eq:desiredforpEM}
\end{equation}
Once $\bar{v}\geq\frac{\log m}{n}$ is satisfied, we have $r^{(t)}\leq e^{-n\bar{\nu}/2}\leq \sqrt{\frac{\log m}{m}}$.
This guarantees that $r^{(t)}\leq \sqrt{\frac{\log m}{m}}$ for every $t$.

Finally, observe that the above argument also implies that as long as $r^{(t-1)}\leq \sqrt{\frac{\log m}{m}}$, we must have (\ref{eq:desiredforpEM}). Thus, the proof is complete.
\end{proof}

\begin{proof}[Proof of Proposition \ref{prop:difflogp}]
Let the M-step of the classical EM be denoted as
\begin{equation}
\check{p}_i^{(t)}=\frac{1}{m}\sum_{j\in[m]}\Big((1-X_{ij})(1-y_j^{(t-1)})+X_{ij}y_j^{(t-1)}\Big). \label{eq:checkp}
\end{equation}
Then, the M-step of the projected EM can be written as
$$p_i^{(t)}=\lambda\mathbb{I}\{\check{p}_i^{(t)}<\lambda\}+\check{p}_i^{(t)}\mathbb{I}\{\check{p}_i^{(t)}\in[\lambda,1-\lambda]\}+(1-\lambda)\mathbb{I}\{\check{p}_i^{(t)}>1-\lambda\}.$$
The definition (\ref{eq:checkp}) and the representation (\ref{eq:representation}) implies that
\begin{equation}
|\check{p}_i^{(t)}-p_i^*|\leq \left|\frac{1}{m}\sum_j(T_{ij}-p_i^*)\right|+r^{(t-1)} \label{eq:boundcheckp}
\end{equation}
For each $i\in[n]$, we have by Proposition \ref{prop:log},
\begin{eqnarray*}
&& \left|\log\frac{p_i^{(t)}}{1-p_i^{(t)}}-\log\frac{p_{\lambda,i}^*}{1-p_{\lambda,i}^*}\right| \leq 2\lambda^{-1}|p_i^{(t)}-p_{\lambda,i}^*| \\
&\leq& 2\lambda^{-1}|\check{p}_i^{(t)}-p_i^*| + 4\lambda^{-1}\mathbb{I}\{|\check{p}_i^{(t)}-p_i^*|>1-2\lambda\} \\
&\leq& 2\lambda^{-1}\left|\frac{1}{m}\sum_j(T_{ij}-p_i^*)\right|+2\lambda^{-1}r^{(t-1)} + 4\lambda^{-1}\mathbb{I}\left\{\left|\frac{1}{m}\sum_j(T_{ij}-p_i^*)\right|>\frac{1}{2}\right\},
\end{eqnarray*}
where the second  inequality is due to the definitions (\ref{eq:checkp}) and (\ref{eq:pstarlambda}), and the last inequality is due to (\ref{eq:boundcheckp}) and the assumption $2\lambda+r^{(t-1)}\leq 1/4$. The event $E_1$ implies that  the first term is bounded by $2\lambda^{-1}\sqrt{\frac{\log m}{m}}$ and the third term above is $0$ as long as $m^{-1}\log m\leq 1/4$. Thus, the proof is complete.
\end{proof}

\subsection{Convergence of Initialization} \label{sec:inipEM}

In this section, we show that the initialization step provides a consistent estimator of the labels. The key of the initialization step is the equation of $\pi$ stated in Proposition \ref{prop:equationforpi}. We first give a proof of this result.
\begin{proof}[Proof of Proposition \ref{prop:equationforpi}]
Define the cross-moment $M_{ik}=\frac{1}{m}\sum_{j\in[m]}\mathbb{E}(X_{ij}X_{kj})$. Together with the definition of $M_i$ in (\ref{eq:ini-M_i-eq}), we have
\begin{eqnarray}
\label{eq:seet1}M_i &=& \pi p_i^*+(1-\pi)(1-p_i^*),\\
\label{eq:seet2}M_k &=& \pi p_k^*+(1-\pi)(1-p_k^*), \\
\label{eq:seet3}M_{ik} &=& \pi p_i^*p_k^*+(1-\pi)(1-p_i^*)(1-p_k^*).
\end{eqnarray}
Plugging (\ref{eq:seet1}) and (\ref{eq:seet2}) into (\ref{eq:seet3}), we have
\begin{equation}
(2M_i+2M_k-1-4M_{ik})\pi^2-(2M_i+2M_k-1-4M_{ik})\pi+M_iM_k-M_{ik}=0.\label{eq:tbaveraged}
\end{equation}
Define
$$M^{(1)}=\frac{1}{nm}\sum_{i\in[n]}\sum_{j\in[m]}\mathbb{E}X_{ij}=\frac{1}{n}\sum_{i\in[n]}M_i,$$
and
$$M^{(2)}=\frac{1}{n^2m}\sum_{i\in[n]}\sum_{k\in[n]}\sum_{j\in[m]}\mathbb{E}(X_{ij}X_{kj})=\frac{1}{n^2}\sum_{i\in[n]}\sum_{k\in[n]}M_{ik}.$$
Averaging the equation (\ref{eq:tbaveraged}) over $i,k\in[n]$, we obtain
\begin{equation}
\pi^2-\pi+\frac{M^{(2)}-[M^{(1)}]^2}{1-4M^{(1)}+4M^{(2)}}=0.\label{eq:unclear}
\end{equation}
Since
\begin{eqnarray*}
M^{(2)}-[M^{(1)}]^2 &=& \frac{1}{m}\sum_j\mathbb{E}Q_j^2 - \left(\frac{1}{m}\sum_j\mathbb{E}Q_j\right)^2 \\
&=&  \frac{1}{m}\sum_j\mathbb{E}Q_j^2 - \frac{1}{m^2}\sum_{jk}\mathbb{E}Q_j\mathbb{E}Q_k \\
&=& \frac{1}{2m^2}\sum_{jk}\left(\mathbb{E}Q_j^2+\mathbb{E}Q_k^2-2\mathbb{E}(Q_jQ_k)\right) \\
&=& \frac{1}{2m^2}\sum_{jk}\mathbb{E}(Q_j-Q_k)^2,
\end{eqnarray*}
and
\begin{eqnarray*}
1-4M^{(1)}+4M^{(2)} &=& \frac{1}{m}\sum_j\left(1-4\mathbb{E}Q_j+4\mathbb{E}Q_j^2\right) \\
&=& \frac{4}{m}\sum_j\mathbb{E}(Q_j-1/2)^2,
\end{eqnarray*}
the equation (\ref{eq:unclear}) is equivalent to (\ref{eq:clear}), and the proof is complete.
\end{proof}

Now let us state the main result of the initialization step.
\begin{thm}\label{thm:revision2}
Set $\bar{\lambda}=1/6$.
Assume $n^2\log m\leq m\leq e^n$, $\frac{\log m}{n}\leq\bar{\nu}\leq 1-n\sqrt{\frac{\log m}{m}}$ and $|2\pi-1|>c$ for some constant $c$. Then there exists some constant $C'>0$, such that
$$\min\left\{\frac{1}{m}\sum_{j\in[m]}|y^{(0)}_j-y_j^*|, \frac{1}{m}\sum_{j\in[m]}|y^{(0)}_j-(1-y_j^*)|\right\}\leq\sqrt{\frac{\log m}{m}},$$
with probability at least $1-C'/m$ for sufficiently large $n$ and $m$.
\end{thm}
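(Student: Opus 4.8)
The plan is to control the initializer in three stages. First, show that one of the two roots of~(\ref{eq:mmequation}) is close to the true $\pi$. Second, deduce that $\hat p_i$ from~(\ref{eq:ini-pi}), and hence the clipped estimate $p_i^{(0)}$ from~(\ref{eq:ini-p0}), is uniformly close to the truth clipped to $[\bar\lambda,1-\bar\lambda]$. Third, run an E-step analysis in the spirit of the proof of Theorem~\ref{thm:pEMgen}, now with $\bar\lambda=1/6$ playing the role of $\lambda$, to bound the labeling error of $y^{(0)}$ from~(\ref{eq:ini-def}). Everything is done on a single event of probability at least $1-C'/m$ on which all the empirical averages that appear concentrate; this event comes from Hoeffding's inequality (Lemma~\ref{lem:hoeffding}) and, for the quadratic forms, Bernstein's inequality (Lemma~\ref{lem:bernstein}).

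\emph{Stages 1--2.} Since the columns $X_{\cdot j}$ are independent, $\hat M_i$ is an average of bounded independent terms with mean $M_i$, so $\max_{i\in[n]}|\hat M_i-M_i|\le\sqrt{\log m/m}$ by Hoeffding and a union bound over the $n\le m$ workers; likewise the numerator $\frac1{2m^2}\sum_{jk}(Q_j-Q_k)^2$ and denominator $\frac4m\sum_j(Q_j-1/2)^2$ of~(\ref{eq:mmequation}) are within $O(\sqrt{\log m/m})$ of their expectations. By Proposition~\ref{prop:equationforpi}, $\pi$ solves the population version of~(\ref{eq:mmequation}); a direct computation shows that its denominator equals $(2\bar p-1)^2+\frac1n(1-\bar\nu)$ with $\bar p=\frac1n\sum_ip_i^*$, which by the hypothesis $\bar\nu\le 1-n\sqrt{\log m/m}$ is at least $\frac1n(1-\bar\nu)\ge\sqrt{\log m/m}$, and since $n^2\log m\le m$ gives $\sqrt{\log m/m}\le 1/n$, the relative fluctuation of the denominator is small; hence the constant terms of the two quadratics are close. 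The roots of each quadratic sum to $1$ and satisfy $(2\pi-1)^2=1-4\kappa_0$ and $(2\hat\pi-1)^2=1-4\hat\kappa$ for the respective constant terms $\kappa_0,\hat\kappa$, so picking $\hat\pi$ on the same side of $1/2$ as $\pi$ yields $|\hat\pi-\pi|\le 4|\hat\kappa-\kappa_0|/(|2\hat\pi-1|+|2\pi-1|)\le 4|\hat\kappa-\kappa_0|/c$ using $|2\pi-1|>c$; the other root is correspondingly close to $1-\pi$, which is the source of the label flip in the statement. On the non-flipped branch, since $2\hat\pi-1$ and $2\pi-1$ are bounded away from $0$, a first-order expansion of $\hat p_i=\frac{\hat M_i-(1-\hat\pi)}{2\hat\pi-1}$ around $p_i^*=\frac{M_i-(1-\pi)}{2\pi-1}$ gives $\max_i|\hat p_i-p_i^*|\lesssim\max_i|\hat M_i-M_i|+|\hat\pi-\pi|$; as clipping onto $[\bar\lambda,1-\bar\lambda]$ is $1$-Lipschitz and the logit map is Lipschitz on that interval, writing $p_{\bar\lambda,i}^*$ for the clipped truth, both $\max_i|p_i^{(0)}-p_{\bar\lambda,i}^*|$ and $\max_i|\log\frac{p_i^{(0)}}{1-p_i^{(0)}}-\log\frac{p_{\bar\lambda,i}^*}{1-p_{\bar\lambda,i}^*}|$ are of order $\sqrt{\log m/m}+|\hat\pi-\pi|$.

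\emph{Stage 3.} For $j$ with $y_j^*=1$, (\ref{eq:ini-def}) and the representation~(\ref{eq:representation}) give $|y_j^{(0)}-y_j^*|=(1+e^{S_j})^{-1}\le e^{-S_j}$ with $S_j=\sum_i(2T_{ij}-1)\log\frac{p_i^{(0)}}{1-p_i^{(0)}}$; split the items by whether $S_j$ exceeds a threshold $\tau\asymp n\bar\nu$, bounding $e^{-S_j}\le e^{-\tau}$ on the good items and $(1+e^{S_j})^{-1}\le1$ on the others. Stage~2 lets us replace $\log\frac{p_i^{(0)}}{1-p_i^{(0)}}$ by $\log\frac{p_{\bar\lambda,i}^*}{1-p_{\bar\lambda,i}^*}$ up to a total error of order $n(\sqrt{\log m/m}+|\hat\pi-\pi|)$; a uniform Bernstein bound (over $j$) for $\sum_i(T_{ij}-p_i^*)\log\frac{p_{\bar\lambda,i}^*}{1-p_{\bar\lambda,i}^*}$, whose summands are bounded by $\log 5$ with variance $\lesssim n\bar\nu$, controls the stochastic part; and the deterministic bound $\sum_i(2p_i^*-1)\log\frac{p_{\bar\lambda,i}^*}{1-p_{\bar\lambda,i}^*}\ge n\bar\nu$ holds exactly as in~(\ref{eq:s-sudden})--(\ref{eq:e-sudden}) because $\bar\lambda=1/6$ makes $(1-2\bar\lambda)\log\frac{1-\bar\lambda}{\bar\lambda}\ge1$. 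Hence $S_j\ge n\bar\nu-o(n\bar\nu)$ for all but a small fraction of items, the size of which is bounded by a further Hoeffding/Bernstein bound on the independent count $\#\{j:S_j\le\tau\}$. With $\tau=\frac12n\bar\nu$ and $n\bar\nu\ge\log m$ this yields $\frac1m\sum_{j:y_j^*=1}|y_j^{(0)}-y_j^*|\lesssim e^{-n\bar\nu/2}+o(\sqrt{\log m/m})\le\sqrt{\log m/m}$ for $n,m$ large, and the $y_j^*=0$ items are symmetric.

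\emph{Main obstacle.} The delicate point, exactly as in Theorem~\ref{thm:pEMgen}, is the bookkeeping in Stage~3: one must check that every concentration error — the $O(\sqrt{n\bar\nu\log m}+\log m)$ deviation in the per-item exponent, the logit perturbation of order $n(\sqrt{\log m/m}+|\hat\pi-\pi|)$ from replacing $p^{(0)}$ by $p_{\bar\lambda}^*$, and the fluctuation of the count of bad items — is of strictly smaller order than $\frac12n\bar\nu$ throughout the allowed range $\frac{\log m}{n}\le\bar\nu\le1-n\sqrt{\log m/m}$ and $n^2\log m\le m\le e^n$. Together with the (easier) analogous checks in Stage~1, where the lower bound $\frac1n(1-\bar\nu)$ on the denominator is exactly what $\bar\nu\le1-n\sqrt{\log m/m}$ provides, this is where all the hypotheses — including $|2\pi-1|>c$, without which~(\ref{eq:ini-pi}) degenerates — are used.
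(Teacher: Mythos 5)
Your three-stage architecture is the same as the paper's: concentrate $\hat\pi$ through the quadratic (\ref{eq:mmequation}), transfer the error to $\hat p_i$ and the clipped logits, then run the E-step bound with $\bar\lambda=1/6$ exactly as in the proof of Theorem \ref{thm:pEMgen}. Stages 2 and 3 match the paper's proof essentially line by line (the paper sums the per-item exponential bounds directly rather than thresholding and counting bad items, but that is cosmetic), and your identity $D=(2\bar p-1)^2+\frac{1}{n}(1-\bar\nu)$ with the resulting lower bound $D\ge\sqrt{\log m/m}$ is exactly (\ref{eq:vD7}); your route from $|\hat\kappa-\kappa_0|$ to $|\hat\pi-\pi|$ via $(2\pi-1)^2=1-4\kappa_0$ is a perfectly good substitute for Lemma \ref{lem:ini1}.

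The genuine gap is the concentration step in Stage 1. You assert that the numerator $\hat N=\frac{1}{2m^2}\sum_{jk}(Q_j-Q_k)^2$ and denominator $\hat D=\frac{4}{m}\sum_j(Q_j-1/2)^2$ are ``within $O(\sqrt{\log m/m})$ of their expectations,'' and then conclude the relative fluctuation is small because $D\ge\sqrt{\log m/m}$. That inference does not close: an \emph{additive} deviation of order $\sqrt{\log m/m}$ against population quantities that can themselves be as small as $\sqrt{\log m/m}$ (and $N\ge\frac{1}{8}\sqrt{\log m/m}$, see (\ref{eq:outref})) yields relative error $O(1)$, whence $|\hat N/\hat D-N/D|$ --- and so $|\hat\pi-\pi|$ --- is only $O(1)$, which destroys Stage 2. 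What is actually needed, and what the paper proves in Lemmas \ref{lem:ini2} and \ref{lem:ini3}, is \emph{multiplicative} concentration, e.g.\ $|\hat D-D|\le D\sqrt{22\log m/m}+\log m/(3m)$; this rests on fourth-moment computations showing $\mathrm{Var}\big((Q_j-1/2)^2\big)\lesssim D^2$ and $\mathrm{Var}\big((Q_j-Q_k)^2\big)\lesssim N^2$ (plus lower-order terms), which exploit the fact that $Q_j$ is an average over $n$ workers. None of this appears in your sketch. Separately, $\hat N$ is a U-statistic over pairs $(j,k)$, not a sum of independent terms, so Lemma \ref{lem:bernstein} does not apply to it as stated; the paper first decouples it with Hoeffding's permutation trick before applying a moment-generating-function bound. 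Without the variance-level concentration and the decoupling, Stage 1 --- and hence the theorem --- is not established.
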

To facilitate the proof of Theorem \ref{thm:revision2}, we state three auxiliary lemmas. The proofs of these lemmas will be given in the end of this section. Let us introduce some notation.
$$N=\frac{1}{2m^2}\sum_{jk}\mathbb{E}(Q_j-Q_k)^2,\quad \hat{N}=\frac{1}{2m^2}\sum_{jk}(Q_j-Q_k)^2,$$
$$D=\frac{4}{m}\sum_{j\in[m]}\mathbb{E}(Q_j-1/2)^2,\quad \hat{D}=\frac{4}{m}\sum_{j\in[m]}(Q_j-1/2)^2.$$

\begin{lemma}\label{lem:ini1}
When $|2\pi-1|\geq c$ for some constant $c>0$, we have
$$c\min\left\{|\hat{\pi}-\pi|, |\hat{\pi}-(1-\pi)|\right\}\leq \left(1-\left|1-\frac{\hat{D}}{D}\right|\right)^{-1}\left(\left|1-\frac{\hat{N}}{N}\right| + \left|1-\frac{\hat{D}}{D}\right|\right).$$
\end{lemma}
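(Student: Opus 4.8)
\textbf{Proof proposal for Lemma~\ref{lem:ini1}.}

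The plan is to start from the defining quadratic equation \eqref{eq:mmequation} for $\hat{\pi}$ and its population counterpart \eqref{eq:clear} for $\pi$, and to show that the constant term of \eqref{eq:mmequation}, namely $\hat{N}/\hat{D}$, is close to the constant term of \eqref{eq:clear}, namely $N/D$. Concretely, write the two equations as $\pi^2-\pi+c_{\mathrm{pop}}=0$ and $\hat\pi^2-\hat\pi+c_{\mathrm{emp}}=0$ with $c_{\mathrm{pop}}=N/D$ and $c_{\mathrm{emp}}=\hat N/\hat D$. Since both monic quadratics have the form $z^2-z+c$ and the two roots of each are $\{\pi,1-\pi\}$ and $\{\hat\pi,1-\hat\pi\}$ respectively, subtracting the two equations evaluated at $\hat\pi$ gives $\hat\pi^2-\hat\pi + c_{\mathrm{emp}} = 0$ and $\pi^2-\pi+c_{\mathrm{pop}}=0$, so $(\hat\pi^2-\pi^2)-(\hat\pi-\pi) = c_{\mathrm{pop}}-c_{\mathrm{emp}}$, i.e. $(\hat\pi-\pi)(\hat\pi+\pi-1) = c_{\mathrm{pop}}-c_{\mathrm{emp}}$. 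The factor $\hat\pi+\pi-1$ equals $-(\, (1-\pi)-\hat\pi\,) = (\hat\pi - (1-\pi))$; either way $|\hat\pi+\pi-1|$ is exactly the distance from $\hat\pi$ to whichever of $\pi,1-\pi$ it is farther from, which is $\ge \min\{|\hat\pi-\pi|,|\hat\pi-(1-\pi)|\} \vee$ (something); more usefully, since $|2\pi-1|\ge c$, one of the two roots $\pi,1-\pi$ is at least $c/2$-separated-ish — actually the cleaner route is: whichever root $\hat\pi$ is closer to, call it $\rho\in\{\pi,1-\pi\}$; then $|\hat\pi+\pi-1| = |\hat\pi-\rho + (\pi-1-\rho+ \ldots)|$. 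Let me instead directly say: $\hat\pi+\pi-1$ equals either $(\hat\pi-(1-\pi))$ or, rewriting, its absolute value is bounded below by $|2\pi-1| - |\hat\pi-\pi| \ge c - |\hat\pi-\pi|$, and symmetrically by $c-|\hat\pi-(1-\pi)|$; since at least one of $|\hat\pi-\pi|,|\hat\pi-(1-\pi)|$ is small (indeed $\le 1/2$ always, and we will show it is $o(1)$), we get $|\hat\pi+\pi-1|\ge c - \min\{|\hat\pi-\pi|,|\hat\pi-(1-\pi)|\}$, hence for the minimizing root, $|\hat\pi+\pi-1|\ge c/2$ once that minimum is $\le c/2$. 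This yields
$$
\frac{c}{2}\,\min\{|\hat\pi-\pi|,|\hat\pi-(1-\pi)|\}\ \le\ |\hat\pi-\pi|\,|\hat\pi+\pi-1| \ =\ |c_{\mathrm{pop}}-c_{\mathrm{emp}}| \ =\ \left|\frac{N}{D}-\frac{\hat N}{\hat D}\right|,
$$
and I would then absorb the factor $2$ into constants (the statement has $c$ rather than $c/2$ on the left, so presumably the intended reading is up to an adjustment of the constant, or one should track it more carefully; I will keep it schematic).

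The second ingredient is the elementary bound on $\left|\frac{N}{D}-\frac{\hat N}{\hat D}\right|$ in terms of the relative errors $|1-\hat N/N|$ and $|1-\hat D/D|$. Write
$$
\left|\frac{N}{D}-\frac{\hat N}{\hat D}\right| \;=\; \frac{N}{D}\left|1 - \frac{\hat N/N}{\hat D/D}\right|,
$$
and since $\frac{N}{D}=c_{\mathrm{pop}}=\pi(1-\pi)\le 1/4\le 1$ (from the quadratic, the constant term is the product of the roots $\pi(1-\pi)$), it suffices to bound $\left|1-\frac{\hat N/N}{\hat D/D}\right|$. Setting $a=\hat N/N$, $b=\hat D/D$, a routine manipulation gives $\left|1-\frac{a}{b}\right| = \frac{|b-a|}{b}\le \frac{|1-a|+|1-b|}{1-|1-b|}$, valid whenever $|1-b|<1$. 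Combining with the previous display and $N/D\le 1$ (and up to the constant bookkeeping noted above) yields exactly the claimed inequality
$$
c\,\min\{|\hat\pi-\pi|,|\hat\pi-(1-\pi)|\}\ \le\ \left(1-\left|1-\tfrac{\hat D}{D}\right|\right)^{-1}\left(\left|1-\tfrac{\hat N}{N}\right|+\left|1-\tfrac{\hat D}{D}\right|\right).
$$

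The only genuinely delicate point is the lower bound $|\hat\pi+\pi-1|\ge c/2$: it relies on knowing a priori that $\min\{|\hat\pi-\pi|,|\hat\pi-(1-\pi)|\}$ is already smaller than $c/2$. One clean way to handle this is to note that $\hat\pi$ (being a root of a monic quadratic with constant term in $[0,1/4]$) is real with $\hat\pi\in[0,1]$ whenever $\hat N/\hat D\le 1/4$, which holds on the high-probability event where $\hat N,\hat D$ concentrate around $N,D$ (this concentration is exactly what the subsequent lemmas, which I am allowed to invoke, will establish); and on the complement of that event the right-hand side is vacuously large or the statement is interpreted conditionally. Alternatively — and this is probably how the paper intends it — Lemma~\ref{lem:ini1} is a purely algebraic statement to be applied only on the good event, so one may simply assume $|1-\hat D/D|$ and $|1-\hat N/N|$ are $o(1)$, which forces the RHS to be $o(1)$, which forces $\min\{|\hat\pi-\pi|,|\hat\pi-(1-\pi)|\}=o(1)$, closing the circular-looking step. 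So the main obstacle is not an estimate but this bootstrapping/consistency argument for which root $\hat\pi$ selects; everything else is the two displayed elementary manipulations above.
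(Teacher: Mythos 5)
Your overall route is the same as the paper's: subtract the two monic quadratics to get the identity $(\hat\pi-\pi)(\hat\pi-(1-\pi))=\frac{\hat N}{\hat D}-\frac{N}{D}$, then bound $\left|\frac{N}{D}-\frac{\hat N}{\hat D}\right|$ by the relative errors $\left|1-\frac{\hat N}{N}\right|$ and $\left|1-\frac{\hat D}{D}\right|$. That second ingredient is fine as you wrote it (the paper does it via a slightly different self-referential inequality, but both give the same $\left(1-\left|1-\hat D/D\right|\right)^{-1}$ factor).

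The genuine gap is the step you yourself flag as "delicate": lower-bounding the product $|\hat\pi-\pi|\,|\hat\pi-(1-\pi)|$ by $\frac{c}{2}\min\{|\hat\pi-\pi|,|\hat\pi-(1-\pi)|\}$. You conclude this needs a bootstrapping/consistency argument (that $\min$ is a priori $o(1)$, or an appeal to the high-probability event), and you leave that argument open and admittedly circular-looking. No such argument is needed. The two factors satisfy $|\hat\pi-\pi|+|\hat\pi-(1-\pi)|\geq |\pi-(1-\pi)|=|2\pi-1|\geq c$ by the triangle inequality, so the larger factor is always at least $c/2$, unconditionally; hence the product is at least $\frac{c}{2}$ times the smaller factor, with no case analysis, no restriction on the size of $\min$, and no probabilistic event. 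This is exactly how the paper closes the step, and it makes the lemma a purely deterministic algebraic fact. Separately, your factor-of-$2$ bookkeeping is left unresolved: you bound $N/D=\pi(1-\pi)$ by $1$, leaving $\frac{c}{2}$ on the left, whereas the paper uses $\pi(1-\pi)\leq\frac{1}{2}$ so that the $\frac{1}{2}$ cancels the $\frac{c}{2}$ and produces the stated constant $c$. Both issues are one-line fixes, but as written the proof does not go through at the one point you identified as the main obstacle.
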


\begin{lemma}\label{lem:ini2}
Assume $\bar{\nu}\leq 1-4n^{-1}$, and then
$$|\hat{D}-D|\leq D\sqrt{\frac{22\log m}{m}}+\frac{\log m}{3m}$$
 with probability at least $1-2/m$.
\end{lemma}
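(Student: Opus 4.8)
The plan is to establish the claim as a Bernstein-type concentration bound for a sum of independent bounded random variables. Since $D=\mathbb{E}\hat D$, write
$$\hat D-D=\frac{4}{m}\sum_{j\in[m]}\Big((Q_j-\tfrac12)^2-\mathbb{E}(Q_j-\tfrac12)^2\Big).$$
Because the items are labeled independently, the summands are independent and mean zero, and since $Q_j\in[0,1]$ each term $\frac{4}{m}\big((Q_j-\frac12)^2-\mathbb{E}(Q_j-\frac12)^2\big)$ lies in an interval of length at most $\frac1m$. Hence Bernstein's inequality for bounded summands applies, and the only nontrivial input is a bound on the variance $\text{Var}(\hat D)=\frac{16}{m^2}\sum_{j}\text{Var}\big((Q_j-\frac12)^2\big)$.

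The key point is that this variance must be controlled \emph{relative to $D$ itself} --- a bound of the shape $\text{Var}(\hat D)\le c\,D^2/m$ with $c$ an absolute constant --- since only a multiplicative variance estimate produces a relative error of order $D\sqrt{\log m/m}$ rather than the uninformative $\sqrt{D/m}$. To obtain it, I would decompose $Q_j-\frac12=W_j+\beta_j$, where $W_j=Q_j-\mathbb{E}Q_j$ is centered and $\beta_j=\mathbb{E}Q_j-\frac12=\pm(\bar p^*-\frac12)$ with $\bar p^*=\frac1n\sum_i p_i^*$ (recall $\mathbb{E}X_{ij}$ equals $p_i^*$ or $1-p_i^*$ according to $y_j^*$). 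Expanding $(W_j+\beta_j)^2$ and taking moments, $\text{Var}\big((Q_j-\frac12)^2\big)$ reduces to terms in $\text{Var}(W_j^2)$, $\beta_j\mathbb{E}W_j^3$, and $\beta_j^2\mathbb{E}W_j^2$. Since $W_j=\frac1n\sum_i(T_{ij}-p_i^*)$ is a normalized sum of independent, bounded, mean-zero variables, elementary fourth-moment estimates give $\text{Var}(W_j^2)\lesssim \sigma^4+\sigma^2/n^2$ and $|\mathbb{E}W_j^3|\lesssim \sigma^2/n$, where $\sigma^2=\text{Var}(Q_j)=\frac1{n^2}\sum_i p_i^*(1-p_i^*)=\frac{1-\bar\nu}{4n}$ does not depend on $j$. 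Now invoke the exact identity $D=4(\sigma^2+\beta^2)$ (so $\sigma^2,\beta^2\le D/4$) together with the lower bound $D\ge 4\sigma^2=\frac{1-\bar\nu}{n}\ge \frac{4}{n^2}$; this last inequality is exactly where the hypothesis $\bar\nu\le 1-4n^{-1}$ enters, and it lets every error term be absorbed into a multiple of $D^2$, yielding $\text{Var}(\hat D)\le c\,D^2/m$.

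Finally, feed $v=c\,D^2/m$ and the uniform bound $b=1/m$ into Bernstein's inequality at deviation level $t=D\sqrt{22\log m/m}+\frac{\log m}{3m}$. A direct computation --- using $t^2\ge \frac{22D^2\log m}{m}+\big(\frac{\log m}{3m}\big)^2$ and $bt/3\le \frac{t}{3m}$ --- checks that $t^2\ge 2\log m\,(v+bt/3)$, so the Bernstein bound is at most $2\exp(-\log m)=2/m$. The main obstacle is the second step: producing the multiplicative variance estimate $\text{Var}(\hat D)\le cD^2/m$ with the constant $c$ small enough (roughly $c\le 11$) that the prefactor $\sqrt{22}$ and the linear correction $\frac{\log m}{3m}$ in the stated deviation suffice; the moment expansion of $(W_j+\beta_j)^2$ and the use of $\bar\nu\le 1-4n^{-1}$ to keep $D$ bounded away from $0$ are what make this go through, and the rest is routine bookkeeping.
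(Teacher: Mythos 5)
Your proposal follows essentially the same route as the paper's proof: both write $\hat D-D$ as a centered sum over items, decompose $Q_j-\tfrac12=(Q_j-\mathbb{E}Q_j)+(\mathbb{E}Q_j-\tfrac12)$, bound $\mathrm{Var}\big((Q_j-\tfrac12)^2\big)$ by a constant multiple of $D^2$ via explicit fourth-moment expansions (with the hypothesis $\bar\nu\le 1-4n^{-1}$ used exactly as you say, to get $\frac1n\sum_i p_i^*(1-p_i^*)\ge \frac1n$ and absorb the lower-order moment terms into $D^2$), and then apply Bernstein's inequality for bounded variables. The paper likewise leaves the final choice of $t$ and the constant bookkeeping at the level you describe, so your outline is a faithful reconstruction of its argument.
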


\begin{lemma}\label{lem:ini3}
Assume $\bar{\nu}\leq 1-4n^{-1}$, and then
$$|\hat{N}-N|\leq 31N\sqrt{\frac{\log m}{m}}+\frac{10\log m}{m},$$
with probability at least $1-6/m$.
\end{lemma}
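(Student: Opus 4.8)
The plan is to recognize that $\hat{N}=\frac{1}{m}\sum_{j}Q_j^2-\bar{Q}^2$ (with $\bar Q=\frac1m\sum_j Q_j$) is exactly the empirical variance of the per-item scores $\{Q_j\}_{j\in[m]}$, and that $N=\mathbb{E}\hat N$, so the task is a concentration statement with a \emph{multiplicative} fluctuation of order $N\sqrt{\log m/m}$. Using the representation (\ref{eq:representation}), each $Q_j=\frac1n\sum_i X_{ij}$ is an average of independent Bernoulli deviations, so the $Q_j$ are independent across $j$ with mean $\rho_j=\mathbb{E}Q_j\in\{\bar p,1-\bar p\}$ (where $\bar p=\frac1n\sum_i p_i^*$) and, crucially, a \emph{common} variance $\sigma^2=\mathrm{Var}(Q_j)=\frac{1}{n^2}\sum_i p_i^*(1-p_i^*)=\frac{1-\bar\nu}{4n}$, while the centered scores $\xi_j=Q_j-\rho_j\in[-1,1]$ have $\xi_j^2$ identically distributed. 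Writing $\bar\rho=\frac1m\sum_j\rho_j$, $\bar\xi=\frac1m\sum_j\xi_j$, $N_0=\frac1m\sum_j(\rho_j-\bar\rho)^2$, a direct expansion of $\frac1m\sum_j(\rho_j+\xi_j)^2-(\bar\rho+\bar\xi)^2$ together with $\mathbb{E}\bar Q^2=\bar\rho^2+\sigma^2/m$ yields the exact identity $N=N_0+\frac{m-1}{m}\sigma^2$ and the decomposition
\[
\hat{N}-N=\underbrace{\frac{2}{m}\sum_{j}(\rho_j-\bar\rho)\xi_j}_{T_1}\;+\;\underbrace{\frac{1}{m}\sum_{j}\big(\xi_j^2-\sigma^2\big)}_{T_2}\;-\;\underbrace{\Big(\bar\xi^2-\frac{\sigma^2}{m}\Big)}_{T_3},
\]
where each of $T_1,T_2,T_3$ is mean-zero. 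I would bound the three pieces separately.

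For $T_1$, a weighted sum of the independent centered bounded variables $\xi_j$, the variance is $\frac{4\sigma^2}{m}N_0$. The device that produces the factor $N$ rather than an order-one constant is the elementary bound $N_0\cdot\frac{m-1}{m}\sigma^2\le(N/2)^2$, which follows from $N=N_0+\frac{m-1}{m}\sigma^2$ and AM--GM; this gives $\mathrm{Var}(T_1)\le N^2/(m-1)$. Since each summand is bounded by $2/m$, the variance form of Bernstein's inequality yields $|T_1|\le C\big(N\sqrt{\log m/m}+\log m/m\big)$ with probability at least $1-2/m$. For $T_2$, the terms $\xi_j^2-\sigma^2$ are i.i.d.\ and bounded by $1$, so $\mathrm{Var}(T_2)=\frac1m\mathrm{Var}(\xi_1^2)$; expanding the fourth moment of a sum of independent centered Bernoulli deviations $W_i=\frac1n(T_{i1}-p_i^*)$ gives $\mathbb{E}\xi_1^4\le 3\sigma^4+\sum_i\mathbb{E}W_i^4\le 3\sigma^4+\frac{1-\bar\nu}{4n^3}=3\sigma^4+\frac{\sigma^2}{n^2}$. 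This is precisely where the hypothesis $\bar\nu\le 1-4n^{-1}$ enters: it forces $\sigma^2=\frac{1-\bar\nu}{4n}\ge n^{-2}$, hence $\sigma^2/n^2\le\sigma^4$ and $\mathrm{Var}(\xi_1^2)=\mathbb{E}\xi_1^4-\sigma^4\le 3\sigma^4$; combined with $\sigma^2\le\frac{m}{m-1}N$ (from $N\ge\frac{m-1}{m}\sigma^2$) this gives $\mathrm{Var}(T_2)\le 6N^2/m$, and Bernstein again yields $|T_2|\le C\big(N\sqrt{\log m/m}+\log m/m\big)$ with probability at least $1-2/m$. Finally $\bar\xi=\frac{1}{mn}\sum_{i,j}\pm(T_{ij}-p_i^*)$ is an average of $nm$ independent centered terms each of range $1/(mn)$, so Hoeffding (Lemma \ref{lem:hoeffding}) gives $|\bar\xi|\le\sqrt{\log m/(2nm)}$ with probability at least $1-2/m$, whence $|T_3|\le\bar\xi^2+\sigma^2/m\le \log m/(mn)\le \log m/m$ deterministically on that event.

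Combining the three estimates via the triangle inequality and taking a union bound over the three failure events (total probability $6/m$) produces $|\hat N-N|\le 31 N\sqrt{\log m/m}+\frac{10\log m}{m}$ once the absolute constants from the two Bernstein applications are consolidated. The main obstacle, and the only genuinely delicate point, is obtaining the multiplicative $N$ in front of $\sqrt{\log m/m}$: a crude treatment of $Q_j^2$ as an order-one bounded variable (e.g.\ plain Hoeffding on $\frac1m\sum_jQ_j^2$) gives only an order-one $\sqrt{\log m/m}$, which is hopelessly weak when $N$ is small. Defeating this requires the two sharp variance bounds $\mathrm{Var}(T_1)\le N^2/(m-1)$ and $\mathrm{Var}(T_2)\le 6N^2/m$, which rest on (i) the AM--GM identity exploiting that $N$ splits into the two nonnegative pieces $N_0$ and $\frac{m-1}{m}\sigma^2$, and (ii) the fourth-moment estimate that is tamed exactly by the standing assumption $\bar\nu\le 1-4n^{-1}$. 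Tracking the constants through the Bernstein steps to arrive at $31$ and $10$ is then routine bookkeeping.
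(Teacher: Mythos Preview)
Your proposal is correct, and it takes a genuinely different route from the paper.

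The paper attacks $\hat{N}-N=\frac{1}{2m^2}\sum_{jk}[(Q_j-Q_k)^2-\mathbb{E}(Q_j-Q_k)^2]$ directly as a second-order U-statistic: it partitions the index pairs $(j,k)$ by whether both items are in $S_0$, both in $S_1$, or one in each, producing three pieces $\Delta_0,\Delta_1,\Delta_2$. For the same-class pieces it invokes Hoeffding's decoupling trick for U-statistics (rewriting as an average over permutations of the decoupled kernel $V(x_1,\dots,x_{m_\ell})=\frac{2}{m_\ell}\sum (x_{2j-1}-x_{2j})^2$) and then Bernstein; for the cross-class piece it uses a Jensen-based reduction to an i.i.d.\ average. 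Separate variance computations $\mathcal{V}^2$ and $\mathcal{W}^2$ are carried out and combined with an explicit formula for $N$ to get the multiplicative factor $N$.

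Your route bypasses all U-statistic machinery by first collapsing the double sum to the sample-variance identity $\hat N=\frac1m\sum_jQ_j^2-\bar Q^2$, then centering $Q_j=\rho_j+\xi_j$ to obtain the clean three-term decomposition $T_1+T_2-T_3$, each piece being a sum (or square of a sum) of \emph{independent} variables. The single device that produces the multiplicative $N$ is your AM--GM observation $N_0\cdot\frac{m-1}{m}\sigma^2\le (N/2)^2$ applied to the two nonnegative summands in $N=N_0+\frac{m-1}{m}\sigma^2$; this replaces the paper's longer route of bounding $\mathcal{V},\mathcal{W}$ in terms of $\frac{1}{n^2}\sum_ip_i^*(1-p_i^*)$ and $(2\bar p-1)^2$ and matching against the explicit expression for $N$. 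Both arguments use the hypothesis $\bar\nu\le 1-4n^{-1}$ at exactly the same place---to force $\sigma^2\ge n^{-2}$ so that the residual fourth-moment term $\sigma^2/n^2$ is dominated by $\sigma^4$. Your argument is shorter and more elementary; the paper's partition-by-class approach makes the dependence on $\pi$ more explicit but is not needed for the bound as stated. The constants $31$ and $10$ are comfortably achievable from your variance bounds $\mathrm{Var}(T_1)\le N^2/(m-1)$ and $\mathrm{Var}(T_2)\lesssim N^2/m$ via standard Bernstein.
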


\begin{proof}[Proof of Theorem \ref{thm:revision2}]
Let us first derive a bound for $\min\left\{|\hat{\pi}-\pi|, |\hat{\pi}-(1-\pi)|\right\}$. Lemma \ref{lem:ini2} and Lemma \ref{lem:ini3} imply that with probability at least $1-8/m$,
\begin{eqnarray}
\label{eq:D/D}\left|1-\frac{\hat{D}}{D}\right| &\leq& \sqrt{\frac{22\log m}{m}}+\frac{\log m}{3Dm}, \\
\label{eq:N/N}\left|1-\frac{\hat{N}}{N}\right| &\leq& 31\sqrt{\frac{\log m}{m}}+\frac{10\log m}{Nm}.
\end{eqnarray}
Note that
\begin{eqnarray}
\nonumber D &=& \frac{4}{m}\sum_{j\in[m]}\left(\mathbb{E}(Q_j-\mathbb{E}Q_j)^2 + (\mathbb{E}Q_j-1/2)^2\right) \\
\label{eq:vD7}&=& \frac{4}{n^2}\sum_{i\in[n]}p_i^*(1-p_i^*) + 4\left(\frac{1}{n}\sum_{i\in[n]}p_i^*-\frac{1}{2}\right)^2.
\end{eqnarray}
Under the assumption that $\bar{\nu}\leq 1-n\sqrt{\frac{\log m}{m}}$,
\begin{equation}
\frac{1}{n}\sum_{i\in[n]}p_i^*(1-p_i^*)=\frac{1}{4}(1-\bar{\nu})\geq \frac{n}{4}\sqrt{\frac{\log m}{m}}.
\end{equation}
Hence, $D\geq\sqrt{\frac{\log m}{m}}$. Similar argument leads to a lower bound for $N$. That is, $N\geq \frac{1}{8}\sqrt{\frac{\log m}{m}}$. For details, see (\ref{eq:outref}) in the proof of Lemma \ref{lem:ini3}. The upper bounds (\ref{eq:D/D}) and (\ref{eq:N/N}), together with the lower bounds for $D$ and $N$, imply that
$$
\left|1-\frac{\hat{D}}{D}\right| \leq \sqrt{\frac{26\log m}{m}},\quad \left|1-\frac{\hat{N}}{N}\right|\leq 111\sqrt{\frac{\log m}{m}},
$$
with probability at least $1-8/m$. Applying Lemma \ref{lem:ini1} and under the assumption that $|2\pi-1|\geq c$, there exists some constant $C$, such that
$$\min\left\{|\hat{\pi}-\pi|, |\hat{\pi}-(1-\pi)|\right\}\leq C\sqrt{\frac{\log m}{m}},$$
with probability at least $1-8/m$. Next, we provide a bound for each $\hat{p}_i$. Since
$$\hat{p}_i=\frac{\hat{M}_i-(1-\hat{\pi})}{2\hat{\pi}-1},\quad p_i^*=\frac{M_i-(1-\pi)}{2\pi-1},$$
we have
\begin{eqnarray*}
|\hat{p}_i-p_i^*| &\leq& \left(1-\frac{|\hat{\pi}-\pi|}{|2\pi-1|}\right)^{-1}\left(\frac{|\hat{\pi}-\pi|}{|2\pi-1|}+\frac{|\hat{M}_i-M_i|}{|2\pi-1|}\right),\\
|\hat{p}_i-(1-p_i^*)| &\leq& \left(1-\frac{|\hat{\pi}-(1-\pi)|}{|2\pi-1|}\right)^{-1}\left(\frac{|\hat{\pi}-(1-\pi)|}{|2\pi-1|}+\frac{|\hat{M}_i-M_i|}{|2\pi-1|}\right).
\end{eqnarray*}
Using the bound for $\min\left\{|\hat{\pi}-\pi|, |\hat{\pi}-(1-\pi)|\right\}$, the assumption $|2\pi-1|>c$ and $\mathbb{P}\left(\max_{i\in[n]}|\hat{M}_i-M_i|>t\right)\leq 2n\exp(-2mt^2)$, we have
$$\min\left\{\max_{i\in[n]}|\hat{p}_i-p_i^*|,\max_{i\in[n]}|\hat{p}_i-(1-p_i^*)|\right\}\leq C_1\sqrt{\frac{\log m}{m}},$$
with probability at least $1-C'/m$. Without loss of generality, we consider the case $\max_{i\in[n]}|\hat{p}_i-p_i^*|\leq C_1\sqrt{\frac{\log m}{m}}$.
Recall the definition of $p_i^{(0)}$, we have
\begin{eqnarray*}
&& \left|\log\frac{p_i^{(0)}}{1-p_i^{(0)}}-\log\frac{p_{\bar{\lambda},i}^*}{1-p_{\bar{\lambda},i}^*}\right| \leq 2\bar{\lambda}^{-1}|p_i^{(0)}-p_{\bar{\lambda},i}^*| \\
&\leq& 2\bar{\lambda}^{-1}|\hat{p}_i-p_i^*| + 4\bar{\lambda}^{-1}\mathbb{I}\{|\hat{p}_i-p_i^*|>1-2\bar{\lambda}\}\leq 2C_2\bar{\lambda}^{-1}\sqrt{\frac{\log m}{m}},
\end{eqnarray*}
uniformly over $i\in[n]$ with probability at least $1-C'/m$. Note that the quantity $\mathbb{I}\{|\hat{p}_i-p_i^*|>1-2\bar{\lambda}\}$ above is zero as long as $1-2\bar{\lambda}>C_2\sqrt{\frac{\log m}{m}}$.

Let us bound the error rate of $y^{(0)}$. Using the same arguments in (\ref{eq:sudden-s})-(\ref{eq:sudden-e}),
\begin{eqnarray}
\label{eq:exponent111}\frac{1}{m}\sum_j|y^{(0)}-y_j^*| &\leq& \frac{1}{m}\sum_j\exp\left(-\sum_i(2p_i^*-1)\log\frac{p_{\bar{\lambda},i}^*}{1-p_{\bar{\lambda},i}^*}\right) \\
\label{eq:exponent222}&&\times \exp\left(2nC_2\bar{\lambda}^{-1}\sqrt{\frac{\log m}{m}}+4n\log(\bar{\lambda}^{-1})\sqrt{\frac{\log m}{n}}\right),
\end{eqnarray}
where the exponent in (\ref{eq:exponent111}) is
$$\sum_i(2p_i^*-1)\log\frac{p_{\bar{\lambda},i}^*}{1-p_{\bar{\lambda},i}^*}\geq n\bar{\nu},$$
whenever $\bar{\lambda}\leq 1/6$ by the arguments in (\ref{eq:s-sudden})-(\ref{eq:e-sudden}), and the exponent in (\ref{eq:exponent222}) is
$$2nC_2\bar{\lambda}^{-1}\sqrt{\frac{\log m}{m}}+4n\log(\bar{\lambda}^{-1})\sqrt{\frac{\log m}{n}}\leq\frac{1}{2}n\bar{\nu},$$
when $\bar{\lambda}\geq \frac{8C_2}{\sqrt{\log m}}\geq 8C_2\bar{\nu}^{-1}\sqrt{\frac{\log m}{m}}$. Therefore, $y^{(0)}$ has error rate bounded by $\exp(-n\bar{\nu}/2)$. When $\bar{\nu}\geq n^{-1}\log m$, this is smaller than $\sqrt{m^{-1}\log m}$. When $\max_{i\in[n]}|\hat{p}_i-(1-p_i^*)|\leq C_1\sqrt{\frac{\log m}{m}}$ holds, the same analysis above also applies to $1-y^{(0)}$, leading to the same error bound for $1-y^{(0)}$. Combining the two cases, we have obtained the desired result for clustering error. Finally, note that the choice $\bar{\lambda}=1/6$ satisfies all the requirements of $\bar{\lambda}$ used in the proof for sufficiently large $n,m$. Thus, the proof is complete.
\end{proof}

\begin{proof}[Proof of Lemma \ref{lem:ini1}]
With the new notation, we have
\begin{eqnarray}
\label{eq:piequation}\pi^2-\pi+\frac{N}{D} &=& 0, \\
\label{eq:hpiequation}\hat{\pi}^2 -\hat{\pi} +\frac{\hat{N}}{\hat{D}} &=& 0.
\end{eqnarray}
Subtracting (\ref{eq:piequation}) by (\ref{eq:hpiequation}), we have
$$|(\hat{\pi}-\pi)(\hat{\pi}-(1-\pi))|=\left|\frac{\hat{N}}{\hat{D}}-\frac{N}{D}\right|.$$
The right hand side of the above equality is bounded by
\begin{eqnarray*}
\left|\frac{N}{D}-\frac{\hat{N}}{\hat{D}}\right| &\leq& \left|\frac{N}{D}\right|\left|1-\frac{\hat{N}}{N}\right| + \left|\frac{\hat{N}}{\hat{D}}\right|\left|1-\frac{\hat{D}}{D}\right| \\
&\leq& \left|\frac{N}{D}\right|\left(\left|1-\frac{\hat{N}}{N}\right| + \left|1-\frac{\hat{D}}{D}\right|\right) + \left|\frac{N}{D}-\frac{\hat{N}}{\hat{D}}\right|\left|1-\frac{\hat{D}}{D}\right|.
\end{eqnarray*}
Thus,
$$|(\hat{\pi}-\pi)(\hat{\pi}-(1-\pi))|\leq \left(1-\left|1-\frac{\hat{D}}{D}\right|\right)^{-1}\left|\frac{N}{D}\right|\left(\left|1-\frac{\hat{N}}{N}\right| + \left|1-\frac{\hat{D}}{D}\right|\right).$$
Since $|(\hat{\pi}-\pi)(\hat{\pi}-(1-\pi))|\geq \frac{c}{2}\min\left\{|\hat{\pi}-\pi|, |\hat{\pi}-(1-\pi)|\right\}$ and $|N/D|=\pi(1-\pi)\leq 1/2$, the proof is complete.
\end{proof}

\begin{proof}[Proof of Lemma \ref{lem:ini2}]
Note that
$$\hat{D}-D=\frac{4}{m}\sum_{j\in[m]}\left((Q_j-1/2)^2-\mathbb{E}(Q_j-1/2)^2\right).$$
To apply Bernstein's inequality, we need to bound the variance of $(Q_j-1/2)^2$. We claim that
\begin{equation}
\sqrt{\frac{1}{m}\sum_{j\in[m]}\text{Var}\left((Q_j-1/2)^2\right)}\leq \sqrt{11}D. \label{eq:varianceD}
\end{equation}
The bound (\ref{eq:varianceD}) will be established in the end of the proof. With (\ref{eq:varianceD}), applying Bernstein's inequality \citep{bernstein27}, we have
$$\mathbb{P}\left(\left|\frac{1}{m}\sum_{j\in[m]}[(Q_j-1/2)^2-\mathbb{E}(Q_j-1/2)^2]\right|>t\right)\leq 2\exp\left(-\frac{mt^2/2}{11D^2+t/6}\right).$$
This completes the proof by choosing an appropriate $t$.

Now let us establish (\ref{eq:varianceD}). Direct calculation gives
\begin{eqnarray}
\nonumber\text{Var}\left((Q_j-1/2)^2\right) &=& \mathbb{E}(Q_j-1/2)^4-\left(\mathbb{E}(Q_j-1/2)^2\right)^2 \\
\nonumber&=& \mathbb{E}(Q_j-\mathbb{E}Q_j+\mathbb{E}Q_j-1/2)^4-\left(\mathbb{E}(Q_j-\mathbb{E}Q_j+\mathbb{E}Q_j-1/2)^2\right)^2 \\
\nonumber&=& \mathbb{E}(Q_j-\mathbb{E}Q_j)^4+4(\mathbb{E}Q_j-1/2)\mathbb{E}(Q_j-\mathbb{E}Q_j)^3 \\
\label{eq:vD1}&& + 4\mathbb{E}(Q_j-\mathbb{E}Q_j)^2(\mathbb{E}Q_j-1/2)^2-\left(\mathbb{E}(Q_j-\mathbb{E}Q_j)^2\right)^2.
\end{eqnarray}
Using H\"{o}lder's inequality, we have $\mathbb{E}(Q_j-\mathbb{E}Q_j)^3\leq \left(\mathbb{E}(Q_j-\mathbb{E}Q_j)^4\right)^{3/4}$, which implies
\begin{equation}
(\mathbb{E}Q_j-1/2)\mathbb{E}(Q_j-\mathbb{E}Q_j)^3\leq (\mathbb{E}Q_j-1/2)^4+ \mathbb{E}(Q_j-\mathbb{E}Q_j)^4. \label{eq:vD2}
\end{equation}
Combining (\ref{eq:vD1}) and (\ref{eq:vD2}) with
$$2\mathbb{E}(Q_j-\mathbb{E}Q_j)^2(\mathbb{E}Q_j-1/2)^2\leq \left(\mathbb{E}(Q_j-\mathbb{E}Q_j)^2\right)^2+(\mathbb{E}Q_j-1/2)^4,$$
we have the bound
\begin{equation}
\text{Var}\left((Q_j-1/2)^2\right)\leq 5\mathbb{E}(Q_j-\mathbb{E}Q_j)^4+\left(\mathbb{E}(Q_j-\mathbb{E}Q_j)^2\right)^2+6(\mathbb{E}Q_j-1/2)^4.\label{eq:vD3}
\end{equation}
The fourth moment in (\ref{eq:vD3}) is bounded by
\begin{eqnarray}
\nonumber\mathbb{E}(Q_j-\mathbb{E}Q_j)^4 &=& \mathbb{E}\left(\frac{1}{n}\sum_{i\in[n]}(X_{ij}-\mathbb{E}X_{ij})\right)^4 \\
\nonumber&=& \frac{1}{n^4}\sum_{i_1i_2i_3i_4}\mathbb{E}(X_{i_1j}-\mathbb{E}X_{i_1j})(X_{i_2j}-\mathbb{E}X_{i_2j})(X_{i_3j}-\mathbb{E}X_{i_3j})(X_{i_4j}-\mathbb{E}X_{i_4j}) \\
\label{eq:vD4}&=&\frac{6}{n^4}\sum_{i\neq k}\mathbb{E}(X_{ij}-\mathbb{E}X_{ij})^2\mathbb{E}(X_{kj}-\mathbb{E}X_{kj})^2+\frac{1}{n^4}\sum_{i\in[n]}\mathbb{E}(X_{ij}-\mathbb{E}X_{ij})^4 \\
\label{eq:vD5}&\leq& \frac{6}{n^4}\sum_{i\neq k}p_i^*(1-p_i^*)p_k^*(1-p_k^*) + \frac{2}{n^4}\sum_{i\in[n]}p_i^*(1-p_i^*) \\
\nonumber&\leq& 6\left(\frac{1}{n^2}\sum_{i\in[n]}p_i^*(1-p_i^*)\right)^2+ \frac{2}{n^4}\sum_{i\in[n]}p_i^*(1-p_i^*).
\end{eqnarray}
The equality (\ref{eq:vD4}) is due to the fact that $\mathbb{E}(X_{ij}-\mathbb{E}X_{ij})=0$ so that the terms in the expansion having factor $\mathbb{E}(X_{ij}-\mathbb{E}X_{ij})$ are all zeros. The inequality (\ref{eq:vD5}) is by $\mathbb{E}(X_{ij}-\mathbb{E}X_{ij})^2=\text{Var}(X_{ij})=p_i^*(1-p_i^*)$ and $\mathbb{E}(X_{ij}-\mathbb{E}X_{ij})^4=p_i^*(1-p_i^*)^3+(1-p_i^*)(p_i^*)^3\leq 2p_i^*(1-p_i^*)$. The second moment in (\ref{eq:vD3}) is
$$\mathbb{E}(Q_j-\mathbb{E}Q_j)^2=\frac{1}{n^2}\sum_{i\in[n]}p_i^*(1-p_i^*).$$
Plugging the bounds for the second and fourth moments into (\ref{eq:vD3}), we get
$$
\text{Var}\left((Q_j-1/2)^2\right)\leq 31\left(\frac{1}{n^2}\sum_{i\in[n]}p_i^*(1-p_i^*)\right)^2+\frac{10}{n^4}\sum_{i\in[n]}p_i^*(1-p_i^*)+6\left(\frac{1}{n}\sum_{i\in[n]}p_i^*-\frac{1}{2}\right)^4.
$$
Under the assumption $\bar{\nu}\leq 1-4n^{-1}$,
\begin{equation}
\frac{1}{n}\sum_{i\in[n]}p_i^*(1-p_i^*)=\frac{1}{4}(1-\bar{\nu})\geq\frac{1}{n}. \label{eq:vDN}
\end{equation}
Then,
\begin{equation}
\text{Var}\left((Q_j-1/2)^2\right)\leq 41\left(\frac{1}{n^2}\sum_{i\in[n]}p_i^*(1-p_i^*)\right)^2+6\left(\frac{1}{n}\sum_{i\in[n]}p_i^*-\frac{1}{2}\right)^4.\label{eq:vD6}
\end{equation}
Combining (\ref{eq:vD6}) and (\ref{eq:vD7}), we have established (\ref{eq:varianceD}).
\end{proof}

\begin{proof}[Proof of Lemma \ref{lem:ini3}]
Let us use the notation $S_0=\{j\in[m]: y_j^*=0\}$ and $S_1=\{j\in[m]: y_j^*=1\}$. Define
\begin{eqnarray*}
\Delta_0 &=& \frac{1}{m_0(m_0-1)}\sum_{\substack{j\neq k\\ j,k\in S_0}}[(Q_j-Q_k)^2-\mathbb{E}(Q_j-Q_k)^2], \\
\Delta_1 &=& \frac{1}{m_1(m_1-1)}\sum_{\substack{j\neq k\\ j,k\in S_1}}[(Q_j-Q_k)^2-\mathbb{E}(Q_j-Q_k)^2], \\
\Delta_2 &=& \frac{1}{m_0m_1}\sum_{(j,k)\in S_0\times S_1}[(Q_j-Q_k)^2-\mathbb{E}(Q_j-Q_k)^2].
\end{eqnarray*}
Then, direct calculation gives
\begin{eqnarray*}
\hat{N}-N &=& \frac{1}{2m^2}\sum_{jk}[(Q_j-Q_k)^2-\mathbb{E}(Q_j-Q_k)^2] \\
&=& \frac{m_0(m_0-1)}{2m^2}\Delta_0 + \frac{m_1(m_1-1)}{2m^2}\Delta_1 + \frac{m_0m_1}{m^2}\Delta_2,
\end{eqnarray*}
where $m_0=|S_0|=(1-\pi)m$ and $m_1=|S_1|=\pi m$. By triangle inequality, we have
\begin{equation}
|\hat{N}-N|\leq \frac{(1-\pi)^2}{2}|\Delta_0| + \frac{\pi^2}{2}|\Delta_1| + \pi(1-\pi)|\Delta_2|.\label{eq:varianceN}
\end{equation}
It is sufficient to upper bound the three terms. To facilitate the proof, we need to introduce two more quantities. Note that for all $j\neq k$ such that $j,k\in S_0$ or $j,k\in S_1$, $\text{Var}[(Q_j-Q_k)^2]$ is a constant, denoted by $\mathcal{V}^2$. For all $(j,k)\in S_0\times S_1$, $\text{Var}[(Q_j-Q_k)^2]$ is also a constant, denoted by $\mathcal{W}^2$. We claim that
\begin{eqnarray}
\label{eq:vN1}\mathcal{V}^2 &\leq& 22\left(\frac{1}{n^2}\sum_{i\in[n]}p_i^*(1-p_i^*)\right)^2, \\
\label{eq:vN2}\mathcal{W}^2 &\leq& 134\left(\frac{1}{n^2}\sum_{i\in[n]}p_i^*(1-p_i^*)\right)^2+6\left(\frac{2}{n}\sum_{i\in[n]}p_i^*-1\right)^4.
\end{eqnarray}
The bounds (\ref{eq:vN1}) and (\ref{eq:vN2}) will be established in the end of the proof.

Let use first bound $|\Delta_0|$ and $|\Delta_1|$ with the help of (\ref{eq:vN1}). Since they have similar forms, we focus on $|\Delta_0|$. We borrow the decoupling trick for U-statistics developed by \cite{hoeffding1963probability}. Without loss of generality, assume $m_0$ is even. The case when $m_0$ is odd can be obtained via slight modification. Define
\begin{equation}
V(x_1,...,x_{m_0})=\frac{2}{m_0}\sum_{j=1}^{m_0/2}(x_{2j-1}-x_{2j})^2. \label{eq:UstatV}
\end{equation}
Therefore,
$$\Delta_0=\frac{1}{m_0!}\sum_{\sigma(S_0)}[V(Q_{\sigma(1)},...,Q_{\sigma(m_0)})-\mathbb{E}V(Q_{\sigma(1)},...,Q_{\sigma(m_0)})],$$
where the summation is over all permutation of the set $S_0$. By Jensen's inequality,
$$\mathbb{E}\exp(\lambda \Delta_0)\leq \frac{1}{m_0!}\sum_{\sigma(S_0)}\mathbb{E}\exp\left(\lambda[V(Q_{\sigma(1)},...,Q_{\sigma(m_0)})-\mathbb{E}V(Q_{\sigma(1)},...,Q_{\sigma(m_0)})]\right),$$
for any $\lambda>0$. Therefore, the Chernoff bound for $V(Q_{\sigma(1)},...,Q_{\sigma(m_0)})-\mathbb{E}V(Q_{\sigma(1)},...,Q_{\sigma(m_0)})$ is also the Chernoff bound for $\Delta_0$. According to the form (\ref{eq:UstatV}), $V(Q_{\sigma(1)},...,Q_{\sigma(m_0)})$ is average of i.i.d. random variables with variance $\mathcal{V}^2$. The standard moment generation bound can be applied (see, for example, \cite{Vershynin10}). Therefore,
$$\mathbb{P}(|\Delta_0|>t)\leq 2\exp\left(-\frac{3m_0t^2}{12\mathcal{V}^2+4t}\right).$$
Choosing an appropriate $t$, then with probability at least $1-2/m$,
\begin{equation}
|\Delta_0|\leq 2\mathcal{V}\sqrt{\frac{2\log m}{3m_0}}+\frac{4\log m}{3m_0}. \label{eq:haowan0}
\end{equation}
Similar argument leads to
\begin{equation}
|\Delta_1|\leq 2\mathcal{V}\sqrt{\frac{2\log m}{3m_1}}+\frac{4\log m}{3m_1}, \label{eq:haowan1}
\end{equation}
with probability at least $1-2/m$.

Let us then derive a bound for $|\Delta_2|$. Without loss of generality, assume $m_1\geq m_0$. That is, $m_1\geq\frac{m}{2}$. In this case, we write $\Delta_2$ as
$$\Delta_2=\frac{1}{m_0}\sum_{j\in S_0}\left(\frac{1}{m_1}\sum_{k\in S_1}[(Q_j-Q_k)^2-\mathbb{E}(Q_j-Q_k)^2]\right).$$
By Jensen's inequality, we have
$$\mathbb{E}\exp(\lambda\Delta_2)\leq \frac{1}{m_0}\sum_{j\in S_0}\mathbb{E}\exp\left(\frac{\lambda}{m_1}\sum_{k\in S_1}[(Q_j-Q_k)^2-\mathbb{E}(Q_j-Q_k)^2]\right),$$
for any $\lambda>0$. Thus, the Chernoff bound for $\frac{1}{m_1}\sum_{k\in S_1}[(Q_j-Q_k)^2-\mathbb{E}(Q_j-Q_k)^2]$ is also the Chernoff bound for $\Delta_2$. Note that the former quantity is average of i.i.d. random variables with variance $\mathcal{W}^2$. The standard moment generation bound can be applied to obtain
$$\mathbb{P}\left(|\Delta_2|>t\right)\leq 2\exp\left(-\frac{3m_1t^2}{6\mathcal{W}^2+2t}\right).$$
Choosing an appropriate $t$, then by $m_1^{-1}\leq 2m^{-1}$, we have with probability at least $1-2/m$,
\begin{equation}
|\Delta_2| \leq 2\mathcal{W}\sqrt{\frac{\log m}{m}} + \frac{4\log m}{3m}. \label{eq:haowan2}
\end{equation}
The same bound can also be obtain when $m_1<m_0$ via symmetry.

Plugging (\ref{eq:haowan0}), (\ref{eq:haowan1}) and (\ref{eq:haowan2}) into (\ref{eq:varianceN}) and using union bound, we have with probability at least $1-6/m$,
\begin{eqnarray*}
|\hat{N}-N| &\leq& (1-\pi)^2\mathcal{V}\sqrt{\frac{2\log m}{3m_0}}+\pi^2\mathcal{V}\sqrt{\frac{2\log m}{3m_1}} + \frac{2(1-\pi)^2\log m}{3m_0} + \frac{2\pi^2\log m}{3m_1} \\
&& + 2\pi(1-\pi)\mathcal{W}\sqrt{\frac{\log m}{m}} + \frac{4\pi(1-\pi)\log m}{m} \\
&\leq& \left((1-\pi)^{3/2}+\pi^{3/2}\right)\mathcal{V}\sqrt{\frac{2\log m}{3m_0}} +2\pi(1-\pi)\mathcal{W}\sqrt{\frac{\log m}{m}}\\
&& + \frac{2\log m}{3m}+\frac{4\pi(1-\pi)\log m}{m} \\
&\leq& \left(\sqrt{2/3}\mathcal{V}+2\pi(1-\pi)\mathcal{W}\right)\sqrt{\frac{\log m}{m}} + \frac{10\log m}{3m} \\
&\leq& \left(\frac{15.5}{n^2}\sum_{i\in[n]}p_i^*(1-p_i^*)+5\pi(1-\pi)\left(\frac{2}{n}\sum_{i\in[n]}p_i^*-1\right)^2\right)\sqrt{\frac{\log m}{m}} + \frac{10\log m}{3m},
\end{eqnarray*}
where the last inequality uses the bounds (\ref{eq:vN1}) and (\ref{eq:vN2}).
On the other hand,
\begin{eqnarray}
\nonumber N &=& \frac{1}{2m^2}\sum_{jk}\mathbb{E}(Q_j-Q_k)^2 \\
\nonumber &=& \frac{1}{2m^2}\sum_{\substack{j\neq k\\j,k\in S_0}}\mathbb{E}(Q_j-Q_k)^2+ \frac{1}{2m^2}\sum_{\substack{j\neq k\\j,k\in S_1}}\mathbb{E}(Q_j-Q_k)^2 + \frac{1}{m^2}\sum_{(j,k)\in S_0\times S_1}\mathbb{E}(Q_j-Q_k)^2 \\
\nonumber &=& \frac{m(m-1)}{m^2}\frac{1}{n^2}\sum_{i\in[n]}p_i^*(1-p_i^*) + 2\pi(1-\pi)\left(\frac{2}{n}\sum_{i\in[n]}p_i^*-1\right)^2 \\
\label{eq:outref} &\geq& \frac{1}{2n^2}\sum_{i\in[n]}p_i^*(1-p_i^*) + 2\pi(1-\pi)\left(\frac{2}{n}\sum_{i\in[n]}p_i^*-1\right)^2,
\end{eqnarray}
where the last inequality is due to $m\geq 2$.
Combining the upper bound for $|\hat{N}-N|$ and the lower bound for $N$,
$$|\hat{N}-N|\leq 31N\sqrt{\frac{\log m}{m}}+\frac{10\log m}{m},$$
with probability at least $1-6/m$.

Now we establish (\ref{eq:vN1}). For some $j\neq k$ such that $j,k\in S_0$ or $j,k\in S_1$, $\mathcal{V}^2$ can be expressed as
\begin{equation}
\mathcal{V}^2=\mathbb{E}(Q_j-Q_k)^4-\left(\mathbb{E}(Q_j-Q_k)^2\right)^2.\label{eq:vN3}
\end{equation}
The fourth moment is bounded as
\begin{eqnarray}
\nonumber\mathbb{E}(Q_j-Q_k)^4 &=& \mathbb{E}\left(\frac{1}{n}\sum_{i\in[n]}(X_{ij}-X_{ik})\right)^4 \\
\nonumber&=& \frac{1}{n^4}\sum_{i_1i_2i_3i_4}\mathbb{E}(X_{i_1j}-X_{i_1k})(X_{i_2j}-X_{i_2k})(X_{i_3j}-X_{i_3k})(X_{i_4j}-X_{i_4k}) \\
\label{eq:vN4}&=& \frac{6}{n^4}\sum_{i\neq l}\mathbb{E}(X_{ij}-X_{ik})^2\mathbb{E}(X_{lj}-X_{lk})^2 + \frac{1}{n^4}\sum_{i\in[n]}\mathbb{E}(X_{ij}-X_{ik})^4 \\
\label{eq:vN5}&=& \frac{24}{n^4}\sum_{i\neq l}p_i^*(1-p_i^*)p_l^*(1-p_l^*) + \frac{2}{n^4}\sum_{i\in[n]}p_i^*(1-p_i^*) \\
\nonumber&\leq& 24\left(\frac{1}{n^2}\sum_{i\in[n]}p_i^*(1-p_i^*)\right)^2 +  \frac{2}{n^4}\sum_{i\in[n]}p_i^*(1-p_i^*).
\end{eqnarray}
The equality (\ref{eq:vN4}) is because $\mathbb{E}(X_{ij}-X_{ik})=0$ so that the terms in the expansion having factor $\mathbb{E}(X_{ij}-X_{ik})$ are all zeros. The equality (\ref{eq:vN5}) is because $\mathbb{E}(X_{ij}-X_{ik})^2=\text{Var}(X_{ij}-X_{ik})=\text{Var}(X_{ij})+\text{Var}(X_{ik})=2p_i^*(1-p_i^*)$ and  $\mathbb{E}(X_{ij}-X_{ik})^4=2p_i^*(1-p_i^*)$. The second moment is
$$\mathbb{E}(Q_j-Q_k)^2 = \frac{2}{n^2}\sum_{i\in[n]}p_i^*(1-p_i^*).$$
Plugging the bounds for the second and the fourth moments into (\ref{eq:vN3}), we obtain
$$\mathcal{V}^2 \leq 20\left(\frac{1}{n^2}\sum_{i\in[n]}p_i^*(1-p_i^*)\right)^2 +  \frac{2}{n^4}\sum_{i\in[n]}p_i^*(1-p_i^*).$$
The assumption $\bar{\nu}\leq 1-4n^{-1}$ and the argument (\ref{eq:vDN}) leads to the bound (\ref{eq:vN1}).

Finally, we establish (\ref{eq:vN2}). By the same argument that we used to derive (\ref{eq:vD3}), we have
\begin{eqnarray}
\nonumber\mathcal{W}^2 &\leq& 5\mathbb{E}[Q_j-Q_k-\mathbb{E}(Q_j-Q_k)]^4 \\
\label{eq:vN0}&& + \left(\mathbb{E}[Q_j-Q_k-\mathbb{E}(Q_j-Q_k)]^2\right)^2 + 6\left(\mathbb{E}(Q_j-Q_k)\right)^4,
\end{eqnarray}
for some $(j,k)\in S_0\times S_1$. Again, we bound the fourth moment by
\begin{eqnarray}
\nonumber && \mathbb{E}[Q_j-Q_k-\mathbb{E}(Q_j-Q_k)]^4 \\
\nonumber  &=& \mathbb{E}\left(\frac{1}{n}\sum_{i\in[n]}[(X_{ij}-\mathbb{E}X_{ij})-(X_{ik}-\mathbb{E}X_{ik})]\right)^4 \\
\nonumber  &=& \frac{1}{n^4}\sum_{i_1i_2i_3i_4}\mathbb{E}\Big([(X_{i_1j}-\mathbb{E}X_{i_1j})-(X_{i_1k}-\mathbb{E}X_{i_1k})][(X_{i_2j}-\mathbb{E}X_{i_2j})-(X_{i_2k}-\mathbb{E}X_{i_2k})] \\
\nonumber && [(X_{i_3j}-\mathbb{E}X_{i_3j})-(X_{i_3k}-\mathbb{E}X_{i_3k})][(X_{i_4j}-\mathbb{E}X_{i_4j})-(X_{i_4k}-\mathbb{E}X_{i_4k})]\Big) \\
\label{eq:vN6} &=& \frac{6}{n^4}\sum_{i\neq l}\mathbb{E}[(X_{ij}-\mathbb{E}X_{ij})-(X_{ik}-\mathbb{E}X_{ik})]^2[(X_{lj}-\mathbb{E}X_{lj})-(X_{lk}-\mathbb{E}X_{lk})]^2 \\
\nonumber  && + \frac{1}{n^4}\sum_{i\in[n]}\mathbb{E}[(X_{ij}-\mathbb{E}X_{ij})-(X_{ik}-\mathbb{E}X_{ik})]^4 \\
\label{eq:vN7}&=& \frac{24}{n^4}\sum_{i\neq l}p_i^*(1-p_i^*)p_l^*(1-p_l^*) + \frac{2}{n^4}\sum_{i\in[n]}p_i^*(1-p_i^*) \\
\nonumber&\leq& 24\left(\frac{1}{n^2}\sum_{i\in[n]}p_i^*(1-p_i^*)\right)^2 +  \frac{2}{n^4}\sum_{i\in[n]}p_i^*(1-p_i^*).
\end{eqnarray}
The equality (\ref{eq:vN6}) is derived by the same argument for (\ref{eq:vD4}) and (\ref{eq:vN4}). The equality (\ref{eq:vN7}) is because $\mathbb{E}[(X_{ij}-\mathbb{E}X_{ij})-(X_{ik}-\mathbb{E}X_{ik})]^2=\text{Var}(X_{ij})+\text{Var}(X_{ik})=2p_i^*(1-p_i^*)$ and $\mathbb{E}[(X_{ij}-\mathbb{E}X_{ij})-(X_{ik}-\mathbb{E}X_{ik})]^4=\mathbb{E}(X_{ij}-X_{ik})^4=2p_i^*(1-p_i^*)$. The second moment is
$$\mathbb{E}[Q_j-Q_k-\mathbb{E}(Q_j-Q_k)]^2=\text{Var}(Q_j)+\text{Var}(Q_k)=\frac{2}{n^2}\sum_{i\in[n]}p_i^*(1-p_i^*).$$
Plugging the bounds for the second and the fourth moments into (\ref{eq:vN0}), we obtain
$$\mathcal{W}^2 \leq 124\left(\frac{1}{n^2}\sum_{i\in[n]}p_i^*(1-p_i^*)\right)^2 +  \frac{10}{n^4}\sum_{i\in[n]}p_i^*(1-p_i^*)+6\left(\frac{2}{n}\sum_{i\in[n]}p_i^*-1\right)^4.$$
The assumption $\bar{\nu}\leq 1-4n^{-1}$ and the argument (\ref{eq:vDN}) leads to the bound (\ref{eq:vN2}).
\end{proof}

\subsection{Proofs of Theorem \ref{thm:pEM} and Theorem \ref{thm:EM}} \label{sec:finalpEM}

\begin{proof}[Proof of Theorem \ref{thm:pEM}]
A union bound argument implies that the results of Theorem \ref{thm:pEMgen} and Theorem \ref{thm:revision2} hold simultaneously with probability at least $1-C'/m$ for some constant $C'>0$.
Theorem \ref{thm:revision2} implies either $y^{(0)}$ or $1-y^{(0)}$ satisfies the bound (\ref{eq:initialbound}) in Theorem \ref{thm:pEMgen}. By symmetry of the projected EM algorithm, we obtain the clustering error rate. Now we derive the labeling error rate. For simplicity of notation, we use $\hat{y},\hat{p},\hat{r}$ to denote $y^{(t)}, \check{p}, r^{(t)}$. 
Then
\begin{equation}
\tilde{y}=\hat{y}\mathbb{I}\left\{\frac{1}{n}\sum_i\hat{p}_i\geq\frac{1}{2}\right\}+(1-\hat{y})\mathbb{I}\left\{\frac{1}{n}\sum_i\hat{p}_i<\frac{1}{2}\right\}. \label{eq:charconstopt}
\end{equation}
We are going to show $\frac{1}{m}\sum_j|\tilde{y}_j-y_j^*|=\min(\hat{r},1-\hat{r})$.
The characterization of $\tilde{y}$ by (\ref{eq:charconstopt}) implies that
\begin{eqnarray*}
&& \Bigg(\left\{\hat{r}\leq 1-\hat{r}, \frac{1}{n}\sum_i\hat{p}_i<\frac{1}{2}\right\}\bigcup\left\{\hat{r}>1-\hat{r},\frac{1}{n}\sum_i(1-\hat{p}_i)<\frac{1}{2}\right\}\Bigg)^c \\
&\subset& \left\{\frac{1}{m}\sum_j|\tilde{y}_j-y_j^*|=\min(\hat{r},1-\hat{r})\right\}.
\end{eqnarray*}
Hence, it is sufficient to upper bound
\begin{equation}
\mathbb{P}\left\{\hat{r}\leq 1-\hat{r}, \frac{1}{n}\sum_i\hat{p}_i<\frac{1}{2}\right\}+\mathbb{P}\left\{\hat{r}>1-\hat{r},\frac{1}{n}\sum_i(1-\hat{p}_i)<\frac{1}{2}\right\}. \label{eq:pfthm0}
\end{equation}
Without loss of generality, we only bound the first term of (\ref{eq:pfthm0}), because the second term can be bounded in the same way. By direct calculation using the representation (\ref{eq:representation}), this leads to
\begin{equation}
\hat{p}_i = \frac{1}{m}\sum_jT_{ij}+\frac{1}{m}\sum_j(1-2T_{ij})|\hat{y}_j-y_j^*|. \label{eq:handT}
\end{equation}
Summing (\ref{eq:handT}) over $i$ gives the bound
\begin{equation}
\left|\frac{1}{n}\sum_i\hat{p}_i-\frac{1}{nm}\sum_i\sum_jT_{ij}\right| \leq \hat{r}. \label{eq:sumhandT}
\end{equation}
Using the bound (\ref{eq:sumhandT}) and $\mathbb{P}\left\{\min(\hat{r},1-\hat{r})>e^{-n\bar{\nu}/2}\right\}\leq C'/m$ from the clustering rate, we have
\begin{eqnarray*}
&& \mathbb{P}\left\{\hat{r}\leq 1-\hat{r}, \frac{1}{n}\sum_i\hat{p}_i<\frac{1}{2}\right\} \\
&\leq& \mathbb{P}\left\{\hat{r}\leq 1-\hat{r}, \frac{1}{n}\sum_i\hat{p}_i<\frac{1}{2}, \min(\hat{r},1-\hat{r})\leq e^{-n\bar{\nu}/2}\right\} + \mathbb{P}\left\{\min(\hat{r},1-\hat{r})>e^{-n\bar{\nu}/2}\right\} \\
&\leq& \mathbb{P}\left\{\frac{1}{n}\sum_i\hat{p}_i<\frac{1}{2}, \hat{r}\leq e^{-n\bar{\nu}/2}\right\} + C'/m \\
&\leq& \mathbb{P}\left\{\frac{1}{nm}\sum_i\sum_j T_{ij}\leq e^{-n\bar{\nu}/2}+\frac{1}{2} \right\}+C'/m.
\end{eqnarray*}
Under the assumption (\ref{eq:asspave}) and the condition on $\bar{\nu}$, we can use Hoeffding's inequality in Lemma \ref{lem:hoeffding} to get
\begin{eqnarray*}
&&  \mathbb{P}\left\{\frac{1}{nm}\sum_i\sum_j T_{ij}\leq e^{-n\bar{\nu}/8}+\frac{1}{2} \right\} \\
&\leq&  \mathbb{P}\left\{\frac{1}{nm}\sum_i\sum_j (T_{ij}-p_i^*)\leq -\sqrt{\frac{\log m}{nm}}\right\} \\
&\leq& C_1m^{-2}.
\end{eqnarray*}
Therefore, (\ref{eq:pfthm0}) is bounded by $C_2/m$ for some constant $C_2>0$ and $\frac{1}{m}\sum_j|\tilde{y}_j-y_j^*|=\min(\hat{r},1-\hat{r})$ with probability at least $1-C_2/m$. By the conclusions of the clustering error rate, the proof is complete.
\end{proof}

\begin{proof}[Proof of Theorem \ref{thm:EM}]
Compare EM with projected EM, it is sufficient to prove $\check{p}_i^{(t)}=p_i^{(t)}$ for every $i\in[n]$ and $t\geq 1$.
Define the event $E_0=\left\{r^{(0)}\leq \sqrt{\frac{\log m}{m}}\right\}$. Then the event $E_0\cap E_1\cap E_2$ occurs with probability at least $1-C/m$. The following analysis assumes $E_0\cap E_1\cap E_2$.
Remember the definition of $\check{p}_i^{(t)}$ in (\ref{eq:checkp}). By (\ref{eq:boundcheckp}), we have $|\check{p}_i^{(t)}-p_i^*|\leq 2\sqrt{\frac{\log m}{m}}$ for all $i\in[n]$ and $t\geq 1$. Under the assumption (\ref{eq:boundedability}), we have $\check{p}_i^{(t)}\in[\lambda,1-\lambda]$ and this implies $\check{p}_i^{(t)}=p_i^{(t)}$ for all $i\in[n]$ and $t\geq 1$. Thus, the proof is complete.
\end{proof}

\section{Proofs of Theorem \ref{thm:pEMwa}}

In this section, we gather the proofs of the results for estimating workers' abilities. For simplicity of notation, we use $\hat{p}$ and $\hat{r}$ to denote $\tilde{p}$ and $r^{(t)}$. By the definition of $\tilde{p}$ in (\ref{eq:pEMtp}) and the representation (\ref{eq:representation}), we have $\max_{i\in[n]}|\hat{p}_i-\frac{1}{m}\sum_jT_{ij}|\leq \hat{r}$. Under the assumption, we have $\hat{r}\leq \exp\big(-n\bar{\nu}/2\big)\leq m^{-3/4}$ with probability at least $1-C'/m$. Hence,
\begin{equation}
\max_{i\in[n]}\left|\hat{p}_i-\frac{1}{m}\sum_{j}T_{ij}\right| \leq m^{-3/4}, \label{eq:phatandT}
\end{equation}
with probability at least $1-C'/m$. We prove the non-asymptotic bounds using this result.

\begin{proof}[Proofs of (\ref{eq:linfty}) and (\ref{eq:l2})]
By (\ref{eq:phatandT}), with probability at least $1-C'/m$, we have
\begin{equation}
||\hat{p}-p^*||_{\infty}\leq \max_{i\in[n]}\left|\frac{1}{m}\sum_j(T_{ij}-p_i^*)\right|+m^{-3/4},\label{eq:inftyinproof}
\end{equation}
By Lemma \ref{lem:hoeffding}, the first term of (\ref{eq:inftyinproof}) can be bounded by $\sqrt{\frac{\log m}{m}}$ with probability at least $1-C_1/m$. Thus, the bound for $||\hat{p}-p^*||_{\infty}$ immediately follows. Using (\ref{eq:phatandT}), we have
$$\frac{1}{n}||\hat{p}-p^*||^2\leq \frac{1}{n}\sum_i\left|\frac{1}{m}\sum_j(T_{ij}-p_i^*)\right|^2+m^{-3/2}.$$
For each $i\in[n]$, by Lemma \ref{lem:hoeffding}, $\left|\frac{1}{\sqrt{m}}\sum_j(T_{ij}-p_i^*)\right|^2$ is an sub-exponential random variable with bounded sub-exponential norm (see Section 5.2 of \cite{Vershynin10}), and
$$\mathbb{E}\left|\frac{1}{m}\sum_j(T_{ij}-p_i^*)\right|^2=\text{Var}\left(\frac{1}{m}\sum_jT_{ij}\right)=\frac{1}{m}p_i^*(1-p_i^*).$$
By Lemma \ref{lem:bernstein}, $\frac{1}{n}\sum_i\left|\frac{1}{m}\sum_j(T_{ij}-p_i^*)\right|^2\leq \frac{1}{nm}\sum_i p_i^*(1-p_i^*)+C\sqrt{\frac{\log m}{m^2n}}$ with probability at least $1-C_2/m$ for some  constants $C$ and $C_2$. Thus, the bound for $\frac{1}{n}||\hat{p}-p^*||^2$ follows.
\end{proof}

Now we prove Theorem (\ref{eq:CLT}) and (\ref{eq:hdCLT}. Since (\ref{eq:CLT}) is a direct application of Slutsky's theorem in view of (\ref{eq:phatandT}), we only state the proof of (\ref{eq:hdCLT}). The result (\ref{eq:hdCLT}) is an application of the recent development of high-dimensional central limit theorem in econometrics due to \cite{chernozhukov12}. We state a special case of their result as the  lemma below.

\begin{lemma} \label{lem:hdCLT}
Let $U_1,...,U_m$ be i.i.d. sub-Gaussian vectors in $\mathbb{R}^n$ with identity covariance $I$. Then, we have
$$\sup_{t\in\mathbb{R}}\left|\mathbb{P}\left(\max_{i\in[n]}\left|\frac{1}{\sqrt{m}}\sum_{j\in[m]}\Big(U_j(i)-\mathbb{E}U_j(i)\Big)\right|\leq t\right)-\mathbb{P}\Big(\max_{i\in[n]}|Z_i|\leq t\Big)\right|\leq \frac{C\log n}{m^{1/8}},$$
where $Z_1,...,Z_n$ are i.i.d. $N(0,1)$ and $C$ is an absolute constant.
\end{lemma}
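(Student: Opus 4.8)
The plan is to obtain Lemma~\ref{lem:hdCLT} as a specialization of the high-dimensional central limit theorem of \cite{chernozhukov12}. Two small adjustments are needed: the statistic here is the maximum of the \emph{absolute values} of the coordinates rather than of the signed coordinates, and the regularity hypotheses of the cited theorem must be checked against the sub-Gaussianity and identity-covariance assumptions. Neither is a genuine difficulty — the whole argument is essentially bookkeeping.

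\textbf{Step 1: dimension doubling.} First I would remove the absolute values by passing to dimension $2n$. Write $W=\frac{1}{\sqrt m}\sum_{j\in[m]}(U_j-\mathbb{E}U_j)\in\mathbb{R}^n$, and for each $j$ set $\tilde U_j=\bigl(U_j(1),\dots,U_j(n),-U_j(1),\dots,-U_j(n)\bigr)\in\mathbb{R}^{2n}$, so that $\tilde W:=\frac{1}{\sqrt m}\sum_{j\in[m]}(\tilde U_j-\mathbb{E}\tilde U_j)$ satisfies $\max_{k\in[2n]}\tilde W_k=\max_{i\in[n]}|W_i|$. Since $\mathrm{Cov}(U_j)=I_n$, the covariance of $\tilde U_j$ is the $2n\times 2n$ matrix with diagonal blocks $I_n$ and off-diagonal blocks $-I_n$, which is exactly the covariance of the Gaussian vector $(Z_1,\dots,Z_n,-Z_1,\dots,-Z_n)$; hence the correct comparison Gaussian for the $2n$-dimensional problem is $(Z,-Z)$ itself, and the maximum of its coordinates equals $\max_{i\in[n]}|Z_i|$. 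Thus the asserted inequality is just the Gaussian-approximation bound for $\max_{k\in[2n]}\tilde W_k$ against the maximum of the Gaussian vector with covariance $\mathrm{Cov}(\tilde U_1)$. I would note that this comparison covariance is degenerate (of rank $n$), which is harmless because \cite{chernozhukov12} requires only a lower bound on each individual coordinate variance, and every coordinate of $\tilde U_j$ has variance $1$.

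\textbf{Step 2: moment conditions.} Next I would verify the hypotheses of the cited theorem. Sub-Gaussianity of $U_j$ together with $\mathrm{Cov}(U_j)=I_n$ gives, coordinatewise, $\mathrm{Var}(U_j(i))=1$ and $\|U_j(i)\|_{\psi_2}\le c_0$ for an absolute constant $c_0$; hence each centered coordinate $\tilde U_j(k)-\mathbb{E}\tilde U_j(k)$ has variance $1$ and satisfies the sub-exponential moment condition $\mathbb{E}\exp\bigl(|\tilde U_j(k)-\mathbb{E}\tilde U_j(k)|/C_0\bigr)\le 2$ with $C_0$ an absolute constant. These are precisely the conditions under which \cite{chernozhukov12} proves Gaussian approximation for maxima, with an envelope parameter that does not grow with $n$ or $m$.

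\textbf{Step 3: invoke and simplify.} Finally I would apply the theorem of \cite{chernozhukov12} to the normalized centered sum of the $m$ i.i.d.\ vectors $\tilde U_j-\mathbb{E}\tilde U_j$ in dimension $2n$, obtaining
$$\sup_{t\in\mathbb{R}}\left|\mathbb{P}\Bigl(\max_{k\in[2n]}\tilde W_k\le t\Bigr)-\mathbb{P}\Bigl(\max_{k\in[2n]}\tilde Z_k\le t\Bigr)\right|\le C\left(\frac{\log^{7}(nm)}{m}\right)^{1/6},$$
with $C$ absolute (using dimension $2n$ instead of $n$, and absorbing any extra polylogarithmic factor from the sub-Gaussian envelope, only changes the constant). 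To reach the stated form I would bound the right-hand side by $C\log n/m^{1/8}$, splitting on the size of $\log n$: if $\log n$ is below a fixed small power of $m$, then since $1/8<1/6$ the displayed bound is already at most $C\log n/m^{1/8}$ (the slack between the exponents absorbs the $\log(nm)$ factors), while if $\log n$ is above that power then $C\log n/m^{1/8}\ge 1$ and the inequality holds trivially because its left-hand side is a difference of probabilities. I expect the ``main obstacle'' to be purely organizational: correctly matching the hypotheses and conclusion of \cite{chernozhukov12} to the present setup and carrying the dimension-doubling cleanly through their rate. No new idea is required.
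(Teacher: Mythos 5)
Your argument is correct and coincides with the paper's approach: the paper states this lemma without proof, simply asserting that it is a special case of the high-dimensional CLT of \cite{chernozhukov12}, and your dimension-doubling reduction for the absolute values, verification of the variance and sub-exponential moment conditions, and conversion of the $(\log^7(nm)/m)^{1/6}$ rate into $C\log n/m^{1/8}$ (splitting on whether $\log n$ exceeds a small power of $m$) are precisely the bookkeeping that justifies that assertion. Nothing further is needed.
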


The next lemma characterizes the perturbation of the distribution function of $\max_{i\in[n]}|Z_i|$. It is due to \cite{wasserman13}

\begin{lemma} \label{lem:perturb}
Consider $Z_1,...,Z_n$ i.i.d. $N(0,1)$. There is some absolute constant $C>0$, such that for every $\epsilon>0$, we have
$$\sup_{t\in\mathbb{R}}\left|\mathbb{P}\Big(\max_{i\in[n]}|Z_i|\leq t+\epsilon\Big)-\mathbb{P}\Big(\max_{i\in[n]}|Z_i|\leq t\Big)\right|\leq C\epsilon\sqrt{\log\Big(\frac{n}{\epsilon}\Big)}.$$
\end{lemma}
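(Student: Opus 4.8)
The plan is to reduce the claim to a uniform pointwise bound on the density of $M_n := \max_{i\in[n]}|Z_i|$. Since $|Z_1|,\dots,|Z_n|$ are i.i.d.\ with distribution function $G(s) = (2\Phi(s)-1)\,\mathbb{I}\{s\ge 0\}$ and density $g(s) = 2\phi(s)\,\mathbb{I}\{s\ge 0\}$, we have $F(t) := \mathbb{P}(M_n\le t) = G(t)^n$, so $F$ is absolutely continuous with density $f(s) = n\,G(s)^{n-1}g(s) = 2n\,\phi(s)\,(2\Phi(s)-1)^{n-1}$ for $s>0$ and $f\equiv 0$ on $(-\infty,0]$. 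Hence, uniformly over $t\in\mathbb{R}$,
$$\mathbb{P}(M_n\le t+\epsilon)-\mathbb{P}(M_n\le t)=\int_t^{t+\epsilon}f(s)\,ds\le\epsilon\sup_{s\ge 0}f(s),$$
and it suffices to show $\sup_{s\ge0}f(s)\le C\sqrt{\log n}$ for an absolute constant $C$; since $\log(n/\epsilon)\ge\log n$ whenever $\epsilon\le1$, this is in fact slightly stronger than the stated inequality.

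To bound $\sup_{s\ge0}f(s)$ I would first use $(2\Phi(s)-1)^{n-1}=(1-2\bar\Phi(s))^{n-1}\le e^{-2(n-1)\bar\Phi(s)}$, with $\bar\Phi=1-\Phi$, to get $f(s)\le 2n\phi(s)\,e^{-2(n-1)\bar\Phi(s)}$, and then split $[0,\infty)$ into three regimes. For $s\ge 2\sqrt{\log n}$ the prefactor alone suffices: $2n\phi(s)\le \tfrac{2}{\sqrt{2\pi}}\,n\,e^{-s^2/2}\le\tfrac{2}{\sqrt{2\pi}\,n}\le1$. For $1\le s\le 2\sqrt{\log n}$, apply the Mills-ratio lower bound $\bar\Phi(s)\ge\phi(s)/(2s)$; setting $w:=n\phi(s)/s$ gives $f(s)\le 2s\cdot w e^{-w/2}\le (4/e)\,s\le (8/e)\sqrt{\log n}$ after maximizing $w\mapsto we^{-w/2}$. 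For $0\le s\le 1$, $\bar\Phi(s)\ge\bar\Phi(1)$ is bounded below by an absolute constant, so $f(s)\le 2n\phi(0)\,e^{-2(n-1)\bar\Phi(1)}$ is bounded over all $n$ by an absolute constant. Taking the maximum over the three regimes proves $\sup_{s\ge0}f(s)\le C\sqrt{\log n}$.

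The only genuinely delicate step is the middle regime, where both $2n\phi(s)$ and $(2\Phi(s)-1)^{n-1}$ are of order one (or larger) near the typical value $\sqrt{2\log n}$ of $M_n$, and the bound is saved only by the exact trade-off between them — this is precisely where the $\sqrt{\log n}$ (hence $\sqrt{\log(n/\epsilon)}$) factor originates, and why a crude union bound $\mathbb{P}(t<M_n\le t+\epsilon)\le n\,\mathbb{P}(|Z_1|\in(t,t+\epsilon])$, which yields only $O(n\epsilon)$, is useless here. I would finish by disposing of the boundary cases: for $t<0$ one has $F(t)=0$ and $F(t+\epsilon)\le(2\Phi(\epsilon)-1)^n\le(\sqrt{2/\pi}\,\epsilon)^n\le\sqrt{2/\pi}\,\epsilon$; the cases $n=1$ and $\epsilon$ not small are irrelevant since the lemma is only invoked with $n\ge2$ and $\epsilon\to0$. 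This yields the claimed modulus-of-continuity bound with an absolute constant $C$.
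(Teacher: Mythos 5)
The paper does not actually prove this lemma: it is stated as a known result and attributed to an external reference (\cite{wasserman13}), so there is no in-paper argument to compare against. Your proof is a correct, self-contained replacement. The reduction to $\sup_{s\ge 0} f(s)$ via $F(t+\epsilon)-F(t)=\int_t^{t+\epsilon}f$ is sound, the density formula $f(s)=2n\phi(s)(2\Phi(s)-1)^{n-1}$ is right, and each of the three regimes checks out: the inequality $(1-2\bar\Phi(s))^{n-1}\le e^{-2(n-1)\bar\Phi(s)}$ is valid since $2\bar\Phi(s)\in[0,1]$ for $s\ge0$; the Mills-ratio bound $\bar\Phi(s)\ge \frac{s}{1+s^2}\phi(s)\ge \frac{\phi(s)}{2s}$ does hold for $s\ge1$; with $w=n\phi(s)/s$ and $n\ge2$ one indeed gets $f(s)\le 2sw e^{-w(n-1)/n}\le 2sw e^{-w/2}\le (4/e)s$; and $ne^{-2(n-1)\bar\Phi(1)}$ is bounded uniformly in $n$. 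This yields $\sup_{s\ge0}f(s)\le C\sqrt{\log n}$ for $n\ge2$, which is slightly stronger than the stated bound for $\epsilon\le1$. The only caveats are the ones you already flag, and they are defects of the lemma's statement rather than of your argument: the inequality as written cannot hold for $n=1$ or for $\epsilon$ so large that $\log(n/\epsilon)$ is small or negative, but the paper only invokes the lemma with $n\to\infty$ and $\epsilon=\sqrt{C'/m}\to0$, where your argument fully covers the application. Compared with citing the general Gaussian anti-concentration machinery, your approach buys an elementary two-line-idea proof (density bound plus a three-regime case analysis) at the cost of being specific to the i.i.d.\ standard normal case, which is all that is needed here.
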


\begin{proof}[Proof of (\ref{eq:hdCLT})]
Define the following quantities
$$A=\max_{i\in[n]}\left|\frac{1}{\sqrt{m}}\sum_{j\in[m]}\frac{T_{ij}-p_i^*}{\sqrt{p_i^*(1-p_i^*)}}\right|,\quad B=\max_{i\in[n]}\left|\frac{\sqrt{m}(\hat{p}_i-p_i^*)}{\sqrt{p_i^*(1-p_i^*)}}\right|-A.$$
Using notation $Q(t)=\mathbb{P}\Big(\max_{i\in[n]}Z_i\leq t\Big)$, we have
\begin{eqnarray*}
&& \mathbb{P}\Big(A+B\leq t\Big)-Q(t) \\
&\leq&  \mathbb{P}\Big(A+B\leq t, |B|\leq\epsilon\Big)+\mathbb{P}\Big(|B|>\epsilon\Big)-Q(t) \\
&\leq&  \mathbb{P}\Big(A\leq t+\epsilon \Big)-Q(t+\epsilon) + \mathbb{P}\Big(|B|>\epsilon\Big) + \left|Q(t+\epsilon)-Q(t)\right|.
\end{eqnarray*}
Similarly,
\begin{eqnarray*}
&& Q(t) - \mathbb{P}\Big(A+B\leq t\Big) \\
&\leq& Q(t-\epsilon)-\mathbb{P}\Big(A\leq t-\epsilon\Big) + \mathbb{P}\Big(|B|>\epsilon\Big) + \left|Q(t-\epsilon)-Q(t)\right|.
\end{eqnarray*}
Therefore, $\sup_{t}\left|Q(t) - \mathbb{P}\Big(A+B\leq t\Big)\right|$ is bounded by
$$\sup_{t}\left|Q(t) - \mathbb{P}\Big(A\leq t\Big)\right|+\mathbb{P}\Big(|B|>\epsilon\Big)+\sup_t\left|Q(t+\epsilon)-Q(t)\right|.$$
The first term and the third term above are bounded by Lemma \ref{lem:hdCLT} and Lemma \ref{lem:perturb} respectively. By (\ref{eq:phatandT}) and the assumption (\ref{eq:assboundp}), we have
$$\mathbb{P}\Big(|B|>\sqrt{C'/m}\Big)\leq C_1/m.$$
Hence, letting $\epsilon=\sqrt{C'/m}$, we have
\begin{eqnarray*}
&& \sup_{t}\left|Q(t) - \mathbb{P}\Big(A+B\leq t\Big)\right| \\
&\leq& \frac{C\log n}{m^{1/8}} + C\sqrt{\frac{C'}{m}}\sqrt{\log n+\frac{1}{2}\log\frac{m}{C'}} + C_1/m \\
&\leq& \frac{C_2\log n}{m^{1/8}},
\end{eqnarray*}
for some positive constant $C_2$. Thus, the proof is complete.
\end{proof}

\section{Proofs of Theorem  \ref{thm:lower1} and Theorem  \ref{thm:lower2}} \label{sec:prooflower}

\begin{proof}[Proof of Theorem \ref{thm:lower1}]
Consider the parameter space $\mathcal{P}_{\ceil{n\bar{\nu}}/n}$ defined in (\ref{eq:paraspacenu}). Remember the least favorable case $\mathcal{P}'$ we have constructed in (\ref{eq:lfq}). Then, we have the following reduction
\begin{eqnarray*}
\sup_{y\in\{0,1\}^m,p\in\mathcal{P}_{\bar{\nu}}}E_{p,y}\left(\frac{1}{m}\sum_{j\in[m]}|\hat{y}_j-y_j|\right) &\geq& \sup_{y\in\{0,1\}^m,p\in\mathcal{P}_{\ceil{n\bar{\nu}}/n}}E_{p,y}\left(\frac{1}{m}\sum_{j\in[m]}|\hat{y}_j-y_j|\right) \\
&\geq&  \sup_{y\in\{0,1\}^m,p\in\mathcal{P}'}E_{p,y}\left(\frac{1}{m}\sum_{j\in[m]}|\hat{y}_j-y_j|\right),
\end{eqnarray*}
for any $\hat{y}\in[0,1]^m$. The first inequality is because $\ceil{n\bar{\nu}}/n\geq\bar{\nu}$, and the second inequality is because $\mathcal{P}_{\ceil{n\bar{\nu}}/n}\supset\mathcal{P}'$. Consider uniform prior on both $\{0,1\}^m$ and $\mathcal{P}'$, and we have a further reduction by
\begin{eqnarray}
\nonumber &&  \sup_{y\in\{0,1\}^m,p\in\mathcal{P}'}E_{p,y}\left(\frac{1}{m}\sum_{j\in[m]}|\hat{y}_j-y_j|\right) \\
\nonumber &\geq& \frac{1}{m|\mathcal{P}'|}\sum_{j\in[m]}\sum_{p\in\mathcal{P}'}\Bigg(\frac{1}{2}E_{p,y_j=1}|\hat{y}_j-1|+\frac{1}{2}E_{p,y_j=0}|\hat{y}_j-0|\Bigg) \\
\label{eq:bayesriskinproof} &\geq& \frac{1}{m|\mathcal{P}'|}\sum_{j\in[m]}\sum_{p\in\mathcal{P}'}\Bigg(\frac{1}{2}E_{p,y_j=1}(\hat{y}_j-1)^2+\frac{1}{2}E_{p,y_j=0}(\hat{y}_j-0)^2\Bigg),
\end{eqnarray}
for any $\hat{y}\in[0,1]^m$. We have bounded the maximum risk from below by the Bayes risk (\ref{eq:bayesriskinproof}). Define the set $S_{p'}=\{i\in[n]: p_i'=1\}$ for every $p'\in\mathcal{P}'$. The Bayes risk (\ref{eq:bayesriskinproof}) can be minimized by the Bayes estimator (Chapter 5 of \cite{lehmann98}), which has the form
\begin{eqnarray*}
\hat{y}_j &\propto& \sum_{p'\in\mathcal{P}'}P_{p',y_j=1}(X_{1j},...,X_{nj}) = \sum_{p'\in\mathcal{P}'}\Bigg(\frac{1}{2}\Bigg)^{n-\ceil{n\bar{\nu}}}\prod_{i\in S_{p'}}\mathbb{I}\{X_{ij}=1\}, \\
1-\hat{y}_j &\propto& \sum_{p'\in\mathcal{P}'}P_{p',y_j=0}(X_{1j},...,X_{nj}) = \sum_{p'\in\mathcal{P}'}\Bigg(\frac{1}{2}\Bigg)^{n-\ceil{n\bar{\nu}}}\prod_{i\in S_{p'}}\mathbb{I}\{X_{ij}=0\} ,
\end{eqnarray*}
via Bayes formula. By symmetry of the distribution and the representation (\ref{eq:representation}), for any $p\in\mathcal{P}'$ and $j\in[m]$, each term of (\ref{eq:bayesriskinproof}) can be written as
\begin{eqnarray*}
&&\frac{1}{2}E_{p,y_j=1}(\hat{y}_j-1)^2+\frac{1}{2}E_{p,y_j=0}(\hat{y}_j-0)^2 \\
&=& E_p\left(\frac{\sum_{p'\in\mathcal{P}'}\prod_{i\in S_{p'}}\mathbb{I}\{T_{ij}=0\}}{\sum_{p'\in\mathcal{P}'}\prod_{i\in S_{p'}}\mathbb{I}\{T_{ij}=0\}+\sum_{p'\in\mathcal{P}'}\prod_{i\in S_{p'}}\mathbb{I}\{T_{ij}=1\}}\right)^2,
\end{eqnarray*}
where $T_{ij}$ is a Bernoulli random variable with parameter $p_i$ under $E_p$. The above formula is lower bounded by
\begin{equation}
\frac{1}{4}\big(6e\big)^{-2n\bar{\nu}-2}P_p\left\{\sum_{p'\in\mathcal{P}'}\prod_{i\in S_{p'}}\mathbb{I}\{T_{ij}=0\}\geq \big(6e\big)^{-n\bar{\nu}-1}\sum_{p'\in\mathcal{P}'}\prod_{i\in S_{p'}}\mathbb{I}\{T_{ij}=1\}\right\}. \label{eq:Bayeslower}
\end{equation}
Hence, it suffices to lower bound the probability of the event above. Suppose $\sum_{i\in[n]}\mathbb{I}\{T_{ij}=0\}\geq \frac{1}{2}\Big(n-\ceil{n\bar{\nu}}\Big)$. Namely, there are at least $\frac{1}{2}\Big(n-\ceil{n\bar{\nu}}\Big)$ zeros  and at most $\frac{1}{2}\Big(n+\ceil{n\bar{\nu}}\Big)$ ones in the sequence $\{T_{ij}\}_{i\in[n]}$. Because $|S_{p'}|=\ceil{n\bar{\nu}}$ for each $p'\in\mathcal{P}'$, we have
$$\sum_{p'\in\mathcal{P}'}\prod_{i\in S_{p'}}\mathbb{I}\{T_{ij}=0\}\geq {\frac{1}{2}\big(n-\ceil{n\bar{\nu}}\big)\choose \ceil{n\bar{\nu}}}\geq \left(\frac{\frac{1}{2}\big(n-\ceil{n\bar{\nu}}\big)}{\ceil{n\bar{\nu}}}\right)^{\ceil{n\bar{\nu}}},$$
$$\sum_{p'\in\mathcal{P}'}\prod_{i\in S_{p'}}\mathbb{I}\{T_{ij}=1\}\leq {\frac{1}{2}\big(n+\ceil{n\bar{\nu}}\big)\choose \ceil{n\bar{\nu}}}\leq \left(\frac{\frac{1}{2}e\big(n+\ceil{n\bar{\nu}}\big)}{\ceil{n\bar{\nu}}}\right)^{\ceil{n\bar{\nu}}}.$$
This implies
$$\frac{\sum_{p'\in\mathcal{P}'}\prod_{i\in S_{p'}}\mathbb{I}\{T_{ij}=0\}}{\sum_{p'\in\mathcal{P}'}\prod_{i\in S_{p'}}\mathbb{I}\{T_{ij}=1\}}\geq \left(\frac{n-\ceil{n\bar{\nu}}}{e(n+\ceil{n\bar{\nu}})}\right)^{\ceil{n\bar{\nu}}}\geq \left(\frac{n-n\bar{\nu}-1}{e(n+n\bar{\nu})}\right)^{n\bar{\nu}+1},$$
which is greater than $\big(6e\big)^{-n\bar{\nu}-1}$ under the assumption that $\bar{\nu}<1/2$ and $n\geq 4$. The above argument implies that
$$\left\{\frac{\sum_{p'\in\mathcal{P}'}\prod_{i\in S_{p'}}\mathbb{I}\{T_{ij}=0\}}{\sum_{p'\in\mathcal{P}'}\prod_{i\in S_{p'}}\mathbb{I}\{T_{ij}=1\}}\geq (6e)^{-n\bar{\nu}-1}\right\}\supset \left\{\sum_{i\in[n]}\mathbb{I}\{T_{ij}=0\}\geq \frac{1}{2}\Big(n-\ceil{n\bar{\nu}}\Big)\right\}.$$
Thus, (\ref{eq:Bayeslower}) is lower bounded by
\begin{eqnarray*}
&& \frac{1}{4}\big(6e\big)^{-2n\bar{\nu}-2}P_p\left\{\sum_{i\in[n]}\mathbb{I}\{T_{ij}=0\}\geq \frac{1}{2}\Big(n-\ceil{n\bar{\nu}}\Big)\right\} \\
&\geq& \frac{1}{4}\big(6e\big)^{-2n\bar{\nu}-2}P_p\left\{\sum_{i\in S_p^c}\mathbb{I}\{T_{ij}=0\}\geq \frac{1}{2}\Big(n-\ceil{n\bar{\nu}}\Big)\right\} \geq \frac{1}{8}\big(6e\big)^{-2n\bar{\nu}-2}.
\end{eqnarray*}
The last inequality is because under $P_p$, The random variable $\sum_{i\in S_p^c}\mathbb{I}\{T_{ij}=0\}$ is  Binomial distribution with mean $\frac{1}{2}\Big(n-\ceil{n\bar{\nu}}\Big)$. Thus, the probability that it is no less than its mean is no less than $1/2$. To summarize, for each $p\in\mathcal{P}'$ and $j\in[m]$, we have
$$\frac{1}{2}E_{p,y_j=1}(\hat{y}_j-1)^2+\frac{1}{2}E_{p,y_j=0}(\hat{y}_j-0)^2\geq \frac{1}{8}\big(6e\big)^{-2n\bar{\nu}-2}\geq\frac{1}{8(6e)^2}\exp\big(-6n\bar{\nu}\big).$$
Taking average over $p\in\mathcal{P}'$ and $j\in[m]$ gives the lower bound for (\ref{eq:bayesriskinproof}). This implies the desired lower bound for the minimax risk.
\end{proof}

\begin{proof}[Proof of Theorem \ref{thm:lower2}]
Remembering the parameter space $\mathcal{P}_{\bar{\mu}}$ defined in (\ref{eq:paraspacemu}) and the least favorable case $p'$ defined in (\ref{eq:lfmu}). It is easy to see that $p'\in\mathcal{P}_{\bar{\mu}}$. Therefore, we have the reduction
$$\sup_{y\in\{0,1\}^m,p\in\mathcal{P}_{\bar{\mu}}}E_{p,y}\left(\frac{1}{m}\sum_{j\in[m]}|\hat{y}_j-y_j|\right)\geq \sup_{y\in\{0,1\}^m}E_{p',y}\left(\frac{1}{m}\sum_{j\in[m]}|\hat{y}_j-y_j|\right),$$
for any $\hat{y}\in[0,1]^m$.
Similar to what we have done in the proof of Theorem \ref{thm:lower1}, by using a uniform prior on $\{0,1\}^m$, the maximum risk is lower bounded by the Bayes risk
\begin{eqnarray*}
 \sup_{y\in\{0,1\}^m}E_{p',y}\left(\frac{1}{m}\sum_{j\in[m]}|\hat{y}_j-y_j|\right) \geq \frac{1}{m}\sum_{j\in[m]}\Bigg(\frac{1}{2}E_{p',y_j=1}(\hat{y}_j-1)^2+\frac{1}{2}E_{p',y_j=0}(\hat{y}_j-0)^2\Bigg),
\end{eqnarray*}
where the Bayes risk on the right hand side of the above inequality is minimized by the Bayes solution
\begin{eqnarray*}
\hat{y}_j &\propto& \prod_{i\in[n]}\bar{\mu}^{X_{ij}}(1-\bar{\mu})^{1-X_{ij}}, \\
1-\hat{y}_j &\propto& \prod_{i\in[n]}\bar{\mu}^{1-X_{ij}}(1-\bar{\mu})^{X_{ij}}.
\end{eqnarray*}
By symmetry and (\ref{eq:representation}), for each $j\in[m]$, we have
\begin{eqnarray}
\nonumber && \frac{1}{2}E_{p',y_j=1}(\hat{y}_j-1)^2+\frac{1}{2}E_{p',y_j=0}(\hat{y}_j-0)^2 \\
\label{eq:muBayesrisk} &=& E_{p'}\left(1+\exp\Bigg(\sum_i (2T_{ij}-1)\log\frac{\bar{\mu}}{1-\bar{\mu}}\Bigg)\right)^{-2}.
\end{eqnarray}
We wish we could move the expectation onto the exponent. Define $g(x)=(1+e^x)^{-2}$. Then we have
$$g''(x)=\frac{2e^x(2e^x-1)}{(1+e^x)^4}\geq 0,\quad\text{when }x\geq -\log 2.$$
Certainly $g(x)$ is a convex function when $x\geq 0$. Define the event $A_j=\left\{\sum_i(2T_{ij}-1)\geq 0\right\}$, and then (\ref{eq:muBayesrisk}) can be lower bounded by
\begin{eqnarray}
\nonumber && E_{p'}\left[g\left(\sum_i (2T_{ij}-1)\log\frac{\bar{\mu}}{1-\bar{\mu}}\right)\Big| A_j\right] P_{p'}(A_j) \\
\label{eq:lowermuBayes} &\geq& g\left(\log\frac{\bar{\mu}}{1-\bar{\mu}}E_{p'}\left[\sum_i (2T_{ij}-1)\Big|A_j\right]\right)P_{p'}(A_j),
\end{eqnarray}
where we have used Jensen's inequality for conditional expectation. From (\ref{eq:lowermuBayes}), it suffices to lower bound $P_{p'}(A_j)$ and upper bound $E_{p'}\left[\sum_i (2T_{ij}-1)\Big|A_j\right]$. Using Lemma \ref{lem:hoeffding}, we have
$$P_{p'}(A_j)\geq 1-\exp\Big(-2n\big(\bar{\mu}-1/2\big)^2\Big)\geq 1-e^{-n/8},$$
where we have used the assumption $\bar{\mu}\geq 3/4$. By the fact that
$$E_{p'}\left[\sum_i (2T_{ij}-1)\right]=E_{p'}\left[\sum_i (2T_{ij}-1)\Big|A_j\right]P_{p'}(A_j)+E_{p'}\left[\sum_i (2T_{ij}-1)\Big|A_j^c\right]P_{p'}(A_j^c),$$
we have
\begin{eqnarray*}
&& E_{p'}\left[\sum_i (2T_{ij}-1)\Big|A_j\right] \\
&=& \frac{E_{p'}\left[\sum_i (2T_{ij}-1)\right]-E_{p'}\left[\sum_i (2T_{ij}-1)\Big|A_j^c\right]P_{p'}(A_j^c)}{P_{p'}(A_j)} \\
&\leq& \frac{n(2\bar{\mu}-1)+ne^{-n/8}}{1-e^{-n/8}} \leq \frac{n(2\bar{\mu}-1)+n(2\bar{\mu}-1)}{1/2}=4n(2\bar{\mu}-1),
\end{eqnarray*}
by $e^{-n/8}\leq 1/2\leq (2\bar{\mu}-1)$ under the assumption that $n\geq 6$ and $\bar{\mu}\geq 3/4$. Using the lower bound of $P_{p'}(A_j)$ and the upper bound of $E_{p'}\left[\sum_i (2T_{ij}-1)\Big|A_j\right]$, we can lower bound (\ref{eq:lowermuBayes}) by
\begin{eqnarray*}
&& \frac{1}{2}g\left(4n(2\bar{\mu}-1)\log\frac{\bar{\mu}}{1-\bar{\mu}}\right) = \frac{1}{2}g\Big(4nD(\bar{\mu}||1-\bar{\mu})\Big) \\
&=& \frac{1}{2}\left(1+\exp\Big(4nD(\bar{\mu}||1-\bar{\mu})\Big)\right)^{-2} \geq \frac{1}{8}\exp\Big(-8nD(\bar{\mu}||1-\bar{\mu})\Big),
\end{eqnarray*}
where the last inequality is
because $1\leq \exp\Big(4nD(\bar{\mu}||1-\bar{\mu})\Big)$ for all $\bar{\mu}\geq 3/4$ and $n\geq 6$. Averaging over $j\in[m]$, $\frac{1}{8}\exp\Big(-8nD(\bar{\mu}||1-\bar{\mu})\Big)$ is a lower bound for the minimax risk, and the proof is complete.
\end{proof}

\section{Proofs of Theorem \ref{thm:votebad}, Theorem \ref{thm:MLEbad} and Theorem \ref{thm:votinggood}}

In this section, we gather the proofs for the results in Section \ref{sec:compare}.  To prove Theorem \ref{thm:votebad}, we need Berry-Esseen bound for the normal approximation. The best constant obtained so far for the Berry-Esseen bound is given by \cite{shevtsova11}, and the result is presented in the following lemma.

\begin{lemma}[\cite{shevtsova11}] \label{lem:berry}
Let $U_1, U_2,...,U_n$ be i.i.d. random variables with mean $0$ and variance $\sigma^2$. Define $F_n(t)=\mathbb{P}\Bigg(\frac{1}{\sigma\sqrt{n}}\sum_i U_i\leq t\Bigg)$. Then, we have
$$\sup_{t\in\mathbb{R}}\left|F_n(t)-\Phi(t)\right|\leq\frac{c\mathbb{E}|U_1|^3}{\sigma\sqrt{n}},$$
where $c<0.4748$ and $\Phi(t)$ is the cumulative distribution function of $N(0,1)$.
\end{lemma}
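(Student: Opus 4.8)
The plan is to follow the classical Fourier-analytic (Esseen) route to the Berry--Esseen bound, treating the extremal constant $c < 0.4748$ as the delicate endpoint supplied by \cite{shevtsova11}. By rescaling $U_i \mapsto U_i/\sigma$ we may assume $\sigma = 1$ (the general case follows by homogeneity and produces the normalized third moment $\mathbb{E}|U_1|^3/\sigma^3$), so it suffices to show $\sup_t |F_n(t) - \Phi(t)| \leq c\rho/\sqrt{n}$ with $\rho = \mathbb{E}|U_1|^3$. Write $\phi(\xi) = \mathbb{E} e^{i\xi U_1}$ for the characteristic function of one summand, so that the characteristic function of the normalized sum is $f_n(\xi) = \phi(\xi/\sqrt{n})^n$, while $\Phi$ has characteristic function $e^{-\xi^2/2}$. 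The whole argument reduces to comparing $f_n$ with $e^{-\xi^2/2}$ and converting a pointwise Fourier estimate into a uniform bound on distribution functions.

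The first step is Esseen's smoothing inequality: for every $T > 0$,
$$\sup_{t} |F_n(t) - \Phi(t)| \leq \frac{1}{\pi}\int_{-T}^{T} \frac{|f_n(\xi) - e^{-\xi^2/2}|}{|\xi|}\, d\xi + \frac{24}{\pi T \sqrt{2\pi}},$$
the last term arising from $\sup_t |\Phi'(t)| = 1/\sqrt{2\pi}$. This is the main lever: it moves all the work onto the integrand $|f_n - e^{-\xi^2/2}|$, which I control only on the compact window $|\xi| \leq T$ with $T \asymp \sqrt{n}/\rho$.

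Next I would estimate that integrand. A third-order Taylor expansion using $\mathbb{E} U_1 = 0$ and $\mathbb{E} U_1^2 = 1$ gives $|\phi(\gamma) - (1 - \gamma^2/2)| \leq \rho|\gamma|^3/6$, whence on the window $|\phi(\gamma)|$ stays inside the unit disk with $|\phi(\gamma)| \leq \exp(-\gamma^2/2 + \rho|\gamma|^3/6)$ for $\gamma = \xi/\sqrt{n}$. Applying the telescoping inequality $|z^n - w^n| \leq n\,|z - w|\,\max(|z|,|w|)^{n-1}$ with $z = \phi(\xi/\sqrt{n})$ and $w = e^{-\xi^2/(2n)}$ then yields a pointwise bound of the form $|f_n(\xi) - e^{-\xi^2/2}| \lesssim \frac{\rho|\xi|^3}{\sqrt{n}}\, e^{-\xi^2/3}$, valid once $T$ is chosen so the cubic remainder is dominated by the quadratic term. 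Dividing by $|\xi|$, integrating $\xi^2 e^{-\xi^2/3}$ against a convergent Gaussian integral, and balancing the tail term by taking $T$ proportional to $\sqrt{n}/\rho$ gives $\sup_t |F_n(t) - \Phi(t)| \leq c\rho/\sqrt{n}$ for some absolute constant $c$.

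The main obstacle is not the existence of $c$ but its sharpness: a straightforward execution of the above produces a constant several times larger than $0.4748$. Reducing it to the quoted value requires the refinements of \cite{shevtsova11}, namely an optimized smoothing inequality that replaces the crude $24/\pi$ factor, a finer split of $[-T,T]$ into a central part where the Taylor remainder is tracked to higher order and an outer part where $|\phi|$ is quantitatively bounded away from $1$, and a numerical optimization over the truncation $T$ and the auxiliary smoothing kernel. Since the paper uses the lemma only to furnish an explicit absolute constant in the normal-approximation steps of the proof of Proposition \ref{prop:sharpDW} and of \eqref{eq:hdCLT}, I would present the Esseen argument to pin down the \emph{form} $c\rho/\sqrt{n}$ and cite \cite{shevtsova11} for the extremal constant rather than reproduce that delicate optimization.
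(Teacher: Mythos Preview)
The paper does not prove this lemma at all: it is stated purely as a quotation of the result in \cite{shevtsova11} and used as a black box (in the proofs of Proposition~\ref{prop:sharpDW}, Theorem~\ref{thm:votebad}, and Theorem~\ref{thm:MLEbad}). Your sketch of the classical Esseen smoothing/characteristic-function argument is a correct outline of how Berry--Esseen bounds are obtained, and your explicit acknowledgment that the sharp constant $c<0.4748$ requires the optimized smoothing kernel and numerical work of \cite{shevtsova11} is exactly the right posture; in that sense you already go further than the paper, which simply cites the result. One small remark: after your rescaling $U_i\mapsto U_i/\sigma$ the bound you recover is $c\,\mathbb{E}|U_1|^3/(\sigma^3\sqrt{n})$, which is the standard Berry--Esseen form; the $\sigma$ (rather than $\sigma^3$) in the displayed statement appears to be a typographical slip in the paper and is immaterial in every application there, since the lemma is only invoked with $\sigma$ bounded away from zero.
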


\begin{proof}[Proof of Theorem \ref{thm:votebad}]
Let $\hat{y}$ be the majority voting estimator.
By the definition of majority voting and the representation (\ref{eq:representation}), we have $|\hat{y}_j-y_j^*|=\mathbb{I}\left\{\frac{1}{n}\sum_i T_{ij}< \frac{1}{2}\right\}$. Define $\{T_i\}_{i\in[n]}$ to be independent Bernoulli random variable with parameter $p_i^*$, and we have
\begin{equation}
\frac{1}{m}\sum_j \mathbb{E}|\hat{y}_j-y_j^*|=\frac{1}{m}\sum_j\mathbb{P}\left\{\frac{1}{n}\sum_i T_{ij}< \frac{1}{2}\right\}=\mathbb{P}\left\{\frac{1}{n}\sum_i T_{i}< \frac{1}{2}\right\}. \label{eq:MViid}
\end{equation}
Without loss of generality, we let $p_i^*=1/2$ for $i\leq n-\ceil{n^{\delta}}$ and $p_i^*=1$ for $i>n-\ceil{n^{\delta}}$. Therefore,
\begin{equation}
\mathbb{P}\left\{\frac{1}{n}\sum_i T_{i}< \frac{1}{2}\right\}=\mathbb{P}\left\{\frac{2}{\sqrt{n-\ceil{n^{\delta}}}}\sum_{i\leq n-\ceil{n^{\delta}}}\Bigg(T_i-\frac{1}{2}\Bigg)\leq -\frac{\ceil{n^{\delta}}}{\sqrt{n-\ceil{n^{\delta}}}}\right\}. \label{eq:MV1}
\end{equation}
Using Lemma \ref{lem:berry}, we have
\begin{equation}
\sup_t\left|\mathbb{P}\left\{\frac{2}{\sqrt{n-\ceil{n^{\delta}}}}\sum_{i\leq n-\ceil{n^{\delta}}}\Bigg(T_i-\frac{1}{2}\Bigg)\leq t\right\}-\Phi(t)\right|\leq \frac{1}{16}\big(n-\ceil{n^{\delta}}\big)^{-1/2}. \label{eq:MV2}
\end{equation}
Combining (\ref{eq:MV1}) and (\ref{eq:MV2}), when $\delta\in (0,1/2)$, we have
$$\mathbb{P}\left\{\frac{1}{n}\sum_i T_{i}< \frac{1}{2}\right\}= \lim_{n\rightarrow \infty}\Phi\left(-\frac{\ceil{n^{\delta}}}{\sqrt{n-\ceil{n^{\delta}}}}\right)+o(1)=\frac{1}{2}-o(1).$$
When $\delta\in (1/2,1)$, we have
$$\mathbb{P}\left\{\frac{1}{n}\sum_i T_{i}< \frac{1}{2}\right\}= \lim_{n\rightarrow\infty}\Phi\left(-\frac{\ceil{n^{\delta}}}{\sqrt{n-\ceil{n^{\delta}}}}\right)+o(1)=o(1).$$
When $\delta=1/2$, we have
$$\mathbb{P}\left\{\frac{1}{n}\sum_i T_{i}< \frac{1}{2}\right\}=\lim_{n\rightarrow\infty}\Phi\left(-\frac{\ceil{n^{\delta}}}{\sqrt{n-\ceil{n^{\delta}}}}\right)+o(1)=\Phi(-1)+o(1).$$
Thus, the proof is complete.
\end{proof}

\begin{proof}[Proof of Theorem \ref{thm:MLEbad}]
For $T_{ij}$ in (\ref{eq:repres}), define $p_{ij}^*=\mathbb{E}T_{ij}$.
For simplicity of notation, we use $\hat{y},\hat{p},\hat{r}$ to denote $y^{(t)}, \check{p}, r^{(t)}$.  By (\ref{eq:handT}) and (\ref{eq:sumhandT}), we have
$\left|\hat{p}_i-\frac{1}{m}\sum_jp_{ij}^*\right|\leq \hat{r}+\left|\frac{1}{m}\sum_j(T_{ij}-p_{ij}^*)\right|$. Lemma \ref{lem:hoeffding} together with union bound gives $\max_{i\in[n]}\left|\frac{1}{m}\sum_j(T_{ij}-p_{ij}^*)\right|\leq\sqrt{\frac{\log m}{m}}$ with probability at least $1-C/m$. When $\hat{r}\geq m^{-1/2}$, we reach the conlusion. Thus, let us from now on consider the case $\hat{r}\leq m^{-1/2}$. Therefore, we have
\begin{equation}
\max_{i\in[n]}\left|\hat{p}_i-\frac{1}{m}\sum_jp_{ij}^*\right|\leq 2\sqrt{\frac{\log m}{m}}. \label{eq:MLE1}
\end{equation}
Under the current setting, direct calculation gives
\begin{equation}
\max_{i\in G_1}\left|\frac{1}{m}\sum_jp_{ij}^*-\frac{4}{5}\right|\leq 2m^{-1/2}\quad\text{and}\quad\max_{i\in G_2}\left|\frac{1}{m}\sum_jp_{ij}^*-\frac{1}{2}\right|\leq 2m^{-1/2}. \label{eq:MLE2}
\end{equation}
Combining (\ref{eq:MLE1}) and (\ref{eq:MLE2}), we have $\max_{i\in G_1}|\hat{p}_i-4/5|\leq 4\sqrt{\frac{\log m}{m}}$ and $\max_{i\in G_2}|\hat{p}_i-1/2|\leq 4\sqrt{\frac{\log m}{m}}$. Furthermore, by Proposition \ref{prop:log}, we have
\begin{equation}
\max_{i\in G_1}\left|\log\frac{\hat{p}_i}{1-\hat{p}_i}-\log 4\right|\leq 80\sqrt{\frac{\log m}{m}}\quad\text{and}\quad \max_{i\in G_2}\left|\log\frac{\hat{p}_i}{1-\hat{p}_i}\right|\leq 80\sqrt{\frac{\log m}{m}}, \label{eq:MLE3}
\end{equation}
with probability at least $1-C/m$
once $\hat{r}\leq m^{-1/2}$ holds. Define the event
$$E_j=\left\{\log 4\sum_{i\in G_1}(2T_{ij}-1)\leq -80n\sqrt{\frac{\log m}{m}}\right\},\quad\text{for }j\in[m].$$
As long as $E_j$ and (\ref{eq:MLE3}) holds, we have
\begin{eqnarray*}
&& \sum_i(2T_{ij}-1)\log\frac{\hat{p}_i}{1-\hat{p}_i} =\left( \sum_{i\in G_1}+\sum_{i\in G_2}\right)(2T_{ij}-1)\log\frac{\hat{p}_i}{1-\hat{p}_i}  \\
&\leq& \log 4\sum_{i\in G_1}(2T_{ij}-1)+ 80n\sqrt{\frac{\log m}{m}}\leq 0.
\end{eqnarray*}
This implies
$$|\hat{y}_j-y_j^*|=\frac{1}{1+\exp\Big( \sum_i(2T_{ij}-1)\log\frac{\hat{p}_i}{1-\hat{p}_i}\Big)}\geq\frac{1}{2},$$
so that the $j$-th item is mis-labeled. Now we are going to control the count $\sum_{j\in S_2}\mathbb{I}_{E_j}$. Using Lemma \ref{lem:berry}, we have $\mathbb{P}(E_j)\geq 0.49$ under the current setting for sufficiently large $n$ and $m$. Then, by Lemma \ref{lem:hoeffding}, we have $\mathbb{P}\Big(\frac{1}{m_2}\sum_{j\in S_2}\mathbb{I}_{E_j}<1/4\Big)\leq 0.68$. That is to say, $\sum_{j\in S_2}\mathbb{I}_{E_j}\geq \frac{1}{4}m_2$ with probability at least $0.32$. The fact that $m_2/4$ items are mislabeled implies $\frac{1}{m}\sum_j|\hat{y}_j-y_j^*|\geq \frac{1}{8}m^{-1/2}$. To summarize, once $\hat{r}\leq m^{-1/2}$ holds, we must have $\hat{r}\geq \frac{1}{8}m^{-1/2}$ with probability at least $0.32-C/m\geq 0.3$ for sufficiently large $m$. Thus, the proof is complete.
\end{proof}

\begin{proof}[Proof of Theorem \ref{thm:votinggood}]
Without loss of generality, let $n/2$ be an integer. By (\ref{eq:repres}), we have
$\frac{1}{m}\sum_j|\hat{y}_j-y_j^*|=\frac{1}{m}\sum_j\eta_j$,
where $\eta_j=\mathbb{I}\left\{\frac{1}{n}\sum_iT_{ij}<1/2\right\}$.
By Lemma \ref{lem:hoeffding}, $\mathbb{E}\eta_j\leq \exp\Big(-\frac{9}{200}n\Big)$ for all $j\in[m]$. By Markov's inequality, we have
$$\mathbb{P}\left(\frac{1}{m}\sum_j|\hat{y}_j-y_j^*|>\exp\Big(-\frac{1}{25}n\Big)\right)\leq \exp\Big(\frac{1}{25}n\Big)\frac{1}{m}\sum_j\mathbb{E}\eta_j\leq \exp\Big(-\frac{1}{200}n\Big).$$
The proof is complete.
\end{proof}

\section*{Acknowledgements}

The authors thank Nihar Shah for insightful discussion. The authors are grateful for the suggestions made by two anonymous referees, which lead to significant improvement of the paper. The authors thank Derek Feng for their efforts in helping with the English.

\bibliography{ref}

\begin{thebibliography}{31}
\expandafter\ifx\csname natexlab\endcsname\relax\def\natexlab#1{#1}\fi

\bibitem[{Berend \& Kontorovitch(2014)}]{berend2014consistency}
\textsc{Berend, D.} \& \textsc{Kontorovitch, A.} (2014).
\newblock Consistency of weighted majority votes.
\newblock In \textit{Advances in Neural Information Processing Systems}.

\bibitem[{Bernstein(1927)}]{bernstein27}
\textsc{Bernstein, S.} (1927).
\newblock Theory of probability.

\bibitem[{Bohannon(2014)}]{Boh14}
\textsc{Bohannon, J.} (2014).
\newblock Online video game plugs players into remote-controlled biochemistry
  lab.
\newblock \textit{Science} \textbf{343}, 475.

\bibitem[{Chen et~al.(2013)Chen, Lin \& Zhou}]{CheLinZho13}
\textsc{Chen, X.}, \textsc{Lin, Q.} \& \textsc{Zhou, D.} (2013).
\newblock Optimistic knowledge gradient policy for optimal budget allocation in
  crowdsourcing.
\newblock In \textit{Proceedings of the 30th International Conferences on
  Machine Learning}.

\bibitem[{Chernozhukov et~al.(2012)Chernozhukov, Chetverikov \&
  Kato}]{chernozhukov12}
\textsc{Chernozhukov, V.}, \textsc{Chetverikov, D.} \& \textsc{Kato, K.}
  (2012).
\newblock Central limit theorems and multiplier bootstrap when $p$ is much
  larger than $n$.
\newblock Tech. rep., cemmap working paper, Centre for Microdata Methods and
  Practice.

\bibitem[{Cover \& Thomas(2006)}]{cover12}
\textsc{Cover, T.~M.} \& \textsc{Thomas, J.~A.} (2006).
\newblock \textit{Elements of information theory}.
\newblock John Wiley \& Sons.

\bibitem[{Dawid \& Skene(1979)}]{DawSke79}
\textsc{Dawid, A.~P.} \& \textsc{Skene, A.~M.} (1979).
\newblock Maximum likeihood estimation of observer error-rates using the {EM}
  algorithm.
\newblock \textit{Journal of the Royal Statistical Society} \textbf{28},
  20--28.

\bibitem[{Dempster et~al.(1977)Dempster, Laird \& Rubin}]{DemLaiRub77}
\textsc{Dempster, A.~P.}, \textsc{Laird, N.~M.} \& \textsc{Rubin, D.~B.}
  (1977).
\newblock Maximum likelihood from incomplete data via the {EM} algorithm.
\newblock \textit{Journal of the Royal Statistical Society} \textbf{39}, 1--38.

\bibitem[{Doan et~al.(2011)Doan, Ramakrishnan \&
  Halevy}]{doan2011crowdsourcing}
\textsc{Doan, A.}, \textsc{Ramakrishnan, R.} \& \textsc{Halevy, A.~Y.} (2011).
\newblock Crowdsourcing systems on the world-wide web.
\newblock \textit{Communications of the {ACM}} \textbf{54}, 86--96.

\bibitem[{Hoeffding(1963)}]{hoeffding1963probability}
\textsc{Hoeffding, W.} (1963).
\newblock Probability inequalities for sums of bounded random variables.
\newblock \textit{Journal of the American statistical association} \textbf{58},
  13--30.

\bibitem[{Howe(2006)}]{howe2006rise}
\textsc{Howe, J.} (2006).
\newblock The rise of crowdsourcing.
\newblock \textit{Wired magazine} \textbf{14}, 1--4.

\bibitem[{Karger et~al.(2014)Karger, Oh \& Shah}]{karger14}
\textsc{Karger, D.~R.}, \textsc{Oh, S.} \& \textsc{Shah, D.} (2014).
\newblock Budget-optimal task allocation for reliable crowdsourcing systems.
\newblock \textit{Operations Research} \textbf{62}, 1--24.

\bibitem[{Lehmann \& Casella(1998)}]{lehmann98}
\textsc{Lehmann, E.~L.} \& \textsc{Casella, G.} (1998).
\newblock \textit{Theory of point estimation}, vol.~31.
\newblock Springer.

\bibitem[{Liu et~al.(2012)Liu, Peng \& Ihler}]{LiuPenIhl12}
\textsc{Liu, Q.}, \textsc{Peng, J.} \& \textsc{Ihler, A.} (2012).
\newblock Variational inference for crowdsourcing.
\newblock In \textit{Advances in Neural Information Processing Systems 25}.

\bibitem[{Lord \& Novick(1968)}]{LorNov68}
\textsc{Lord, F.~M.} \& \textsc{Novick, M.~R.} (1968).
\newblock \textit{Statistical theories of mental test scores}.
\newblock Reading, MA: Addison-Wesley.

\bibitem[{Neal \& Hinton(1998)}]{NeaHin98}
\textsc{Neal, R.~M.} \& \textsc{Hinton, G.~E.} (1998).
\newblock A view of the {EM} algorithm that justifies incremental, sparse, and
  other variants.
\newblock In \textit{Learning in Graphical Models}, M.~I. Jordan, ed.
  Dordrecht, MA: Kluwer Academic, pp. 355–--368.

\bibitem[{Rasch(1961)}]{Ras61}
\textsc{Rasch, G.} (1961).
\newblock On general laws and the meaning of measurement in psychology.
\newblock In \textit{Proceedings of the 4th Berkeley Symposium on Mathematical
  Statistics and Probability}, vol.~4. Berkeley, CA.

\bibitem[{Raykar et~al.(2010)Raykar, Yu, Zhao, Valadez, Florin, Bogoni \&
  Moy}]{RayYuZha10}
\textsc{Raykar, V.~C.}, \textsc{Yu, S.}, \textsc{Zhao, L.~H.}, \textsc{Valadez,
  G.~H.}, \textsc{Florin, C.}, \textsc{Bogoni, L.} \& \textsc{Moy, L.} (2010).
\newblock Learning from crowds.
\newblock \textit{Journal of Machine Learning Research} \textbf{11},
  1297--1322.

\bibitem[{Shevtsova(2011)}]{shevtsova11}
\textsc{Shevtsova, I.} (2011).
\newblock On the absolute constants in the berry-esseen type inequalities for
  identically distributed summands.
\newblock \textit{arXiv preprint arXiv:1111.6554} .

\bibitem[{Smyth et~al.(1995)Smyth, Fayyad, Burl, Perona \& Baldi}]{PaFaBu95}
\textsc{Smyth, P.}, \textsc{Fayyad, U.}, \textsc{Burl, M.}, \textsc{Perona, P.}
  \& \textsc{Baldi, P.} (1995).
\newblock Inferring ground truth from subjective labelling of venus images.
\newblock In \textit{Advances in neural information processing systems}.

\bibitem[{Snow et~al.(2008)Snow, O'Connor, Jurafsky \& Ng}]{SnoCon08}
\textsc{Snow, R.}, \textsc{O'Connor, B.}, \textsc{Jurafsky, D.} \& \textsc{Ng,
  A.~Y.} (2008).
\newblock Cheap and fast---but is it good? {E}valuating non-expert annotations
  for natural language tasks.
\newblock In \textit{Proceedings of the Conference on Empirical Methods in
  Natural Language Processing}.

\bibitem[{Tsybakov(2009)}]{tsybakov09}
\textsc{Tsybakov, A.~B.} (2009).
\newblock \textit{Introduction to Nonparametric Estimation}.
\newblock Springer.

\bibitem[{van~de Geer(2000)}]{geer00}
\textsc{van~de Geer, S.} (2000).
\newblock \textit{Empirical processes in M-estimation}.
\newblock Cambridge University Press.

\bibitem[{Vershynin(2010)}]{Vershynin10}
\textsc{Vershynin, R.} (2010).
\newblock Introduction to the non-asymptotic analysis of random matrices.
\newblock \textit{arXiv preprint arXiv:1011.3027} .

\bibitem[{Von~Ahn \& Dabbish(2004)}]{vondab04}
\textsc{Von~Ahn, L.} \& \textsc{Dabbish, L.} (2004).
\newblock Labeling images with a computer game.
\newblock In \textit{Proceedings of the {SIGCHI} conference on Human factors in
  computing systems}.

\bibitem[{Wasserman et~al.(2013)Wasserman, Kolar \& Rinaldo}]{wasserman13}
\textsc{Wasserman, L.}, \textsc{Kolar, M.} \& \textsc{Rinaldo, A.} (2013).
\newblock Estimating undirected graphs under weak assumptions.
\newblock \textit{arXiv preprint arXiv:1309.6933} .

\bibitem[{Wu(1983)}]{wu1983convergence}
\textsc{Wu, C.~J.} (1983).
\newblock On the convergence properties of the em algorithm.
\newblock \textit{The Annals of statistics} , 95--103.

\bibitem[{Xu \& Jordan(1996)}]{xu1996convergence}
\textsc{Xu, L.} \& \textsc{Jordan, M.~I.} (1996).
\newblock On convergence properties of the em algorithm for gaussian mixtures.
\newblock \textit{Neural computation} \textbf{8}, 129--151.

\bibitem[{Yang \& Barron(1999)}]{yang1999information}
\textsc{Yang, Y.} \& \textsc{Barron, A.} (1999).
\newblock Information-theoretic determination of minimax rates of convergence.
\newblock \textit{Annals of Statistics} , 1564--1599.

\bibitem[{Yu(1997)}]{yu1997assouad}
\textsc{Yu, B.} (1997).
\newblock Assouad, fano, and le cam.
\newblock In \textit{Festschrift for Lucien Le Cam}. Springer, pp. 423--435.

\bibitem[{Zhou et~al.(2012)Zhou, Platt, Basu \& Mao}]{zhoplaby12}
\textsc{Zhou, D.}, \textsc{Platt, J.~C.}, \textsc{Basu, S.} \& \textsc{Mao, Y.}
  (2012).
\newblock Learning from the wisdom of crowds by minimax entropy.
\newblock In \textit{Advances in Neural Information Processing Systems 25}.

\end{thebibliography}


\end{document}